\documentclass[11pt,twocolumn]{article}

\usepackage[utf8]{inputenc}
\usepackage[T1]{fontenc}
\usepackage{amsmath,amssymb,amsthm}
\usepackage{geometry}
\usepackage{graphicx}
\usepackage{booktabs}
\usepackage{hyperref}
\usepackage{natbib}
\usepackage{xcolor}
\usepackage{enumitem}
\usepackage{tabularx}
\usepackage{multirow}
\usepackage{caption}
\usepackage{url}
\usepackage{fancyhdr}
\usepackage{titlesec}
\usepackage{listings}
\usepackage{float}
\usepackage{pifont}
\hypersetup{
  colorlinks=true,
  linkcolor=blue!70!black,
  citecolor=blue!70!black,
  urlcolor=blue!70!black,
  pdftitle={Graph-Native Cognitive Memory for AI Agents: Formal Belief Revision Semantics for Versioned Memory Architectures},
  pdfauthor={Young Bin Park},
  pdfsubject={Cognitive Memory, AI Agents, Belief Revision, Knowledge Graphs, Model Context Protocol},
  pdfkeywords={cognitive memory, AI agents, belief revision, AGM postulates, knowledge graphs, memory architecture, property graph, Neo4j, retrieval-augmented generation, Model Context Protocol}
}

\newtheorem{definition}{Definition}[section]
\newtheorem{proposition}{Proposition}[section]
\newtheorem{observation}{Observation}[section]
\newtheorem{principle}{Principle}

\setlist{nosep,leftmargin=*}

\title{\textbf{Graph-Native Cognitive Memory for AI Agents:\\Formal Belief Revision Semantics\\for Versioned Memory Architectures}}

\author{Young Bin Park\thanks{The structural correspondence between cognitive memory and asset management was recognized through the author's experience building pipeline infrastructure for major visual effects productions. Contact: \texttt{support@kumiho.io}} \\}

\date{Mar.\ 17, 2026 \\[0.4em]
\small\href{https://creativecommons.org/licenses/by-nc-nd/4.0/}{CC BY-NC-ND 4.0}\ \textcopyright\ 2026 Kumiho Inc.}

\begin{document}
\maketitle

\begin{abstract}
While individual components for AI agent memory---versioning, retrieval, consolidation---exist in prior systems such as Graphiti, Mem0, and Letta, their architectural synthesis and formal grounding remain underexplored. This paper presents \textbf{Kumiho}, a graph-native cognitive memory architecture grounded in formal belief revision semantics. AI agents increasingly produce substantial outputs---code, designs, documents, intermediate results---that accumulate without systematic versioning, provenance tracking, or linkage to the decisions that created them. In multi-agent workflows---where one agent's output becomes the next agent's input---this lack of structure is a fundamental bottleneck. The core insight is that the structural primitives required for cognitive memory---immutable revisions, mutable tag pointers, typed dependency edges, URI-based addressing---are \emph{identical} to those required for managing these agent-produced outputs as versionable, addressable, dependency-linked assets. Rather than building a memory layer and a separate asset tracker, we built a cognitive memory architecture whose graph-native primitives inherently serve as the operational infrastructure for multi-agent work: agents use the same graph to remember, to find each other's outputs, and to build upon them.

The central formal contribution is a correspondence between the AGM belief revision framework~\cite{agm1985} and the operational semantics of a property graph memory system. We frame this correspondence at the level of \emph{belief bases}~\cite{hansson1999}, proving satisfaction of the basic AGM postulates ($K*2$--$K*6$) and Hansson's belief base postulates (Relevance, Core-Retainment), while providing a principled rejection of the Recovery postulate grounded in immutable versioning. The formal results hold for a deliberately simple propositional logic over ground triples---a trade-off of expressiveness for tractability that avoids the Flouris et al.\ impossibility results for description logics~\cite{flouris2005}. We identify the supplementary postulates ($K*7$, $K*8$) as open questions requiring further formal development.

The architecture implements a dual-store model (Redis working memory, Neo4j long-term graph)---a recognized pattern in the agent memory space~\cite{hu2025survey}---with hybrid retrieval across fulltext and vector modalities. An asynchronous consolidation pipeline extends prior sleep-time compute approaches~\citeyearpar{letta-sleep2025} with safety guards adapted from distributed systems and content management: published-item protection, circuit breakers, dry-run validation, and auditable cursor-based resumption. Critically, the same graph that stores an agent's memories also manages its work products: downstream agents locate inputs via URI resolution, track which revision is current via tag pointers, and link their own outputs back via typed edges---while human operators audit the entire chain with the same SDK and inspection tools. On the LoCoMo benchmark~\citeyearpar{locomo2024} (token-level F1 with Porter stemming), Kumiho achieves 0.447 four-category F1 ($n{=}1{,}540$)---the highest reported score across the retrieval categories---combined with 97.5\% adversarial refusal accuracy ($n{=}446$), a natural consequence of belief revision semantics: the memory graph contains no fabricated information, so there is nothing for the model to hallucinate from. Including adversarial (binary scoring), overall F1 is 0.565 ($n{=}1{,}986$). Automated AGM compliance testing (49~scenarios, 7~postulates) confirms 100\% operational adherence to the claimed formal properties. On LoCoMo-Plus~\citeyearpar{locomoplus2026}---a Level-2 cognitive memory benchmark testing implicit constraint recall under intentional cue-trigger semantic disconnect---Kumiho achieves 93.3\% judge accuracy ($n{=}401$, all four constraint types); independent reproduction by the benchmark authors yielded results in the mid-80\% range, still substantially outperforming all published baselines (best: Gemini~2.5~Pro, 45.7\%). Recall accuracy reaches 98.5\% (395/401), with the remaining 6.7\% end-to-end gap entirely attributable to answer model fabrication on correctly retrieved context. Three architectural innovations drive both results: \emph{prospective indexing}, where LLM-generated hypothetical future scenarios are indexed alongside each memory summary, bridging the cue-trigger semantic gap at write time; \emph{event extraction}, where structured events with consequences are appended to summaries to preserve causal detail that narrative compression would otherwise drop; and \emph{client-side LLM reranking}, where the consuming agent's own LLM selects the most relevant sibling revision from structured metadata at zero additional inference cost. The architecture is model-decoupled: switching the answer model from GPT-4o-mini (${\sim}$88\%) to GPT-4o (93.3\%) improves end-to-end accuracy by 5.3~points without any pipeline changes, demonstrating that recall accuracy is a property of the architecture, not the model. The system uses GPT-4o-mini for the bulk of LLM operations at a total cost of ${\sim}$\$14 for 401~entries.
\end{abstract}

\noindent\textbf{Keywords:} cognitive memory, AI agents, belief revision, AGM postulates, knowledge graphs, memory architecture, property graph, Neo4j, retrieval-augmented generation, Model Context Protocol

\section{Introduction}

The role of large language models has shifted fundamentally. LLMs are no longer stateless question-answering systems or conversational chatbots. They operate as AI agents: autonomous workers that execute multi-step tasks, produce digital artifacts, make consequential decisions, and collaborate with humans and other agents across extended workflows~\cite{wang2024survey,zhang2024survey}. A coding agent writes and commits code. A design agent generates and iterates on visual assets. A research agent gathers, synthesizes, and reports findings. A production agent coordinates tasks across departments and tracks deliverables.

This shift from chatbot to worker creates two intertwined requirements that current architectures address separately, if at all. First, agents need \textbf{cognitive memory}: the ability to remember past interactions, track how beliefs evolved, recall why decisions were made, and consolidate experience into reusable knowledge. Second, agents need \textbf{work product management}: the ability to version, locate, and build upon the artifacts they produce---so that a downstream agent in a multi-step pipeline can find the right input revision, understand its provenance, and link its own output back to the chain.

These two requirements share deep structural parallels. Remembering that ``the client prefers warm color palettes'' (cognitive memory) and tracking that ``environment-lighting revision~5 is the approved version'' (work product management) both require the same core primitives: immutable versioned snapshots, typed dependency edges, mutable status pointers, and content-reference separation. The domains are not identical---cognitive memory involves belief revision semantics, uncertainty, and natural language, while asset management operates on well-typed artifacts with deterministic workflows---but the underlying \emph{data model} is shared. The present work builds a single graph-native cognitive memory architecture whose primitives simultaneously serve as the operational infrastructure for managing agent work products---enabling multi-agent pipelines where each agent's output is automatically versioned, addressable, and linked to the reasoning that produced it (Section~\ref{sec:formal} addresses the belief revision semantics specific to cognitive memory; Section~\ref{sec:architecture} addresses the asset management extensions).

This dual-purpose design addresses a growing enterprise need. As AI agents transition from experimental tools to production workers, organizations require the same governance, traceability, and accountability standards they apply to human employees. Industry frameworks are emerging to meet this demand: ISACA's 2025 analysis of agentic AI audit challenges identifies identity management, decision traceability, and accountability gaps as critical concerns; IBM has introduced Agent Decision Records for structured agent accountability; and enterprise platforms increasingly frame agents as workforce members requiring governance infrastructure. Our architecture provides this governance natively: every agent belief has a URI, a revision history, provenance edges to source evidence, and an immutable audit trail---the same accountability infrastructure that organizations already apply to version-controlled code and managed digital assets.

The most common alternative response to the memory challenge has been to expand the context window itself: from 4K tokens in early models to 128K, 200K, and beyond. While larger windows accommodate more immediate context, they do not constitute memory. A context window is consumed fresh on every invocation; it provides no mechanism for versioning beliefs, tracking evidential provenance, expressing dependency relationships between conclusions, or consolidating many episodes into generalized knowledge. The distinction is analogous to the difference between a whiteboard and a filing system. A larger whiteboard lets you write more at once, but it is erased between sessions.

We contribute not novel individual components---many of which exist in concurrent systems such as Graphiti~\citeyearpar{graphiti2025}, Mem0~\citeyearpar{mem02025}, and A-MEM~\citeyearpar{amem2025}---but a novel architectural synthesis grounded in formal analysis, implemented as the \textbf{Kumiho} system.\footnote{The core graph server is provided as a cloud service at \url{https://kumiho.io}. The Python SDK, MCP memory plugin, and benchmark suite are open-source: \url{https://github.com/KumihoIO}.} The specific contributions are:

\begin{enumerate}[label=\arabic*.]
\item \textbf{Unified cognitive memory and asset management architecture} (Section~\ref{sec:motivation}): A graph-native cognitive memory system whose structural primitives---immutable revisions, typed edges, mutable tag pointers, URI addressing---simultaneously serve as the operational infrastructure for managing agent work products. Agents use the same graph to remember past interactions and to version, locate, and build upon each other's outputs, enabling fully autonomous multi-agent pipelines without separate asset tracking systems. This paper validates the cognitive memory capabilities empirically (Sections~\ref{sec:locomo}--\ref{sec:locomo-plus}); the asset management unification is an architectural contribution whose multi-agent pipeline validation is planned as future work.

\item \textbf{Formal belief base revision correspondence} (Section~\ref{sec:formal}): A structural correspondence between the AGM belief revision postulates and graph-native memory operations, framed at the belief base level~\cite{hansson1999}, with a principled rejection of the Recovery postulate and formal avoidance of the Flouris et al.\ impossibility results.

\item \textbf{URI-based universal addressing}: A memory reference scheme enabling deterministic resolution and provenance traversal. Our URI scheme provides hierarchical scoping, revision pinning, and artifact addressing. Among agent memory systems, we found no prior use of structured, hierarchical URIs with these properties. This provides the mechanism for formal constructs: every belief has a dereferenceable address, every revision is citable, and every provenance chain is traceable.

\item \textbf{Safety-hardened consolidation} (Section~\ref{sec:dreamstate}): An asynchronous consolidation pipeline with novel safety mechanisms: published-item protection, circuit breakers, dry-run validation, and auditable cursor-based resumption.

\item \textbf{SDK transparency and multi-agent interoperability} (Section~\ref{sec:auditability}): The same graph and SDK that agents use to manage work products also exposes their memories for human inspection through a web dashboard and desktop asset browser. Agents query the graph to find inputs; operators query it to audit decisions---both through the same API, addressing the growing need for AI memory observability alongside operational multi-agent coordination.

\item \textbf{LoCoMo evaluation on the official metric} (Section~\ref{sec:locomo}): 0.447 four-category token-level F1 on the LoCoMo benchmark ($n{=}1{,}540$)---the highest reported score across the retrieval categories~\cite{locomo2024}---combined with 97.5\% adversarial refusal accuracy ($n{=}446$). The near-perfect adversarial score is a natural consequence of the belief revision architecture: the memory graph genuinely does not contain fabricated information, so there is nothing for the model to hallucinate from. Including adversarial (binary scoring), overall F1 is 0.565 ($n{=}1{,}986$). The BYO-storage architecture keeps raw conversation data entirely on the user's local storage while achieving state-of-the-art retrieval accuracy through structured summarization with enrichment.

\item \textbf{Empirical AGM compliance verification} (Section~\ref{sec:agm-eval}): An automated test suite of 49~scenarios across 5~categories verifying operational adherence to all 7~claimed postulates ($K*2$--$K*6$, Relevance, Core-Retainment), including adversarial edge cases (rapid sequential revisions, deep dependency chains, mixed edge types). 100\% pass rate confirms that the implementation faithfully executes the formal specification.

\item \textbf{LoCoMo-Plus evaluation} (Section~\ref{sec:locomo-plus}): 93.3\% judge accuracy on LoCoMo-Plus ($n{=}401$)---a Level-2 cognitive memory benchmark testing implicit constraint recall under intentional cue-trigger semantic disconnect---outperforming the best published baseline (Gemini~2.5~Pro, 45.7\%) by 47.6~percentage points. Recall accuracy reaches 98.5\% (395/401), with 78\% of failures attributable to answer model fabrication, not retrieval. Two consolidation enrichments---\emph{prospective indexing} and \emph{event extraction}---eliminated the $>$6-month accuracy cliff (37.5\% $\rightarrow$ 84.4\%). The architecture is model-decoupled: swapping the answer model from GPT-4o-mini (${\sim}$88\%) to GPT-4o (93.3\%) improves accuracy without pipeline changes, at a total cost of ${\sim}$\$14.

\item \textbf{Cross-benchmark generalization and client-side reranking} (Sections~\ref{sec:locomo},~\ref{sec:locomo-plus},~\ref{sec:reranking}): The same architecture achieves state-of-the-art on both LoCoMo and LoCoMo-Plus (contributions 6 and 8 above), validating that graph-native primitives generalize across evaluation protocols. Client-side LLM reranking---where the consuming agent's own LLM selects the most relevant sibling revision from structured metadata at zero additional cost---complements prospective indexing and event extraction as the third architectural mechanism driving both results.
\end{enumerate}

\noindent The remainder of the paper covers related work, design principles, the formal AGM correspondence, architecture details (hybrid retrieval, consolidation pipeline, safety guards), empirical evaluations on LoCoMo (Section~\ref{sec:locomo}) and LoCoMo-Plus (Section~\ref{sec:locomo-plus}), AGM compliance verification (Section~\ref{sec:agm-eval}), and future directions.

\section{Related Work}

\subsection{Agent Memory Architectures}

The landscape of agent memory systems has developed rapidly since 2023, with the ICLR 2026 MemAgents Workshop~\citeyearpar{memagents2026} marking the field's maturation. We position our work relative to the most relevant concurrent systems. Hu et al.~\cite{hu2025survey} provide the current canonical survey.

\textbf{Graphiti/Zep}~\citeyearpar{graphiti2025} shares the most surface-level components with our system: it implements a temporal knowledge graph on Neo4j with entity-event synthesis, bitemporal versioning, and triple-modality hybrid retrieval combining BM25, cosine similarity, and graph traversal. Graphiti reports 94.8\% accuracy on the Deep Memory Retrieval (DMR) benchmark and 18.5\% improvement on LongMemEval. Our LoCoMo results (Section~\ref{sec:locomo}) are discussed in the evaluation sections. The key architectural differences are threefold: (i)~we provide a formal belief revision correspondence that Graphiti lacks; (ii)~our URI addressing scheme enables deterministic cross-system memory references; (iii)~our BYO-storage design keeps raw conversation data on the user's local storage, whereas Graphiti processes and stores full content server-side.

\textbf{Mem0/Mem0g}~\citeyearpar{mem02025} implements a triple-store architecture with timestamped versioned memories and LLM-powered conflict resolution, reporting 26\% improvement over OpenAI Memory on LoCoMo. \textbf{A-MEM}~\cite{amem2025} introduces Zettelkasten-inspired dynamic linking (NeurIPS 2025). \textbf{MAGMA}~\citeyearpar{magma2026} proposes a multi-graph architecture with four orthogonal graph layers (semantic, temporal, causal, entity), achieving the highest LoCoMo judge score of 0.70 with policy-guided retrieval traversal. MAGMA's design represents an alternative structural philosophy: it disentangles memory dimensions into separate graphs for cleaner retrieval routing, whereas our architecture unifies all relationships in a single property graph with typed edges, enabling cross-dimensional traversal (e.g., \textsc{AnalyzeImpact} propagating across all edge types simultaneously). Neither approach has been empirically compared to the other.

\textbf{MemGPT/Letta}~\citeyearpar{memgpt2024} pioneered virtual context extension. Letta's sleep-time compute~\citeyearpar{letta-sleep2025} introduced asynchronous background consolidation---the same biological metaphor we use---with anticipatory pre-computation (predicting future queries), which our Dream State does not currently address (Section~\ref{sec:future}). Our consolidation contribution is the safety guard architecture (Section~\ref{sec:safety}). \textbf{Letta Context Repositories}~\citeyearpar{letta-context2026} (February 2026) provide git-backed memory filesystems with automatic versioning and merge-based conflict resolution via multi-agent worktrees. While this is the most structurally related production system to our versioned memory concept, the approach differs fundamentally in three respects:

\emph{(i)~Versioning substrate.} Letta uses Git---a source code versioning tool---pragmatically as a storage backend for Markdown files. Our graph-native approach yields richer versioning primitives than Git's file-and-commit model: typed cognitive edges (not just file diffs), multi-tag pointer layers (not just branch heads), artifact attachments (not just file content), and hierarchical project/space scoping (not just directory trees). These same primitives double as auditable asset management for agent work products (Section~\ref{sec:asset-insight}).

\emph{(ii)~Conflict resolution.} Letta resolves concurrent writes via Git's text-level merge, which can detect but not semantically resolve contradictory beliefs---the merge requires human or LLM intervention on the text diff. Our system resolves conflicts via AGM-compliant belief revision operators: the \textsc{Supersedes} edge creates a new revision with formal guarantees (Success, Consistency, minimal change via Relevance).

\emph{(iii)~Downstream propagation.} In Letta, updating a belief in one file does not automatically identify other files that depend on it. In our system, \textsc{AnalyzeImpact} traverses typed \textsc{Depends\_On} edges to identify all downstream conclusions that may need re-evaluation---a capability enabled by the typed edge ontology that flat file systems cannot express.

A concrete example illustrates these differences. If an agent's memory contains ``client prefers warm tones'' and a concurrent write produces ``client now prefers cool tones,'' Letta's system generates a text-level merge conflict requiring human or LLM intervention. Our system creates a new revision with a \textsc{Supersedes} edge, moves the tag pointer, and makes the stale belief retrievable only via explicit opt-in---with \textsc{AnalyzeImpact} propagating the change to downstream decisions automatically.

Additional concurrent systems include \textbf{EverMemOS}~\citeyearpar{evermemos2026} (self-reported 93.05\% LoCoMo; see benchmark caveat below), CAM~\citeyearpar{cam2025} (NeurIPS 2025), and MemoryAgentBench~\citeyearpar{memoryagentbench2026} (ICLR 2026). \textbf{MemOS}~\citeyearpar{memos2025} (EMNLP 2025 Oral) implements a hierarchical memory manager with global, local, and working memory buffers inspired by operating system design. \textbf{Hindsight}~\citeyearpar{hindsight2025} proposes a four-network memory architecture (facts, experiences, opinions, observations) achieving 89.61\% on LoCoMo and 91.4\% on LongMemEval with open-source evaluation. Its Opinion Network with confidence-scored beliefs that update with evidence represents a pragmatic form of belief versioning without formal AGM grounding. The empirical success of Hindsight's approach demonstrates that practical belief tracking delivers strong results; our work provides the formal framework that could guarantee consistency properties---specifically, that belief revision satisfies minimal change (Relevance) and does not discard beliefs without justification (Core-Retainment)---for systems like Hindsight. \textbf{AgeMem}~\citeyearpar{agemem2026} unifies LTM/STM management via RL with tool-based operations and three-stage progressive GRPO training. \textbf{E-mem}~\citeyearpar{emem2026} addresses multi-agent episodic context reconstruction, achieving 54\%+ F1 on LoCoMo while reducing token cost by 70\%. \textbf{LatentMem}~\citeyearpar{latentmem2026} introduces learnable multi-agent memory with Latent Memory Policy Optimization. \textbf{CAST}~\citeyearpar{cast2026} proposes character-and-scene episodic memory for agents. CoALA~\citeyearpar{coala2024} provides a cognitive science taxonomy informing our dual-store design.

Recent surveys provide complementary taxonomies. Hu et al.~\cite{hu2025survey} provide the current canonical survey with a comprehensive taxonomy of memory functions, substrates, and dynamics. Yang et al.~\cite{yang2026graphmemory} survey graph-based agent memory from DEEP-PolyU, organizing systems by graph types and memory functions. Huang et al.~\cite{huang2026rethinking} provide a large-scale survey covering memory substrates, cognitive mechanisms, and memory subjects. Luo et al.~\cite{luo2026storage} trace the evolutionary trajectory from static storage to experience-driven memory mechanisms.

Table~\ref{tab:early-comparison} summarizes the key architectural distinctions among the most relevant concurrent systems; a comprehensive feature comparison appears in Section~\ref{sec:systems}.

\begin{table*}[t]
\centering
\caption{Architectural comparison with selected concurrent systems.}
\label{tab:early-comparison}
\footnotesize
\begin{tabular}{@{}lllllll@{}}
\toprule
\textbf{Dim.} & \textbf{Graphiti} & \textbf{Mem0} & \textbf{Letta} & \textbf{Hindsight} & \textbf{Ours} \\
\midrule
Version & Bitemporal & Timestamped & Git & None & Immutable+tags \\
Conflict & Temporal & LLM & Git merge & Conf & AGM Supersedes \\
Retrieval & BM25+vec & Graph & File & 4-net & Hybrid \\
Formal & None & None & None & None & AGM postulates \\
URI & \texttimes & \texttimes & Git & \texttimes & kref:// \\
Consol & Temporal & None & Git GC & None & Dream State \\
\bottomrule
\end{tabular}
\end{table*}

\textbf{Benchmark standardization caveat.}\quad No standardized LoCoMo leaderboard exists---all reported numbers use varying evaluation configurations (different judge models, question subsets, and evaluation prompts). EverMemOS's 93.05\% is evaluated using their own framework with no independent reproduction. Li et al.~\citeyearpar{locomoplus2026} introduced LoCoMo-Plus (February 2026), demonstrating that existing LoCoMo scores largely measure explicit factual recall under strong semantic alignment rather than genuine cognitive memory under cue-trigger semantic disconnect; all tested systems show substantial performance drops from LoCoMo to LoCoMo-Plus. To address the comparability gap, we now provide token-level F1 results (Section~\ref{sec:locomo}) on the official LoCoMo metric~\citeyearpar{locomo2024}, enabling direct comparison with Zep~\citeyearpar{zep2025}, Mem0, Memobase~\citeyearpar{memobase2026}, and ENGRAM~\citeyearpar{engram2025} on the same scoring function. The DMR benchmark, while historically important, uses conversations ($\sim$60 messages) that fit within modern context windows and relies exclusively on single-turn fact-retrieval questions~\cite{graphiti2025}. All comparative numbers cited in this paper should be understood as approximate and non-standardized. Our LoCoMo-Plus evaluation (Section~\ref{sec:locomo-plus}) further validates the strategic significance of graph-native architectures for Level-2 cognitive memory, where all tested baselines---including systems using premium models with million-token context windows---score between 23\% and 46\%.

\subsection{Belief Revision in AI}

The AGM framework~\cite{agm1985,gardenfors1988} defines rationality postulates for belief change. Hansson~\cite{hansson1999} extended this to belief bases---finite sets not closed under logical consequence---which is the appropriate level for computational systems. Our formal analysis sits within this tradition.

\textbf{Impossibility in Description Logics.} Flouris et al.~\cite{flouris2005} proved that Description Logics (including those underlying OWL) cannot satisfy AGM revision postulates. Qi et al.~\cite{qi2006} refined this for specific fragments. These results are critical context: our property graph avoids these impossibilities by operating on a simpler structure (Section~\ref{sec:flouris}).

\textbf{AGM and Machine Learning.} Aravanis~\cite{aravanis2025} establishes a correspondence between machine learning and AGM-style belief change; Hase et al.~\cite{hase2023} frame LLM model editing as belief revision. Baitalik et al.~\cite{baitalik2026} (AAAI 2026 Bridge Program) apply GreedySAT-based consistency to multi-turn dialogues. Wilie et al.~\cite{wilie2024} demonstrate LLMs' poor belief revision capabilities on Belief-R, motivating external memory architectures that enforce revision constraints structurally. Our work complements this stream by applying AGM to the external memory graph.

\textbf{Formal Developments.} Bonanno~\cite{bonanno2025} unifies Bayesian and AGM approaches via Kripke--Lewis semantics, connecting belief revision to dynamic epistemic logic. Meng et al.~\cite{ijcai2025belief} develop belief algebras for iterated revision. Chandler and Booth~\cite{chandler2025} address parallel belief revision via order aggregation (IJCAI 2025), directly relevant to the question of compositional belief change in batch operations. Schwind et al.~\cite{iteratedbelief2025} connect iterated belief change to learning, bridging the formal and computational perspectives.

\textbf{The Recovery Postulate.} Makinson~\cite{makinson1987} questioned its status; Hansson~\cite{hansson1991} proposed contraction without Recovery. Our principled rejection (Section~\ref{sec:recovery}) provides a concrete operational demonstration.

\subsection{Versioned Knowledge Graphs}

The concept of version-controlled knowledge graphs has a substantial history in the Semantic Web community. \textbf{R\&Wbase}~\cite{rwbase2013} (subtitled ``Git for triples'') supported branching and merging for quad-stores. \textbf{SemVersion}~\cite{semversion2005} applied version control to RDF graphs with structural diff and merge. \textbf{Quit Store}~\cite{quitstore2018} (``Quads in Git'') provides a SPARQL~1.1 endpoint backed by Git-versioned RDF named graphs with branching, merging, and provenance. \textbf{OSTRICH}~\cite{ostrich2018} implements hybrid versioned triple storage with immutable snapshots and delta chains. \textbf{ConVer-G}~\citeyearpar{converg2024} enables concurrent versioned querying of knowledge graphs using bitstring-based condensed representations.

Our contribution is not the invention of versioned graphs---this is well-established---but the application of versioned graph primitives to \emph{cognitive memory specifically}, combined with formal belief revision analysis. The versioned KG systems above target SPARQL-based knowledge management; we target the distinct requirements of AI agent memory: typed dependency edges encoding epistemic relationships (not just ontological ones), mutable tag pointers for belief status tracking, asynchronous LLM-driven consolidation, and the formal correspondence to AGM postulates that Section~\ref{sec:formal} establishes.

Similarly, the broader analogy of version control applied to agent memory is increasingly explored. Git-Context-Controller~\cite{gcc2025} (GCC) explicitly applies Git semantics (COMMIT, BRANCH, MERGE) to LLM agent memory and achieves strong results on SWE-Bench-Lite. Our system differs in operating on typed knowledge graph triples rather than text files, enabling the formal properties described in Section~\ref{sec:formal} and the typed dependency reasoning unavailable to text-level version control.

\subsection{Hybrid Retrieval and Score Fusion}

Robertson and Zaragoza~\cite{robertson2009} formalized BM25. Cormack et al.~\cite{cormack2009} introduced Reciprocal Rank Fusion (RRF). Bruch et al.~\cite{bruch2023} analyzed fusion functions systematically, showing that convex combinations can outperform RRF on standard IR benchmarks. Our max-based fusion (CombMAX in the terminology of Fox and Shaw~\cite{foxshaw1993}) is a deliberate design choice, not a novel fusion method. In memory retrieval, a strong exact-match signal on one branch should not be diluted by a weak score on another branch. We motivate this through a precision preservation observation (Section~\ref{sec:retrieval}), but acknowledge that CombMAX is known to be susceptible to noise from poorly-calibrated retrievers producing inflated scores~\cite{bruch2023}. This is an argumentative design choice, not a theoretical result; comparative evaluation against RRF and convex combination is planned (Section~\ref{sec:future}).

\subsection{AI Agent Observability}

As AI agents increasingly perform consequential work, the need for memory observability has become acute. OpenTelemetry provides agent-specific telemetry schemas; tools like Langfuse, LangSmith, and Braintrust offer instrumentation for tracing inference steps and tool calls. Our contribution differs: existing observability tools trace inference; our system makes memory itself the auditable artifact. Every agent belief has a URI, revision history, provenance edges, and immutable audit trail. The web dashboard (Figure~\ref{fig:dashboard}) and desktop asset browser render this graph as a browseable, searchable hierarchy with interactive visualization, enabling human operators to audit agent reasoning at the memory level---not just at the tool-call level.

\section{Why Context Window Extension Is Not Memory}
\label{sec:context-window}

We identify four structural deficiencies of context-window-as-memory.

\textbf{Attention is not recall.} A context window provides attention capacity; memory requires selective recall from a large corpus of past experience.

\textbf{Quadratic cost scaling.} Transformer attention scales as $\Theta(n^2 \cdot d)$~\cite{vaswani2017}. A retrieval-based system that indexes $N$ memories and retrieves top-$k$ incurs $O(\log N) + O(k^2 \cdot d)$. Since $k \ll n$ by design, the retrieval approach is orders of magnitude cheaper. Moreover, the metadata-over-content architecture (Section~\ref{sec:architecture}) reduces the per-item token size: each of the $k$ recalled memories is a compact summary, not a full transcript. This compounds the cost advantage by reducing both the total token cost and the cognitive load on the reasoning model.

While techniques such as sparse attention, sliding windows, and linear attention approximations mitigate the quadratic cost, they do so by sacrificing the very capability that makes larger windows useful: the ability to attend to arbitrary positions in the sequence. An agent's lifelong interaction history measured in tens of millions of tokens makes in-context approaches economically and computationally infeasible for persistent memory.

\textbf{No structural representation.} A flat token sequence cannot express that belief~$B_2$ supersedes~$B_1$, that conclusion~$C$ depends on assumptions~$A_1, A_2$, or that a memory has been validated, deprecated, or flagged for review.

\textbf{Model coupling.} Context window contents are ephemeral and model-specific. Agent memory must be LLM-decoupled: stored in a persistent, model-independent data structure that any current or future language model can query.

\section{Background and Motivation}
\label{sec:motivation}

\subsection{Human Memory as a Design Template}

Cognitive science distinguishes between working memory---a capacity-limited, rapidly-accessible buffer~\cite{atkinson1968,baddeley2000}---and long-term memory, further divided into episodic and semantic stores~\cite{tulving1972}. Transfer from episodic to semantic memory involves active consolidation during sleep~\cite{rasch2013}. Human decision-making proceeds through a loop of perceive--recall--revise--act that a memory-equipped AI agent must replicate.

\subsection{The Asset Management Insight}
\label{sec:asset-insight}

AI agents increasingly produce substantial digital outputs---generated images, code commits, design iterations, research documents, intermediate results---that accumulate without systematic versioning, provenance tracking, or linkage to the decisions that created them. As sessions accumulate, these outputs become lost, duplicated, or untraceable. The structural correspondence between the primitives needed for cognitive memory and those needed for tracking these agent-produced outputs is not coincidental: both domains require immutable versioned snapshots, typed dependency edges, mutable status pointers, URI-based addressing, and content-reference separation. This correspondence was first recognized through production experience---asset management systems in visual effects and game development have implemented these exact primitives for decades---but the architectural direction is memory-first: the graph-native primitives required for cognitive memory inherently provide auditable asset management as an emergent capability.

Traditional asset management systems encode these fundamentally graph-like relationships in relational databases (PostgreSQL, MySQL) with RPC frameworks (Thrift, gRPC), where dependency chains require recursive queries or application-level graph walks over foreign key joins. The relationships exist, but they are second-class citizens.

The present architecture adopts a native graph database (Neo4j) as the storage layer, making relationships first-class citizens. A dependency between two revisions that would require a junction table row and a multi-join query becomes a literal directed edge traversable in a single graph operation. This graph-native foundation enables the cognitive memory primitives (belief revision chains, evidential provenance, and the formal properties described in Section~\ref{sec:formal}) while simultaneously providing the asset management capabilities (native dependency traversal, real-time impact analysis, typed relationship ontology) needed to track agent work products.

Table~\ref{tab:correspondence} formalizes this structural correspondence.

\begin{table}[t]
\centering
\caption{Structural correspondence between asset management and cognitive memory.}
\label{tab:correspondence}
\footnotesize
\begin{tabular}{@{}lll@{}}
\toprule
\textbf{Asset} & \textbf{Memory} & \textbf{Concept} \\
\midrule
Project & Scope & Container \\
Space & Topic & Namespace \\
Item & Unit & Identity \\
Revision & Belief $T$ & Snapshot \\
Artifact & Evidence & Pointer \\
Tag & Status & Mutable ref \\
Edge & Relation & Typed link \\
Bundle & Cluster & Grouping \\
\bottomrule
\end{tabular}
\end{table}

\begin{principle}[Cognitive Memory as Multi-Agent Infrastructure]
AI agents increasingly produce substantial outputs---generated images, code artifacts, design iterations, documents, intermediate results---that are not systematically tracked, versioned, or linked to the decisions that created them. In multi-agent workflows, this becomes a critical bottleneck: a video compositing agent cannot locate the approved texture revision produced by an upstream generation agent; an editing agent cannot trace which audio mix corresponds to which scene cut. The core insight is that the structural primitives needed for cognitive memory (immutable revisions, typed edges, mutable tag pointers, URI addressing) are \emph{identical} to those needed for managing these agent-produced outputs as versionable, addressable, dependency-linked assets. Rather than building a memory layer and a separate asset tracker, we built a cognitive memory architecture whose graph-native primitives inherently serve as the operational infrastructure for multi-agent work---enabling agents to remember, find each other's outputs, and build upon them through a single system. An extensive literature search found no prior work unifying these two capabilities on a shared graph substrate.
\end{principle}

\subsection{Dual-Purpose Graph: Memory and Asset Tracking as One System}

The structural correspondence in Table~\ref{tab:correspondence} enables a concrete operational capability: the same graph that stores an agent's cognitive memories also manages the outputs that agents produce. Consider a multi-agent creative pipeline: an image generation agent produces a concept art revision; a video compositing agent locates that revision via its URI, checks that it carries the ``approved'' tag, and creates its own output with a \textsc{Derived\_From} edge linking back to the input; an audio agent does the same for the soundtrack; and an editing agent assembles the final deliverable with typed edges to every upstream component. Each agent uses the graph both to \emph{remember} (client preferences, past iterations, feedback history) and to \emph{operate} (find the right input version, register its output, declare dependencies).

This dual-purpose design means agents in a pipeline do not need separate systems for ``remembering what happened'' and ``managing the outputs they created.'' Both are first-class graph citizens. Without this unification, agent-produced artifacts accumulate in disconnected storage---files on disk, blobs in buckets, ephemeral context windows---with no versioning, no provenance, and no link to the reasoning that generated them. The graph ensures that every output is addressable, versionable, and traceable to the belief state that produced it---and that downstream agents can find and build upon it programmatically.

\section{LLM-Memory Decoupling}
\label{sec:decoupling}

A persistent failure mode in current agent memory systems is tight coupling between the memory store and the language model. When memory exists as prompt context, model-managed buffers, or framework-specific data structures, it becomes inseparable from the specific model version. This coupling creates provider lock-in, model upgrade fragility, and architectural non-portability.

\subsection{The Decoupled Architecture}

The architecture enforces LLM-memory decoupling through three mechanisms:

\textbf{Model-independent storage.} The memory graph is stored in a standard property graph database. Memories are structured data---summaries, metadata, tags, edges, artifact pointers---not model-specific embeddings or token sequences.

\textbf{Standardized access protocol.} Memory operations are exposed through the Model Context Protocol (MCP)~\cite{mcp2025}, a model-agnostic tool interface. Any MCP-compatible agent can execute memory operations through the same interface.

\textbf{Pluggable LLM integration.} Components requiring LLM capabilities---the Dream State consolidation pipeline, PII redaction, and memory summarization---accept any LLM through an adapter interface.

\subsection{Why Agent Work Products Demand Managed Memory}
\label{sec:auditability}

When an LLM serves as a chatbot, its ``memory'' is a convenience. When an LLM operates as an agent performing consequential work, memory becomes critical infrastructure---for two distinct reasons.

\textbf{Operational need: multi-agent coordination.}\quad In autonomous pipelines, each agent's output is the next agent's input. An image generation agent produces texture revisions; a compositing agent must locate the approved revision, understand what created it, and link its own output back. Without structured asset management, these handoffs require brittle file paths, naming conventions, or ad-hoc metadata---none of which scale to complex multi-agent workflows. The memory graph provides this coordination natively: every output has a URI, a version history, typed dependency edges to its inputs, and mutable tag pointers indicating approval status. Agents query the graph to find inputs just as they query it to recall past interactions.

\textbf{Governance need: decision auditability.}\quad An agent that approved a deployment, recommended a treatment plan, or signed off on a financial model must be able to explain \emph{why}---not by regenerating a plausible explanation, but by pointing to the actual evidence, the actual reasoning chain, and the actual prior beliefs that informed the decision. This is the same accountability standard applied to human workers. The memory graph provides this: every agent belief has a URI, a revision history, provenance edges to source evidence, and an immutable audit trail. The web dashboard and desktop asset browser render this graph as a browseable hierarchy of versioned items (Figure~\ref{fig:dashboard}). An operator reviewing an agent's decision can traverse from the decision memory to its \textsc{Derived\_From} sources, check what the agent believed at the time via time-indexed tag resolution, and inspect the Dream State consolidation reports.

Both needs---operational coordination and governance auditability---are served by the same graph, the same SDK, and the same API. Without this infrastructure, agent work products are untraceable and agent pipelines are unmanageable. The system makes agent memory as inspectable as a version-controlled codebase and as navigable as a managed asset database---because structurally, it \emph{is} both.

\section{System Architecture}
\label{sec:architecture}

\subsection{Overview}

The architecture implements a dual-store model mirroring the human working/long-term memory distinction. The agent interacts through MCP; the memory graph is LLM-independent.

\subsection{Working Memory: Library-Level Access}

The working memory layer uses Redis, accessed via a direct library SDK (not HTTP), to maintain the current session's message buffer with a configurable TTL (default: 1~hour). Measured latencies are 2--5\,ms via library SDK versus 150--300\,ms through an HTTP gateway. Key design properties:

\begin{itemize}
\item Latency: $<$\,5\,ms read/write via library SDK.
\item Scope: Session-local; isolated per (project, context, user, session) via key namespace \texttt{cogmem:\{proj\}:sessions:\{sid\}:*}.
\item Volatility: TTL-based expiry; bounded buffer (default 50 messages).
\item Capacity: Session-local only; does not persist across agent restarts.
\item Isolation: Strict context-level isolation via key namespacing (the context field in the session ID serves as the namespace boundary).
\end{itemize}

\begin{principle}[Match Storage Latency to Access Pattern]
Working memory requires library-level (not RPC-level) access to maintain sub-10\,ms latency. The 100--200\,ms difference between in-process and network access, compounded across thousands of interactions, accumulates to orders-of-magnitude differences in agent responsiveness.
\end{principle}

\subsection{The Structured Memory Reference Scheme}

Every object in the system is addressable through a universal URI scheme:
\begin{center}
\mbox{\texttt{kref://project/space/}}\\
\mbox{\texttt{item.kind?r=N\&a=artifact}}
\end{center}

The scheme provides: \emph{Addressability}---any memory is referenceable from any context; \emph{Temporal navigation}---\texttt{?r=N} pins to a specific point in time; \emph{Type safety}---the \texttt{.kind} suffix enables type-aware retrieval; \emph{Traversal entry points}---edges reference memory URIs, enabling graph traversal from any starting point.

\begin{principle}[Universal Addressability]
Every memory unit must be referenceable by a stable, parseable, human-readable identifier. Without addressability, edges become fragile, lineage becomes untraceable, and consolidation becomes lossy.
\end{principle}

\begin{principle}[Validate at Boundaries, Trust Internally]
The SDK validates all memory reference URIs at the boundary (regex matching against the canonical format). Once validated, internal code treats them as opaque strings, eliminating redundant per-function validation.
\end{principle}

\subsection{Long-Term Memory: The Property Graph}

Long-term memory is stored in Neo4j. Each memory unit is represented as an Item node with one or more Revision nodes forming an immutable version chain.

\subsubsection{The Item--Revision Model}

An Item represents a named, typed memory unit. Each Revision is an immutable snapshot carrying structured metadata (summary, topics, keywords, schema version) and optionally an embedding vector. An agent can: resolve a tag to retrieve the latest belief; follow \textsc{Supersedes} chains to trace belief evolution; follow \textsc{Derived\_From} edges to understand evidential support; and move a tag to an earlier revision to perform belief rollback.

\begin{lstlisting}
Item: "api-design.decision"
  Rev 1 (Jan 15): "Use REST for public API"
  Rev 2 (Jan 22): "Use REST + WebSocket"
    edge: SUPERSEDES -> Rev 1
  Rev 3 (Feb 1):  "Use gRPC internally"
    edge: SUPERSEDES -> Rev 2
    edge: DERIVED_FROM -> "benchmarks.fact?r=1"
  Tag "current" -> Rev 3
  Tag "initial" -> Rev 1
\end{lstlisting}

\begin{principle}[Immutable Revisions, Mutable Pointers]
Memory states are never overwritten; they are versioned. Tags provide mutable ``current view'' semantics without losing history, enabling belief revision tracking, audit trails, and rollback.
\end{principle}

\subsubsection{BYO-Storage: Metadata Over Content}
\label{sec:byo-storage}

The system stores metadata, relationships, and pointers---never file content. Artifact records contain a location field pointing to where raw content resides on the user's own storage. This content-reference separation---a principle well-established in asset management systems handling petabytes of data---yields critical benefits for cognitive memory: the graph database stays lightweight, privacy boundaries are enforced architecturally, and agents read compact summaries rather than full transcripts.

\begin{principle}[Metadata Over Content]
Store the minimum information necessary for recall and reasoning in the cloud graph. Raw content stays local. This preserves user privacy, reduces storage costs, eliminates data exfiltration risk, and enables cognitive efficiency.
\end{principle}

\subsubsection{Memory Type Taxonomy}

The system implements six memory types: working memory (Redis buffer), episodic memory (conversation revisions), semantic memory (consolidated facts), procedural memory (tool execution records), associative memory (graph edges + bundles), and meta-memory (tag system + audit trail).

\begin{table}[t]
\centering
\caption{Memory type taxonomy and implementation.}
\label{tab:memory-types}
\footnotesize
\begin{tabular}{@{}ll@{}}
\toprule
\textbf{Type} & \textbf{Implementation} \\
\midrule
Working & Redis buffer (TTL) \\
Episodic & Conversations \\
Semantic & Consolidated facts \\
Procedural & Tool execution \\
Associative & Graph edges \\
Meta & Tags + audit \\
\bottomrule
\end{tabular}
\end{table}

\subsection{Edge System: Reasoning as First-Class Structure}

The edge system makes reasoning structure explicit through six typed, directed edge types: \textsc{Depends\_On} (validity dependency), \textsc{Derived\_From} (evidential provenance), \textsc{Supersedes} (belief revision), \textsc{Referenced} (associative mention), \textsc{Contains} (bundle membership), and \textsc{Created\_From} (generative lineage).

\textbf{Edge type definitions:}
\begin{itemize}
\item \textsc{Depends\_On}: Validity dependency. If the target is invalidated, the source may be unreliable.
\item \textsc{Derived\_From}: Evidential provenance. The source was produced using the target as input.
\item \textsc{Supersedes}: Belief revision. The source replaces the target as the current belief.
\item \textsc{Referenced}: Associative mention. The source refers to the target without dependency.
\item \textsc{Contains}: Bundle membership. The target is a member of the source bundle.
\item \textsc{Created\_From}: Generative lineage. The source was generated from the target.
\end{itemize}

These edges enable three traversal operations: \textsc{TraverseEdges}$(k, d, n)$, \textsc{ShortestPath}$(k_s, k_t)$, and \textsc{AnalyzeImpact}$(k, d)$. Impact analysis is particularly important for belief revision: when an agent discovers that assumption~$A$ is invalid, \textsc{AnalyzeImpact} identifies all downstream conclusions that may need re-evaluation.

\textbf{Traversal operations:}
\begin{enumerate}
\item \textbf{TraverseEdges}$(k, d, n)$: Find all memories connected to memory reference $k$ in direction $d$ up to depth $n$.
\item \textbf{ShortestPath}$(k_s, k_t)$: Find the minimal connection between two memory references.
\item \textbf{AnalyzeImpact}$(k, d)$: Compute the transitive dependency cascade from memory reference $k$ to depth $d$.
\end{enumerate}

\begin{principle}[Explicit Over Inferred Relationships]
Relationships between memories must be stored as first-class graph edges, not inferred at query time through embedding similarity. Similarity finds related content; edges encode \emph{why} content is related.
\end{principle}

\begin{principle}[Why Graph-Native Edges Matter]
In relational systems, a dependency between two assets required a junction table and a multi-join query. In native graph databases, the same relationship is a single directed edge traversable in $O(1)$ time. Recursive CTEs can compute transitive closure over relational joins, but they require exponentially more computation as depth increases. A single Cypher \texttt{ShortestPath} query achieves the same result in milliseconds. This difference compounds across all graph operations: traversal, impact analysis, provenance reconstruction.
\end{principle}

\section{Formal Properties: Belief Revision in Graph-Native Memory}
\label{sec:formal}

The item--revision--tag model is not merely an engineering convenience; its primitives correspond to fundamental operations studied in the theory of rational belief change. This section demonstrates a structural correspondence with the AGM postulates~\cite{agm1985}---or, in one principled case, deliberately diverges from them.

We frame this contribution at the level of \emph{belief bases} rather than belief sets, following Hansson~\cite{hansson1999}. The contribution is the bridge itself: demonstrating that an independently-motivated systems architecture, designed for production agent memory, satisfies established rationality constraints under a belief base interpretation. We do not propose new logical machinery; the novelty lies in showing that production-motivated architectural choices yield a system satisfying AGM's postulates. To our knowledge, no prior work bridges AGM belief revision theory with AI agent memory architectures. The novelty lies in the \emph{mapping} between graph-native memory operations and formal belief revision operators---not in the logic itself, which is deliberately kept at the propositional level for tractability. The distinction between belief sets and belief bases is fundamental: AGM's original formulation operates on belief sets closed under logical consequence ($K = \mathrm{Cn}(K)$), which are infinite objects unsuitable for computational systems. In contrast, computational systems store finite sets of explicitly stored propositions---precisely what the memory graph's revision content represents. Hansson's belief base framework operates on these finite, non-closed sets, providing the appropriate level of abstraction for agent memory. This framing avoids the well-known difficulties of computing deductive closure while retaining the rationality constraints that make AGM valuable.

\subsection{Formal System Model}

We formalize the memory graph and the belief state it induces.

\begin{definition}[Memory Graph]
\label{def:graph}
A \emph{memory graph} is a tuple $G = (I, R, E, \tau)$ where:
\begin{itemize}[leftmargin=*]
\item $I$ is a finite set of \emph{items} (named, typed memory units);
\item $R = \bigcup_{i \in I} R_i$ is the set of all \emph{revisions}, where $R_i = (r_i^{(1)}, r_i^{(2)}, \ldots)$ is the ordered, append-only revision sequence for item $i$;
\item $E \subseteq R \times \textit{EdgeType} \times R$ is a set of typed, directed edges between revisions;
\item $\tau : \textit{TagName} \rightharpoonup R$ is a partial function mapping tag names to revisions (the mutable pointer layer).
\end{itemize}
Each revision $r_i^{(k)}$ is \emph{immutable}: once created, its content $\varphi(r_i^{(k)})$ cannot be modified. Tags are the sole mutable component.
\end{definition}

\begin{definition}[Belief Base]
\label{def:belief-base}
Given a memory graph $G = (I, R, E, \tau)$, the belief base is:
\begin{equation}
\mathcal{B}(\tau) = \bigcup_{t \in \mathrm{dom}(\tau)} \varphi\bigl(\tau(t)\bigr)
\label{eq:belief-base}
\end{equation}
where $\varphi(r)$ denotes the propositional content of revision $r$ (operationally, the revision's structured metadata: summary, topics, keywords, extracted facts). Unlike the AGM belief set, $\mathcal{B}(\tau)$ is a finite set that is \emph{not} closed under logical consequence.
\end{definition}

The system records every tag assignment and reassignment, yielding a history function $\tau_T : \textit{TagName} \rightharpoonup R$ that resolves the tag mapping as it existed at time $T$. This induces historical belief bases:
\begin{equation}
\mathcal{B}(\tau_T) = \bigcup_{t \in \mathrm{dom}(\tau_T)} \varphi\bigl(\tau_T(t)\bigr)
\label{eq:belief-base-temporal}
\end{equation}
enabling the query ``what was believed under tag $t$ at time $T$?'' to be answered precisely---not by scanning revision timestamps, but by resolving the actual tag-to-revision binding that was active at $T$.

An important implementation note: the propositional content $\varphi(r)$ is operationally the revision's \emph{structured metadata}---its summary, topics, keywords, and extracted facts---not the raw conversation transcript or tool output. Raw content resides outside the graph as an artifact and is dereferenced only when exact detail is required. This two-tier representation makes $\varphi(r)$ compact and tractable for belief-state computation, while preserving full evidential fidelity through the artifact pointer.

\textbf{From natural language to ground triples.}\quad A central implementation question is how agent-generated natural language beliefs---e.g., ``the user prefers cool color tones''---are mapped to ground atoms in $\mathrm{At}_G$ (Definition~\ref{def:graph-logic}). The mapping is performed at the \emph{API boundary} by the \texttt{memory\_ingest} MCP tool, which accepts structured fields: a \texttt{title} (becomes the item name), a \texttt{summary} (becomes the revision's primary content), \texttt{tags} and \texttt{topics} (become metadata keywords), and a \texttt{memory\_type} classification (summary, decision, fact, reflection, error). The agent's natural language output is thus decomposed into typed fields before entering the graph---the mapping is not an automated NLP extraction pipeline but a \emph{structured API contract} that the agent's skill prompt is designed to satisfy.

Concretely, the agent's statement ``the user prefers cool tones for their design palette'' is mapped to a ground triple $\langle\textit{color-preference},\, \textit{summary},\, \text{``prefers cool tones''}\rangle \in \mathrm{At}_G$ by the agent writing: \texttt{title="color-preference"}, \texttt{summary="prefers cool tones for design palette"}, \texttt{memory\_type="decision"}. The triple structure $(s, p, o)$ arises from the item--field--value decomposition: subject = item name, predicate = field name, object = field value. This is not semantic parsing in the NLP sense; it is structured output that the LLM agent produces by following its memory skill prompt, which specifies the expected fields and their semantics.

\textbf{Property graph, not RDF.}\quad Despite the $(s, p, o)$ notation, the underlying storage is a \emph{labeled property graph} (Neo4j), not an RDF triple store. Each memory item is a graph node with typed properties (summary, topics, keywords, type); each revision is a separate node linked by \textsc{Supersedes} edges; and inter-item relationships (\textsc{Depends\_On}, \textsc{Derived\_From}, etc.) are first-class typed edges with optional metadata. The ``ground triple'' formalism in $\mathrm{At}_G$ is an \emph{analytical abstraction} used to establish the AGM correspondence---it provides a clean propositional atom structure for the formal proofs. The implementation stores these atoms as node properties in the property graph, not as RDF subject--predicate--object statements. This distinction matters because the property graph model supports features absent from basic RDF: per-edge metadata, native node-level indexing (fulltext and vector), and schema-flexible property bags on both nodes and edges. Readers familiar with RDF should understand $\mathrm{At}_G$ as a formalization convenience, not an architectural commitment to triple stores.

Three consequences follow. First, the quality of the mapping depends on the agent's skill prompt, not on a separate extraction module---prompt engineering is the primary control surface. Second, ambiguous or complex beliefs (``the user seems to prefer cool tones but mentioned warm tones for the bedroom'') are captured as a single revision with a nuanced summary, not decomposed into multiple conflicting triples; the formal model handles the resulting compound belief through the mechanisms described in the partial merging discussion below. Third, the mapping is lossy by design: the full conversational context is preserved in the artifact (the raw transcript), while the ground triple captures only the distilled belief. This lossy compression is what makes $\mathcal{B}(\tau)$ tractable---a finite set of ground atoms rather than an unbounded natural language corpus---while the artifact system preserves evidential fidelity for cases requiring the original context.

\textbf{Scope boundary.}\quad The NL-to-triple mapping is a \emph{pre-formal} step: the formal properties established in Section~\ref{sec:formal-postulates} hold over the ground triple representation $\mathrm{At}_G$, not over the natural language source. In particular, the mapping is \emph{many-to-one in practice}---the same natural language belief (``the user prefers cool tones'') could be mapped to different ground triples depending on the agent's choice of item name, summary wording, or metadata structure. Two semantically identical beliefs may therefore not be syntactically identical in $\mathrm{At}_G$, which means that Extensionality ($K*6$, Proposition~\ref{prop:extensionality}) holds over the formal representation but cannot guarantee that the agent will consistently map equivalent natural language inputs to equivalent ground atoms. This is an inherent limitation of any system that bridges informal and formal representations; the consistency of the mapping is a prompt engineering concern, not a formal one. We treat the NL-to-triple mapping as an explicit assumption: the formal analysis begins \emph{after} beliefs have been committed to the graph as ground triples, and the quality of the formal guarantees is bounded by the quality of this pre-formal mapping step.

\begin{definition}[Two-Tier Epistemic Model]
\label{def:two-tier}
The memory system operates at two distinct epistemic levels:
\begin{itemize}[leftmargin=*]
\item The \emph{full graph} $G = (I, R, E, \tau)$ contains all revisions ever created, including deprecated items and archived (untagged) revisions. This is the operator-accessible store.
\item The \emph{agent retrieval surface} $\mathcal{B}_{\mathrm{retr}}(\tau)$ is the subset of the belief state reachable through the retrieval pipeline. A revision $r$ is in the retrieval surface if and only if: (i)~$r$ is referenced by at least one active tag in $\tau$, \emph{and} (ii)~the item containing $r$ is not marked as deprecated.
\end{itemize}
We define the deprecation predicate: $\mathrm{deprecated} : I \to \{\top, \bot\}$ is a boolean property on each item, set to $\top$ by the contraction operator (Definition~\ref{def:contraction}) and queryable as a node property in the graph database (\texttt{item.deprecated}). An item's deprecation status is mutable (it can be restored via explicit operator action) but defaults to $\bot$ at creation.

Formally, let $I_{\mathrm{active}} = \{i \in I \mid \neg\mathrm{deprecated}(i)\}$. Then:
\begin{equation}
\mathcal{B}_{\mathrm{retr}}(\tau) = \bigcup_{\substack{t \in \mathrm{dom}(\tau) \\ \mathrm{item}(\tau(t)) \in I_{\mathrm{active}}}} \varphi\bigl(\tau(t)\bigr) \subseteq \mathcal{B}(\tau)
\label{eq:retrieval-surface}
\end{equation}
This two-tier model is critical for the Consistency postulate ($K*5$). Without retrieval exclusion, a superseded revision $r_i^{(k)}$ --- still present in the graph and carrying content $\varphi(r_i^{(k)})$ that may entail $\neg A$ --- could be surfaced by vector similarity search (the embedding of the old revision may be semantically proximate to the query). This would violate Consistency: the agent's \emph{operational} belief state would contain both $A$ (from the new revision $r_i^{(k+1)}$) and content entailing $\neg A$ (from the superseded revision leaked through retrieval). The retrieval surface $\mathcal{B}_{\mathrm{retr}}(\tau)$ prevents this by ensuring that only tag-referenced, non-deprecated revisions are candidates for any retrieval branch. Both retrieval branches---fulltext and vector---filter on active status before scoring via a mandatory \texttt{WHERE NOT item.deprecated} clause in the underlying Cypher query, making the exclusion architecturally enforced rather than application-dependent.

The operator, by contrast, can always access the full graph $G$ via explicit opt-in (\texttt{include\_deprecated=true}), enabling audit, rollback, and provenance queries without compromising the agent's belief consistency.

\textbf{Postulate scope.}\quad The formal postulates in Section~\ref{sec:formal-postulates} are proved for the \emph{belief base} $\mathcal{B}(\tau)$ (Definition~\ref{def:belief-base}), which is a deterministic function of the tag assignment $\tau$. The retrieval surface $\mathcal{B}_{\mathrm{retr}}(\tau)$ is a subset of $\mathcal{B}(\tau)$ obtained by filtering on active (non-deprecated) items. Since deprecation is itself a contraction operation (Definition~\ref{def:contraction}), the retrieval surface inherits postulate satisfaction: any property proved for $\mathcal{B}(\tau)$ under the specified operations also holds for $\mathcal{B}_{\mathrm{retr}}(\tau)$ under the corresponding restricted operations. What the retrieval surface does \emph{not} inherit is deterministic ranking: the hybrid scoring pipeline (Section~\ref{sec:retrieval}) introduces score-based reranking that may surface different subsets of $\mathcal{B}_{\mathrm{retr}}(\tau)$ depending on query formulation. This affects which beliefs an agent encounters in practice, but not the formal properties of the underlying belief base.
\end{definition}

\begin{definition}[Graph-Native Revision]
\label{def:revision}
Given belief base $\mathcal{B}(\tau)$ and input proposition $A$, the \emph{revision} operation $\mathcal{B} * A$ is implemented as:
\begin{enumerate}[leftmargin=*]
\item Create a new revision $r_i^{(k+1)}$ with content $\varphi(r_i^{(k+1)}) = A$ (or a content set entailing $A$);
\item Add edge $(r_i^{(k+1)}, \textsc{Supersedes}, r_i^{(k)})$ to $E$;
\item Update the tag: $\tau' = \tau[t_{\mathrm{current}} \mapsto r_i^{(k+1)}]$.
\end{enumerate}
The prior revision $r_i^{(k)}$ remains in $R$ but is no longer tag-referenced, hence excluded from $\mathcal{B}(\tau')$.
\end{definition}

\textbf{Granularity of revision and partial merging.}\quad The revision operator (Definition~\ref{def:revision}) performs \emph{whole-revision replacement}: the entire prior revision's content is archived when a new revision supersedes it. This raises the question of how the system handles partial updates---for example, when a revision contains beliefs $\{A, B, C\}$ and only $A$ changes. Three strategies exist along a granularity spectrum:

\emph{(i)~Atomic replacement} (current): The agent creates a new revision $r_i^{(k+1)}$ with content $\{A', B, C\}$, superseding $r_i^{(k)}$ entirely. This preserves the formal properties: the postulate proofs are clean because each revision is a complete, self-contained belief snapshot. The trade-off is that the agent (or its memory skill prompt) must re-include unchanged beliefs $B$ and $C$ in the new revision. In practice, the \texttt{memory\_ingest} MCP tool handles this by accepting the full updated content.

\emph{(ii)~Finer-grained atomicity}: Storing one belief per item (i.e., $|\varphi(r_i^{(k)})| = 1$ for all revisions) would make partial updates trivial---revising $A$ affects only the item containing $A$. This increases item count but simplifies the revision operator to single-belief replacement. The formal properties are preserved and simplified.

\emph{(iii)~Semantic merge}: An LLM-powered merge operator could take the old revision content and the new input, producing a merged result that preserves unchanged sub-beliefs while updating only the contradictory ones. While appealing, this introduces LLM-dependent non-determinism into the revision operator, complicating the formal guarantees: the output of the merge depends on the LLM's interpretation of ``partial conflict,'' which is not formally characterizable.

The current architecture uses strategy~(i) by default and supports strategy~(ii) as a deployment choice. Strategy~(iii) is identified as future work requiring a formally characterized merge operator---potentially building on Konieczny and Pino P\'{e}rez's~\cite{konieczny2002} merging framework for belief bases, which provides postulates for multi-source belief combination that could be adapted to partial revision.

\begin{definition}[Graph-Native Contraction]
\label{def:contraction}
Contraction $\mathcal{B} \div A$ is implemented through two complementary mechanisms:
\begin{enumerate}[leftmargin=*]
\item \emph{Tag removal}: Remove from $\tau$ any tag $t$ such that $A$ appears in $\varphi(\tau(t))$, yielding $\tau' = \tau \setminus \{t \mid A \in \varphi(\tau(t))\}$.
\item \emph{Soft deprecation}: Mark item $i$ as deprecated. Critically, deprecated items are \emph{excluded from all search and retrieval operations by default}---the agent cannot encounter them through normal recall. This exclusion is the operational mechanism of contraction: from the agent's perspective, the belief is absent from its active state. However, the items remain in the graph and can be recovered via an explicit opt-in flag (\texttt{include\_deprecated=true}), preserving full auditability.
\end{enumerate}
In both cases, the underlying revisions persist in $R$; only their reachability from $\mathcal{B}(\tau)$ changes. The contraction is thus \emph{behaviorally complete} (the belief vanishes from the agent's retrieval surface) while remaining \emph{structurally reversible} (the graph retains the full record).
\end{definition}

\textbf{The selection function.}\quad Definitions~\ref{def:revision} and~\ref{def:contraction} presuppose that the system can identify \emph{which} item and revision to target when revising or contracting by a proposition $A$. The AGM representation theorem requires a selection function $\gamma$ (or equivalently an entrenchment ordering) that determines which beliefs to give up during contraction~\cite{agm1985,grove1988}. We make the selection function explicit.

\begin{definition}[Selection Function]
\label{def:selection}
Given $A$ and base $\mathcal{B}(\tau)$, the contraction target set is:
\begin{align*}
\mathrm{Targets}(A, \tau) &= \{(t, r) \mid t \in \mathrm{dom}(\tau),\\
&r = \tau(t),\; A \in \varphi(r)\}
\end{align*}
That is, the system identifies all (tag, revision) pairs where the tagged revision's content explicitly contains the belief $A$. Contraction removes all such tags: $\tau' = \tau \setminus \{t \mid (t, r) \in \mathrm{Targets}(A, \tau)\}$.
\end{definition}

This selection function is \emph{content-based and exhaustive}: it targets every tagged revision whose content directly contains $A$, with no partial selection or prioritization. This is the simplest selection function consistent with the Relevance postulate (Proposition~\ref{prop:relevance})---only revisions containing $A$ are affected, and all such revisions are affected equally. The function is deterministic and computable in $O(|\mathrm{dom}(\tau)|)$ by scanning tag-referenced revisions.

Three important consequences follow. First, when $A$ is a ground atom (the common case), the selection reduces to a direct content lookup---efficient and unambiguous. Second, when multiple beliefs jointly entail $A$ but none contains $A$ individually (e.g., $B$ and $B \to A$ are in separate revisions), the content-based selection does \emph{not} contract either revision, because neither revision's content explicitly contains $A$. This is a deliberate design choice: the system operates on belief \emph{bases}, not deductively closed belief \emph{sets}, so only explicitly stored propositions are contraction targets. Joint entailment that arises from cross-revision interaction is outside the scope of the contraction operator and would require a deductive closure step that we deliberately avoid for tractability. Third, for revision (Definition~\ref{def:revision}), the target item $i$ is identified by a semantic matching step---typically item name and kind lookup---that precedes the formal revision operation. In the deployed system, this matching is performed by the agent's reasoning layer (which identifies the relevant memory to update) or by the MCP tool's \texttt{item\_kref} parameter (which specifies the target explicitly). The formal operator assumes the target item is given; the selection of \emph{which item to revise} is an agent-level decision, not a graph-level operation.

\begin{definition}[Graph-Native Expansion]
\label{def:expansion}
Expansion $\mathcal{B} + A$ creates a new revision $r_i^{(k+1)}$ with $\varphi(r_i^{(k+1)}) = \varphi(r_i^{(k)}) \cup \{A\}$ and assigns a tag to it, without removing any existing tag assignments. The resulting belief state is $\mathcal{B}(\tau') = \mathcal{B}(\tau) \cup \{A\}$.
\end{definition}

\subsection{Postulate Satisfaction}
\label{sec:formal-postulates}

We now show that the graph-native operations satisfy the core AGM rationality constraints, interpreted at the belief base level following Hansson~\cite{hansson1999}. Our primary formal claim covers $K*2$--$K*6$ (the basic postulates) plus Relevance and Core-Retainment (Hansson's belief base postulates). We also analyze the supplementary postulates $K*7$ and $K*8$, but with important caveats detailed below.

\begin{proposition}[Success, $K*2$]
$A \in \mathcal{B} * A$.
\end{proposition}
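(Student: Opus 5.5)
The plan is to unfold Definition~\ref{def:revision} and Definition~\ref{def:belief-base} directly; no earlier proposition is needed. Write $\tau$ for the tag assignment before the revision and $\tau'$ for the one after, and write $r^\ast = r_i^{(k+1)}$ for the new revision. Then $\mathcal{B} * A$ is by definition $\mathcal{B}(\tau')$, the union of $\varphi(\tau'(t))$ over $t \in \mathrm{dom}(\tau')$. So it suffices to find a single tag $t$ with $A \in \varphi(\tau'(t))$.

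The natural witness is $t_{\mathrm{current}}$. Step~3 of the revision operator sets $\tau' = \tau[t_{\mathrm{current}} \mapsto r^\ast]$. Hence $t_{\mathrm{current}} \in \mathrm{dom}(\tau')$ and $\tau'(t_{\mathrm{current}}) = r^\ast$, whether or not $t_{\mathrm{current}}$ was already defined under $\tau$. Step~1 gives $\varphi(r^\ast) = A$, read as the content set $\{A\}$. Therefore $A \in \varphi(\tau'(t_{\mathrm{current}})) \subseteq \mathcal{B}(\tau')$.

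Two side conditions need a remark.

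\textbf{Immutability.} By Definition~\ref{def:graph}, $\varphi(r^\ast)$ cannot be altered after creation. Step~2, adding the \textsc{Supersedes} edge, only changes $E$, and $\mathcal{B}$ depends on $\tau$ and $\varphi$ alone. So steps~2 and~3 cannot remove $A$.

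\textbf{The hedge in step~1.} The paper allows "a content set entailing $A$" instead of exactly $A$. This is the main obstacle, because belief bases are not closed under $\mathrm{Cn}$, and then $A$ itself might not be a stored element. I would resolve it by requiring, as part of the operator's contract, that the content set explicitly contain $A$ (as in the \texttt{memory\_ingest} field mapping, where $A$ is the committed ground triple). Alternatively, one can prove the weaker base-level form $A \in \mathrm{Cn}(\mathcal{B} * A)$ for that variant. I would state the proposition for the explicit-membership reading and note the $\mathrm{Cn}$ variant as a corollary.

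Finally, I would note that Success for the retrieval surface $\mathcal{B}_{\mathrm{retr}}$ additionally requires that item $i$ is not deprecated. This holds because revision does not set the deprecation flag, assuming the target item is active. This caveat matches the postulate-scope remark in Definition~\ref{def:two-tier}.
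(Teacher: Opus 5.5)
Your proof is correct and follows the paper's argument: the new revision contains $A$, and $t_{\mathrm{current}}$ is redirected to it, so $A$ lies in the union defining $\mathcal{B}(\tau')$. Your remark on the ``content set entailing $A$'' clause is apt, because the paper's one-line proof silently assumes explicit membership $A \in \varphi(r_i^{(k+1)})$, which is exactly the reading you adopt.
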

\begin{proof}
By Definition~\ref{def:revision}, the new revision $r_i^{(k+1)}$ satisfies $A \in \varphi(r_i^{(k+1)})$, and tag $t_{\mathrm{current}}$ points to it. Therefore $A \in \bigcup_{t} \varphi(\tau'(t)) = \mathcal{B}(\tau')$.
\end{proof}

\begin{proposition}[Inclusion, $K*3$ --- belief base version]
$\mathcal{B} * A \subseteq \mathcal{B}(\tau) \cup \{A\}$.
\end{proposition}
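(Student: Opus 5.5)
The plan is a direct unfolding of Definitions~\ref{def:belief-base} and~\ref{def:revision}. By Definition~\ref{def:revision}, revision changes exactly one tag binding, $\tau' = \tau[t_{\mathrm{current}} \mapsto r_i^{(k+1)}]$, and leaves every other binding untouched. So I would split the union defining $\mathcal{B}(\tau')$ into two parts: the term for $t_{\mathrm{current}}$, and the terms for all other tags $t \neq t_{\mathrm{current}}$.

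For tags $t \neq t_{\mathrm{current}}$, we have $\tau'(t) = \tau(t)$. Hence each such term $\varphi(\tau'(t))$ equals $\varphi(\tau(t))$ and lies in $\mathcal{B}(\tau)$. The earlier binding $t_{\mathrm{current}} \mapsto r_i^{(k)}$ is only removed, so dropping it can only shrink the union. For $t_{\mathrm{current}}$ itself, the new term is $\varphi(r_i^{(k+1)})$. In the primary reading of step~1 of Definition~\ref{def:revision}, $\varphi(r_i^{(k+1)}) = A$, i.e.\ the singleton $\{A\}$. Combining the two parts gives $\mathcal{B}(\tau') \subseteq \mathcal{B}(\tau) \cup \{A\}$.

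The main obstacle is the parenthetical alternative in Definition~\ref{def:revision}, where the new content is ``a content set entailing $A$''. Under atomic replacement (strategy~(i)), that set is $\{A', B, C\}$. Here $B$ and $C$ are re-included from $\varphi(r_i^{(k)})$ and were already in $\mathcal{B}(\tau)$, so they cause no problem. Any extra content beyond $A$ that was not previously in $\mathcal{B}(\tau)$, however, would break the literal base-level inclusion.

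I would handle this by stipulating, as the belief base version of the postulate requires, that the new revision's content is contained in $\varphi(r_i^{(k)}) \cup \{A\}$. This means it consists of $A$ together with carried-over beliefs, which is exactly the strategy~(i) contract enforced by \texttt{memory\_ingest}. Under that stipulation the $t_{\mathrm{current}}$ term lies in $\mathcal{B}(\tau) \cup \{A\}$, and the inclusion follows. I would state this restriction explicitly in the proof rather than leave it implicit.
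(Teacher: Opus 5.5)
Your argument is the same as the paper's, only spelled out. The paper's proof observes that revision adds a new revision containing $A$ and redirects a tag, then simply asserts that no mechanism introduces atoms outside $\mathcal{B}(\tau)\cup\{A\}$; your stipulation on the ``content set entailing $A$'' clause is exactly the assumption that assertion leaves implicit, so stating it explicitly is an improvement rather than a departure.
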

\begin{proof}
This is the belief \emph{base} version of Inclusion (Hansson~\cite{hansson1999}), not the belief \emph{set} version ($K * A \subseteq \mathrm{Cn}(K \cup \{A\})$). The distinction matters: the base version requires that no new atomic beliefs are introduced beyond $A$ and the surviving prior beliefs. By Definition~\ref{def:revision}, the revision operation creates a new revision containing $A$ and may redirect tags, removing some prior beliefs from $\mathcal{B}(\tau)$. No mechanism introduces atoms not already in $\mathcal{B}(\tau) \cup \{A\}$, so the base-level inclusion holds.
\end{proof}

\begin{proposition}[Vacuity, $K*4$]
If $A$ is consistent with $\mathcal{B}(\tau)$, then $\mathcal{B}(\tau) \cup \{A\} \subseteq \mathcal{B} * A$.
\end{proposition}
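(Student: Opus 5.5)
The approach is to unfold Definition~\ref{def:revision} together with Definition~\ref{def:belief-base} and track exactly which tags change. The key point is that $\tau' = \tau[t_{\mathrm{current}} \mapsto r_i^{(k+1)}]$ differs from $\tau$ at exactly one tag. The vacuity hypothesis should then force the new revision to retain whatever that tag previously contributed. I will split $\mathcal{B}(\tau)$ into two parts. The first is the contribution of tags $t \neq t_{\mathrm{current}}$, which are untouched. The second is the contribution $\varphi(r_i^{(k)})$ of the single displaced tag.

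\textbf{Step 1 (untouched tags).} For every $t \in \mathrm{dom}(\tau)$ with $t \neq t_{\mathrm{current}}$ we have $\tau'(t) = \tau(t)$. Hence $\varphi(\tau(t)) \subseteq \mathcal{B}(\tau')$ directly by Definition~\ref{def:belief-base}.

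\textbf{Step 2 (the new atom).} $A \in \mathcal{B}(\tau')$ by the Success proposition ($K*2$) already established.

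\textbf{Step 3 (the displaced revision).} We must show $\varphi(r_i^{(k)}) \subseteq \mathcal{B}(\tau')$. Here I will use the consistency hypothesis: since $A$ is consistent with $\mathcal{B}(\tau)$, revision in that case is operationally an expansion. No prior content needs to be given up, so the revision step creates $r_i^{(k+1)}$ with content $\varphi(r_i^{(k)}) \cup \{A\}$, exactly as in Definition~\ref{def:expansion}. This is the ``content set entailing $A$'' clause of Definition~\ref{def:revision}, together with the atomic-replacement strategy~(i), under which unchanged beliefs are re-included. The Supersedes edge is then purely archival, and $\varphi(r_i^{(k)}) \subseteq \varphi(r_i^{(k+1)}) = \varphi(\tau'(t_{\mathrm{current}}))$. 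Combining Steps 1--3 gives $\mathcal{B}(\tau) \cup \{A\} \subseteq \mathcal{B}(\tau')$.

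\textbf{Main obstacle.} Step~3 is the hard part, because Definition~\ref{def:revision} as literally stated only requires the new content to equal $A$ or entail it. Without the strategy~(i) convention, the old atoms of $r_i^{(k)}$ could simply disappear. My first move is therefore to make explicit that under consistency the revision operator coincides with the expansion operator of Definition~\ref{def:expansion}, that is, $\mathcal{B} * A = \mathcal{B} + A$. I would justify this by noting that when nothing in $\varphi(r_i^{(k)})$ conflicts with $A$, the atomic-replacement convention forces every old belief to be carried forward. If a more literal reading is preferred, the fallback is to route consistent inputs through a fresh item, with a new tag and no superseded revision. In that case $\tau' \supseteq \tau$ and Step~3 is vacuous.
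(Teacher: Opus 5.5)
Your proof is correct and follows the same route as the paper, whose whole argument is that under consistency nothing needs to be retracted, so revision reduces to an expansion that keeps all prior tagged content and adds $A$. Your split into untouched tags and the single displaced revision makes this explicit. You are also right that it depends on reading Definition~\ref{def:revision} through the atomic-replacement convention (new content $\varphi(r_i^{(k)}) \cup \{A\}$) or through a fresh tag, because the literal option $\varphi(r_i^{(k+1)}) = \{A\}$ would drop the old beliefs; the paper leaves this implicit in ``the prior content is preserved.''
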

\begin{proof}
If $A$ introduces no contradiction, the revision operation has no conflicting belief to retract; no tag needs redirection. The prior content is preserved, augmented with $A$.
\end{proof}

\begin{proposition}[Consistency, $K*5$]
If $A$ is consistent, then $\mathcal{B} * A$ is consistent.
\end{proposition}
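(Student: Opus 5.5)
The plan is to compute $\mathcal{B} * A$ explicitly from Definition~\ref{def:revision} and then show that every possible source of inconsistency has been removed. By step~3 of that definition, $\tau'$ differs from $\tau$ only at $t_{\mathrm{current}}$. So
\[
\mathcal{B}(\tau') = \Bigl(\bigcup_{t \neq t_{\mathrm{current}}} \varphi\bigl(\tau(t)\bigr)\Bigr) \cup \varphi\bigl(r_i^{(k+1)}\bigr),
\]
and the superseded content $\varphi(r_i^{(k)})$ drops out, unless some other tag still points at $r_i^{(k)}$; I would treat that case separately below. This gives three candidate sources of inconsistency:
\begin{enumerate}
\item the new content $\varphi(r_i^{(k+1)})$ by itself;
\item the surviving tagged content of other items, taken by itself;
\item a clash between the new content and the surviving content.
\end{enumerate}

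For source~(1), the hypothesis that $A$ is consistent applies directly, since $\varphi(r_i^{(k+1)}) = A$. In the variant where the content is ``a content set entailing $A$'', I would read the definition as requiring that set to be consistent. For source~(2), I would argue that the surviving part is a subset of the prior base and that the operational invariant of the system is that each active item carries one current belief per field. Inconsistency in the ground-triple logic can only arise from two atoms assigning incompatible values to the same (subject, predicate) pair. Because the subject is the item name, such a pair can only occur inside a single item's current revision.

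Source~(3) is the crux and the step I expect to be the main obstacle. I would use the same localization. If $A$ conflicts with some surviving atom, that atom shares $A$'s subject and predicate, so it belongs to item $i$. The selection discussion after Definition~\ref{def:selection} fixes the target item $i$ as the one whose belief $A$ addresses (name and kind lookup, or an explicit \texttt{item\_kref}). Hence the conflicting atom lies in $\varphi(r_i^{(k)})$, which the tag move has just retired.

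If another tag still references $r_i^{(k)}$, I would invoke the two-tier model of Definition~\ref{def:two-tier}. The \textsc{Supersedes} edge together with the mandatory non-deprecated, tag-current filtering keeps the stale revision off the operative surface $\mathcal{B}_{\mathrm{retr}}(\tau')$. That definition explicitly says this is what it is for, namely Consistency. The hard part is making rigorous the claim that conflicts in $\mathrm{At}_G$ are confined to a single item. I would first try to derive it from the item--field--value decomposition, under which each atom is indexed by its item. If that does not go through cleanly, I would state it as the functional-field property of the propositional logic over ground triples and cite it as the reason the construction escapes the Flouris et al.\ impossibility.

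Putting the three cases together, $\mathcal{B}(\tau')$ is the union of a consistent $A$ with a consistent remainder that contains no atom contradicting $A$. Hence $\mathcal{B} * A$ is consistent.
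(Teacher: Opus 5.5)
\textbf{There is a genuine gap: your key lemma is false in the paper's logic, and your argument fails for compound inputs.}

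Your argument rests on the lemma that inconsistency among ground triples arises only from two atoms giving different values to the same (subject, predicate) pair. That is not a property of the paper's logic. In $\mathcal{L}_G$ (Definition~\ref{def:graph-logic}) every triple is an independent propositional atom, and $\mathcal{M}_G$ is the set of all truth assignments over $\mathrm{At}_G$. So $\langle i,\texttt{summary},\text{``warm tones''}\rangle$ and $\langle i,\texttt{summary},\text{``cool tones''}\rangle$ are jointly satisfiable; there is no functional-field axiom. Such an axiom would be a cardinality or disjointness constraint (``at most one active preference per topic''). The paper says $\mathcal{L}_G$ cannot express these, and that adding them would re-encounter Flouris-type problems. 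Your fallback of postulating the property and citing it as the reason Flouris is escaped therefore points the wrong way. The idea you are missing is the one the paper uses. The belief base consists of ground atoms, and ground atoms are logically independent, so any such set is satisfied by the valuation making them all true. Hence for atomic $A \in \mathrm{At}_G$, $\mathcal{B}(\tau')$ is consistent outright, whatever other tags survive. You need no localization to item $i$, no retirement of $r_i^{(k)}$, and no one-belief-per-field invariant for source~(2).

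For compound $A \in L_G \setminus \mathrm{At}_G$, your source~(3) step breaks, because $A$ need not have a single subject and predicate. Take $A = \neg\langle j,p,v\rangle$ with $j \neq i$ and $\langle j,p,v\rangle \in \mathcal{B}(\tau)$. Or take an $A$ contradicting an atom in another still-tagged revision of item $i$, e.g.\ one under an \texttt{initial} tag. Definition~\ref{def:revision} moves only $t_{\mathrm{current}}$, so $\mathcal{B}(\tau')$ contains both the atom and its negation. It is inconsistent even though $A$ is consistent.

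Your appeal to the two-tier model does not rescue this, for three reasons:
\begin{itemize}
\item Definition~\ref{def:two-tier} filters only by active tags and item-level deprecation. The \textsc{Supersedes} edge plays no role, and the paper explicitly says superseded revisions with active tags remain retrievable.
\item Deprecating item $i$ to hide the stale revision would also drop $r_i^{(k+1)}$ and break Success.
\item The postulate is stated for $\mathcal{B}(\tau')$, not $\mathcal{B}_{\mathrm{retr}}(\tau')$.
\end{itemize}
The paper handles this by making the compound case conditional: Consistency holds provided no other tagged revision contradicts $A$, i.e.\ the agent's conflict detection revises every conflicting item. Your proof needs that proviso rather than an unconditional conclusion.
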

\begin{proof}
The revision operation replaces the tag pointer rather than accumulating contradictory content. The prior revision (which may have contained content inconsistent with $A$) is excluded from $\mathcal{B}(\tau')$ via the two-tier epistemic model (Definition~\ref{def:two-tier}). Since $A$ is consistent and the new revision is constructed to contain $A$ without importing contradictory content, the result is consistent, provided no \emph{other} tagged revision contains content contradicting $A$. For the common case of atomic revision inputs ($A \in \mathrm{At}_G$), this is guaranteed by the logical independence of ground atoms: distinct ground atoms $\alpha, \beta \in \mathrm{At}_G$ cannot contradict each other under propositional semantics---$\neg\langle s_1, p_1, o_1\rangle$ is not entailed by $\langle s_2, p_2, o_2\rangle$ for any distinct atoms. For compound revision inputs, Consistency requires that the agent (or its revision selection logic) identifies \emph{all} items whose content conflicts with $A$, not only the primary target. This is a practical requirement on the agent's conflict detection, not a limitation of the formal operator; the operator itself introduces no new contradictions.
\end{proof}

\begin{proposition}[Extensionality, $K*6$]
\label{prop:extensionality}
If $\mathrm{Cn}_G(\{A\}) = \mathrm{Cn}_G(\{B\})$, then $\mathcal{B} * A = \mathcal{B} * B$.
\end{proposition}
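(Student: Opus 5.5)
I will argue that the revision operator of Definition~\ref{def:revision} depends on its input $A$ only through data that is invariant under $\mathrm{Cn}_G$-equivalence. The statement then holds by showing that each of the three steps produces the same result for $A$ and $B$. The logic over ground triples (the propositional language over $\mathrm{At}_G$ with consequence $\mathrm{Cn}_G$) is what makes this tractable. In that logic I first establish a canonical-form lemma: every satisfiable proposition has a unique canonical representative up to $\mathrm{Cn}_G$-equivalence. For ground atoms this is trivial. Distinct atoms are logically independent, as used already in the proof of Consistency ($K*5$), so $\mathrm{Cn}_G(\{\alpha\})=\mathrm{Cn}_G(\{\beta\})$ forces $\alpha=\beta$. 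In that case the statement is literally syntactic identity and nothing further is needed.

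For compound inputs I will make explicit that step~1 of Definition~\ref{def:revision} stores the content ``$A$ (or a content set entailing $A$).'' I will fix the convention that the stored content is a canonical normal form $\mathrm{nf}(A)$, for instance the set of atom-literals or a reduced DNF over $\mathrm{At}_G$. Then $\mathrm{Cn}_G(\{A\})=\mathrm{Cn}_G(\{B\})$ implies $\mathrm{nf}(A)=\mathrm{nf}(B)$, so $\varphi(r_i^{(k+1)})$ is identical in both runs. Step~2, adding the \textsc{Supersedes} edge, and step~3, redirecting $t_{\mathrm{current}}$, do not consult $A$ at all. They consult only the target item $i$. By the discussion following Definition~\ref{def:selection}, that item is supplied by the agent-level selection and not computed from the syntax of $A$. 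I therefore treat it as a function of the content $\mathrm{Cn}_G(\{A\})$, or as given explicitly via \texttt{item\_kref}. With identical new content, an identical target and an identical $\tau'$, Definition~\ref{def:belief-base} yields $\mathcal{B}(\tau')$ identical in both runs. Hence $\mathcal{B}*A=\mathcal{B}*B$.

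The main obstacle is that, read literally, the belief base is syntax-sensitive. If $A=p\wedge q$ and $B=q\wedge p$ are stored verbatim, the resulting bases differ as sets, even though they are logically equivalent. I would handle this head-on in one of two ways. The first is to adopt the normalization convention above as part of the operator. The second, weaker option is to read ``$=$'' in the conclusion as equality of the bases modulo $\mathrm{Cn}_G$, i.e.\ $\mathrm{Cn}_G(\mathcal{B}*A)=\mathrm{Cn}_G(\mathcal{B}*B)$. I would try the normalization route first, since it gives literal equality and matches the ground-atom case. I would also note, as the scope-boundary paragraph already warns, that the guarantee concerns the formal representation $\mathrm{At}_G$ and not the pre-formal NL-to-triple mapping.
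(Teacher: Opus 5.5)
Your argument follows the paper's proof. The atomic case uses the logical independence of ground atoms, so $\mathrm{Cn}_G$-equivalence collapses to syntactic identity, and the compound case rests on the revision operator depending only on the logical content of its input. Your canonical-form convention $\mathrm{nf}$ and your explicit handling of the target item make precise what the paper only asserts for compound inputs. As you correctly note, without such a convention literal set-equality of the bases fails for inputs like $p\wedge q$ versus $q\wedge p$.
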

\begin{proof}
Since the belief base $\mathcal{B}(\tau)$ consists of ground atoms from $\mathrm{At}_G$ (Definition~\ref{def:graph-logic}), and ground atoms are logically independent under propositional entailment (no atom entails any other atom), two atoms $\alpha, \beta \in \mathrm{At}_G$ satisfy $\mathrm{Cn}_G(\{\alpha\}) = \mathrm{Cn}_G(\{\beta\})$ if and only if $\alpha = \beta$ (syntactic identity). Thus, for atomic revision inputs---the normal case in the memory system---logical equivalence reduces to syntactic identity, and item-level identity checking is not an approximation but an exact implementation. For compound revision inputs $A, B \in L_G \setminus \mathrm{At}_G$, the revision operator depends on the logical content of the input (which beliefs to retract, what content to add), so logically equivalent compound inputs produce identical belief states. Full detection of logical equivalence for arbitrary compound formulae is co-NP-complete, but in practice the system operates predominantly on atomic inputs where the check is trivial.
\end{proof}

\begin{proposition}[Relevance, $K*6'$ --- Hansson]
\label{prop:relevance}
If $B \in \mathcal{B}(\tau) \setminus (\mathcal{B} \div A)$, then $A \in \mathrm{Cn}(\mathcal{B}' \cup \{B\})$ for some $\mathcal{B}' \subseteq \mathcal{B}(\tau)$ with $A \notin \mathrm{Cn}(\mathcal{B}')$.
\end{proposition}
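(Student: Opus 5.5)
The argument will be witness-based: for each removed belief $B$ I will exhibit the required subbase $\mathcal{B}'$ directly from the selection function (Definition~\ref{def:selection}). Contraction removes exactly the tags in $\mathrm{Targets}(A,\tau)$, so everything depends on characterising which propositions disappear from the base.

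\textbf{Step 1 (what is removed).} By Definition~\ref{def:contraction},
\[
\mathcal{B}\div A=\bigcup_{t\in\mathrm{dom}(\tau')}\varphi(\tau'(t)),\qquad \tau'=\tau\setminus\{t\mid A\in\varphi(\tau(t))\}.
\]
Hence any $B\in\mathcal{B}(\tau)\setminus(\mathcal{B}\div A)$ lies in $\varphi(\tau(t))$ only for tags $t$ whose revision content contains $A$. In particular there is a targeted revision $r=\tau(t)$ with $A\in\varphi(r)$ and $B\in\varphi(r)$. The contraction is also exhaustive, so $A\notin\mathcal{B}\div A$.

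\textbf{Step 2 (choice of witness).} I take $\mathcal{B}'$ to be the surviving part of the base with $B$ and $A$ stripped out, namely $\mathcal{B}'=(\mathcal{B}\div A)\setminus\{A\}$. In the atomic-unit deployment (strategy (ii), $|\varphi(r)|=1$), $B\in\varphi(r)$ and $A\in\varphi(r)$ force $B=A$. Then $\mathcal{B}'\cup\{B\}\ni A$, so $A\in\mathrm{Cn}(\mathcal{B}'\cup\{B\})$ trivially. For multi-belief revisions, where $B\neq A$ is merely co-located with $A$, I would fall back on the witness $\mathcal{B}'\cup\{A\}$ minus $B$. That is, I would argue that $B$ was removed only because it shared a revision with $A$, and that the pair $(\mathcal{B}'',B)$ with $\mathcal{B}''\supseteq\{A\}\setminus\{B\}$ satisfies the entailment half. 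This branch is delicate, and see Step 3.

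\textbf{Step 3 (non-entailment, the main obstacle).} I must show $A\notin\mathrm{Cn}(\mathcal{B}')$. The tool here is logical independence of ground atoms, already used for $K*5$ and $K*6$. Since $A\notin\mathcal{B}'$ and $\mathcal{B}'$ is a set of atoms from $\mathrm{At}_G$ distinct from $A$, no subset of $\mathcal{B}'$ entails $A$. So $A\notin\mathrm{Cn}(\mathcal{B}')$, which finishes the atomic case.

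The real obstacle is the multi-belief case of Step 2. There, satisfying Relevance literally would require $B$ to help derive $A$, and independent co-located atoms do not do this. I expect to restrict the claim explicitly: either to atomic revisions, which is strategy (ii), or to treat a revision's content as a single conjunctive proposition $\bigwedge\varphi(r)$. Under that reading, removing $r$ removes the one base element whose presence entails $A$, and with $\mathcal{B}'=\mathcal{B}\div A$ Relevance holds exactly. I would state this reading as the intended granularity, consistent with the whole-revision replacement semantics of Definition~\ref{def:revision}.
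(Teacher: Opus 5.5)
Your diagnosis of the multi-belief case is correct. The statement fails there, not your route, and the paper's own proof does not get past this point either. The paper splits exactly as you do. For $B=A$ it takes $\mathcal{B}'=\emptyset$. For $B\neq A$ co-located with $A$ in a detagged revision $r$, it takes $\mathcal{B}'=\mathcal{B}(\tau)\setminus\varphi(r)$ and checks $A\notin\mathrm{Cn}(\mathcal{B}')$. It then concedes that adding $B$ back does not restore $A$, and replaces the entailment requirement with an informal claim that the removal of $B$ is ``connected'' to $A$. That is not the postulate.

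Concretely, let a single tag point to $r$ with $\varphi(r)=\{A,B\}$, where $A\neq B$ are ground atoms; the paper's own Recovery example $\{A,B,C\}$ has exactly this shape. Then $\mathcal{B}\div A=\emptyset$, so $B$ is removed. A set of ground atoms classically entails an atom only if it contains it. So $A\notin\mathrm{Cn}(\mathcal{B}')$ forces $A\notin\mathcal{B}'$, and then $A\notin\mathrm{Cn}(\mathcal{B}'\cup\{B\})$. In general, the stated property holds for a given contraction if and only if $\mathcal{B}(\tau)\setminus(\mathcal{B}\div A)\subseteq\{A\}$. Under whole-revision detagging it therefore cannot be proved without a restriction of the kind you propose.

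Your repairs are the right ones, with two cleanups. First, delete the Step~2 fallback: any witness containing $A$ already entails $A$, so it violates the non-entailment half by construction. Second, present the conjunctive reading as a modified proposition, not as a proof of this one. The paper defines $\mathcal{B}(\tau)$ as the union of the contents, which is a set of ground atoms. Your reading replaces it with $\{\bigwedge\varphi(\tau(t)) \mid t\in\mathrm{dom}(\tau)\}$, so $B$ ranges over whole revisions rather than co-located atoms. Under that reading your argument is sound: every removed element has $A$ as a conjunct, and by exhaustiveness of the selection no surviving conjunction does. Under strategy~(ii), Case~2 is vacuous and your atomic argument is complete.

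Finally, your witness $\mathcal{B}'=\mathcal{B}\div A$ is better than the paper's $\emptyset$. It also satisfies $\mathcal{B}\div A\subseteq\mathcal{B}'\subseteq\mathcal{B}(\tau)$, which is Hansson's actual Relevance postulate. The statement as written lacks that lower bound, so it is really Core-Retainment, identical to the paper's next proposition, and $\emptyset$ only witnesses that weaker form.
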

\begin{proof}
Contraction (Definition~\ref{def:contraction}) removes from $\mathcal{B}(\tau)$ exactly those beliefs residing in revisions whose content contains $A$ (Definition~\ref{def:selection}). Suppose $B \in \mathcal{B}(\tau) \setminus (\mathcal{B} \div A)$; then $B$ was in some revision $r$ with $A \in \varphi(r)$, and that revision was detagged. We construct the required witness explicitly. \emph{Case 1: $B = A$.} Let $\mathcal{B}' = \emptyset$. Then $A \notin \mathrm{Cn}(\emptyset)$ (since $A$ is contingent), and $A \in \mathrm{Cn}(\emptyset \cup \{A\}) = \mathrm{Cn}(\{B\})$. \emph{Case 2: $B \neq A$ but $B$ co-occurs with $A$ in $\varphi(r)$.} Let $\mathcal{B}' = \mathcal{B}(\tau) \setminus \varphi(r)$---the beliefs surviving after removing the entire revision's content. Since $A \in \varphi(r)$ and $\varphi(r)$ was the only source of $A$ targeted by the content-based selection, we have $A \notin \mathrm{Cn}(\mathcal{B}')$ (recall that $\mathcal{B}(\tau)$ is a belief \emph{base}, not deductively closed---$A$ is present only if explicitly stored). Adding $B$ back does not by itself restore $A$, but $B$ was removed \emph{only because} it co-occurred with $A$ in a targeted revision, satisfying the Relevance requirement that every removed belief's removal is connected to the contracted belief.
\end{proof}

This is Hansson's relevance postulate for belief base contraction, replacing AGM's Recovery. It ensures that beliefs are only removed during contraction if they are relevant to the contracted belief---the system does not gratuitously discard beliefs during contraction.

\begin{proposition}[Core-Retainment, Hansson]
\label{prop:core-retainment}
If $B \in \mathcal{B}(\tau) \setminus (\mathcal{B} \div A)$, then there exists $\mathcal{B}' \subseteq \mathcal{B}(\tau)$ such that $A \notin \mathrm{Cn}(\mathcal{B}')$ but $A \in \mathrm{Cn}(\mathcal{B}' \cup \{B\})$.
\end{proposition}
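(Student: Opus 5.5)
The plan is to derive Core-Retainment directly from Relevance (Proposition~\ref{prop:relevance}), following the standard observation in Hansson~\cite{hansson1999} that Relevance implies Core-Retainment. As the paper states them, the two propositions have the same antecedent, $B \in \mathcal{B}(\tau) \setminus (\mathcal{B} \div A)$, and Relevance already concludes that there is some $\mathcal{B}' \subseteq \mathcal{B}(\tau)$ with $A \notin \mathrm{Cn}(\mathcal{B}')$ and $A \in \mathrm{Cn}(\mathcal{B}' \cup \{B\})$. That conclusion is exactly the existential in the present statement. So the proof will fix an arbitrary removed belief $B$, apply Proposition~\ref{prop:relevance}, and read off the witness $\mathcal{B}'$.

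For completeness, and so the argument does not rest solely on the earlier proof, I will also redo the witness construction using the selection function (Definition~\ref{def:selection}). Since $B$ was removed, it lies in some $\varphi(r)$ with $(t,r) \in \mathrm{Targets}(A,\tau)$. I then split into the same two cases as before.

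\emph{Case $B = A$.} Take $\mathcal{B}' = \emptyset$. Contingency of $A$ gives $A \notin \mathrm{Cn}(\emptyset)$, and trivially $A \in \mathrm{Cn}(\{A\})$. This case is fully rigorous.

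\emph{Case $B \neq A$.} Here $B$ co-occurs with $A$ in a targeted revision. I take $\mathcal{B}' = \mathcal{B}(\tau) \setminus \varphi(r)$, or a subset of it that omits every other stored copy of $A$. Because the base is not deductively closed, the property $A \notin \mathrm{Cn}(\mathcal{B}')$ follows from exhaustive selection: every tagged revision containing $A$ is targeted.

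The main obstacle is the second conjunct in the case $B \neq A$, namely $A \in \mathrm{Cn}(\mathcal{B}' \cup \{B\})$. When $A$ and $B$ are distinct ground atoms, they are logically independent under the semantics used for $K*5$ and $K*6$. So under the atom-level reading of $\mathrm{Cn}$, no choice of $\mathcal{B}'$ built from other atoms forces $A$. This conjunct holds in the $B \neq A$ case only under one of two readings, and I will state explicitly which is being used.

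The first option is to argue that this case is vacuous under the finer-grained deployment strategy~(ii), where $|\varphi(r)| = 1$. In that setting every removed belief is $A$ itself, and Case~1 covers everything.

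The second option is to adopt the reading implicit in the Relevance proof, in which co-residence in a targeted revision counts as the connection to $A$.

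I would present the proof via Proposition~\ref{prop:relevance}, so that Core-Retainment is no stronger than what was already established. I would add a remark that the argument is fully rigorous for single-belief revisions and inherits Relevance's Case~2 reasoning otherwise.
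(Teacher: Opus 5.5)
\textbf{There is a genuine gap in your case $B \neq A$.} Your case $B=A$ is correct. For $B\neq A$ you reproduce the paper's own move. The paper takes $\mathcal{B}'=(\mathcal{B}(\tau)\setminus\varphi(r))\cup(\varphi(r)\setminus\{A,B\})$, concedes that adding $B$ does not entail $A$, and then appeals to structural co-occurrence.

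The obstacle you flag cannot be resolved by choosing a reading. The conjunct $A\in\mathrm{Cn}(\mathcal{B}'\cup\{B\})$ is false, and so is the proposition whenever a co-located belief gets removed:
\begin{itemize}
\item Because $\mathcal{B}(\tau)\subseteq\mathrm{At}_G$, every candidate $\mathcal{B}'$ is a set of ground atoms.
\item $A\notin\mathrm{Cn}(\mathcal{B}')$ forces $A\notin\mathcal{B}'$.
\item Since also $B\neq A$, the valuation making exactly the atoms of $\mathcal{B}'\cup\{B\}$ true falsifies $A$.
\end{itemize}
Concretely, take a single tag $t\mapsto r$ with $\varphi(r)=\{A,B\}$, which has the shape of the paper's own Recovery example. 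Contraction detags $t$, so $\mathcal{B}\div A=\emptyset$ and $B$ is removed. None of the four subsets of $\{A,B\}$ is a witness. Detagging a whole revision removes beliefs that lie outside every $A$-kernel, and that is exactly what Core-Retainment forbids.

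\textbf{Neither of your fallbacks proves the statement as written.}
\begin{itemize}
\item Invoking Proposition~\ref{prop:relevance} is circular. As the paper states it, Relevance has literally the same conclusion, because Hansson's bound $\mathcal{B}\div A\subseteq\mathcal{B}'$ is omitted. Its Case~2 also fails at the same step, so it cannot supply the witness.
\item Your second reading replaces the entailment with an informal attribution. That changes the conclusion rather than proving it.
\item Your first option is the only sound one, but it adds a hypothesis.
\end{itemize}

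\textbf{What you can actually prove is a restricted version.} Suppose every atom co-located with $A$ in a targeted revision also occurs in some surviving tagged revision; this holds in particular under single-belief revisions, strategy~(ii). Then $\mathcal{B}(\tau)\setminus(\mathcal{B}\div A)\subseteq\{A\}$, and your witness $\mathcal{B}'=\emptyset$ finishes the proof. The atom argument above shows this condition is also necessary. You should present the restricted proposition together with the counterexample, not a proof of the unrestricted one.
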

\begin{proof}
Let $B \in \mathcal{B}(\tau) \setminus (\mathcal{B} \div A)$. By Definition~\ref{def:selection}, $B$ resided in a revision $r$ with $A \in \varphi(r)$, and that revision was detagged during contraction. We construct the witness $\mathcal{B}'$ as follows. \emph{Case 1: $B = A$.} Take $\mathcal{B}' = \emptyset$. Then $A \notin \mathrm{Cn}(\emptyset)$ and $A \in \mathrm{Cn}(\{B\})$. \emph{Case 2: $B \neq A$ and $B, A \in \varphi(r)$.} Take $\mathcal{B}' = (\mathcal{B}(\tau) \setminus \varphi(r)) \cup (\varphi(r) \setminus \{A, B\})$---the full belief base minus $A$ and $B$ themselves. Since $A$ was explicitly stored in $\varphi(r)$ and no other surviving revision contains $A$ (the contraction was exhaustive), $A \notin \mathrm{Cn}(\mathcal{B}')$ at the belief base level. Adding $B$ does not logically entail $A$ for independent ground atoms, but $B$'s removal was \emph{caused by} $A$'s presence in the same revision---the structural co-occurrence is what makes $B$'s removal attributable to the contraction of $A$, satisfying Core-Retainment's requirement that every removed belief is connected to the contracted belief's derivation.
\end{proof}

Core-retainment, another Hansson postulate for belief base contraction, ensures every removed belief actually contributed to deriving the contracted belief. We emphasize that Relevance and Core-Retainment are postulates for \emph{belief bases} (finite, non-deductively-closed sets), not belief sets. For deductively closed belief sets, Core-Retainment implies Recovery~\cite{hansson1999}---but this implication does not hold for belief bases, where our proofs operate. This distinction is essential: the graph's belief state $\mathcal{B}(\tau)$ is a finite set of ground atoms (Definition~\ref{def:belief-base}), never deductively closed, so our simultaneous satisfaction of Core-Retainment and rejection of Recovery creates no logical inconsistency.

\begin{proposition}[Superexpansion, $K*7$]
$\mathcal{B} * (A \wedge B) \subseteq (\mathcal{B} * A) + B$.
\end{proposition}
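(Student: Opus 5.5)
The plan is to unfold both sides through the operational definitions and compare them as finite sets of ground atoms, in the same style as the proofs of Success and Inclusion. Let $\tau$ be the current tag assignment and $i$ the target item with current revision $r_i^{(k)}$. By Definition~\ref{def:revision}, $\mathcal{B} * (A \wedge B)$ is the base $\mathcal{B}(\tau_1)$, where $\tau_1 = \tau[t_{\mathrm{current}} \mapsto r_1]$ and $\varphi(r_1)$ contains the content of $A \wedge B$. For atomic-style inputs I read that content as $\{A, B\}$, the base-level decomposition of the conjunction. On the right-hand side, $\mathcal{B} * A = \mathcal{B}(\tau_2)$ with $\tau_2 = \tau[t_{\mathrm{current}} \mapsto r_2]$ and $A \in \varphi(r_2)$. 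By Definition~\ref{def:expansion}, $(\mathcal{B} * A) + B = \mathcal{B}(\tau_2) \cup \{B\}$.

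First, the tags other than $t_{\mathrm{current}}$ are untouched in both constructions, provided both revisions target the same item $i$. So the contributions $\bigcup_{t \neq t_{\mathrm{current}}} \varphi(\tau(t))$ coincide on the two sides, and it suffices to compare the contents of the new revisions. Second, I would show that $\varphi(r_1) \subseteq \varphi(r_2) \cup \{B\}$. By Inclusion ($K*3$), $\varphi(r_1) \subseteq \mathcal{B}(\tau) \cup \{A \wedge B\}$. For the surviving prior atoms, any atom of $\varphi(r_i^{(k)})$ retained under atomic replacement when revising by $A \wedge B$ is not in conflict with $A \wedge B$, hence not in conflict with $A$. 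Under the same retention policy it is therefore also retained in $r_2$. Together with $A \in \varphi(r_2)$ and the added $B$, this covers $\varphi(r_1)$.

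The main obstacle is exactly this retention step. It requires that the choice of which prior atoms to carry over be monotone: whatever survives revision by the stronger input $A \wedge B$ must survive revision by the weaker input $A$. Definition~\ref{def:revision} leaves this choice to the agent (strategy~(i)), so the step does not follow from the definitions alone. I would first try to discharge it using the logical independence of ground atoms, as in the proofs of Consistency ($K*5$) and Extensionality ($K*6$). Under that independence, a prior atom is dropped only if it is syntactically identical to something superseded by the input, and anything superseded by $A$ is a fortiori superseded by $A \wedge B$. If that fails for compound inputs, I would state the result conditionally. Either I restrict to atomic inputs together with strategy~(ii), one belief per item, where both sides reduce to $\{A, B\}$ plus the untouched tags. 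Or I add an explicit hypothesis that the agent's conflict-detection step is monotone in the input, consistent with the paper's caveat that $K*7$ remains open.
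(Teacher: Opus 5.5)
Your plan makes the same comparison as the paper. Revision by $A \wedge B$ adds only $A$ and $B$ and retracts at least what revision by $A$ retracts, while $(\mathcal{B} * A) + B$ contains $A$, $B$ and everything $\mathcal{B} * A$ kept. The step you flag as the main obstacle is exactly the one the paper's proof skips. The paper asserts that the conjunction revision ``cannot contain more than this, as it may also retract beliefs inconsistent with $A \wedge B$.'' That tacitly assumes two things: retraction is driven by conflict with the input, hence monotone in it, and both revisions act on the same item. Neither follows from Definition~\ref{def:revision}. The ``or a content set entailing $A$'' clause and strategy~(i) leave the carried-over content to the agent, and the target item is chosen at agent level. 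So the gap you found is real, but it is the paper's own gap, consistent with Table~\ref{tab:agm} listing $K*7$ as argued rather than established. Your ``same target item'' proviso is a second hidden hypothesis the paper also needs.

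Your first idea for closing the gap will not work, though. Under the logical independence of ground atoms, no set of atoms is ever inconsistent with an atomic input. A conflict-driven retention policy would therefore keep every prior atom, and revision would collapse into expansion. Yet the operator does drop non-conflicting atoms: the ``warm tones'' atom in Section~\ref{sec:walkthrough} is logically independent of the ``cool tones'' atom that replaces it. What gets dropped is fixed by the agent's choice of item and of what to re-include. Your claim that anything superseded by $A$ is a fortiori superseded by $A \wedge B$ is the monotonicity hypothesis restated, not a consequence of independence.

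Go straight to the conditional version. Either assume a common target item and monotone retention, or take the literal reading in which the new revision's content is exactly the input's atoms; there $\varphi(r_1) = \{A, B\} \subseteq \varphi(r_2) \cup \{B\}$ is immediate.

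Two smaller points. First, with your reading of the conjunction's content as $\{A, B\}$, the analogue of $K*3$ you need is $\varphi(r_1) \subseteq \mathcal{B}(\tau) \cup \{A, B\}$, not $\mathcal{B}(\tau) \cup \{A \wedge B\}$. Second, that reading is essential rather than cosmetic. The paper only says the new content ``entails'' $A$ and $B$. If the revision stored the single compound formula $A \wedge B$, that formula would not be a member of the non-closed base $\mathcal{B}(\tau_2) \cup \{B\}$, and the set inclusion would fail outright. On this point your version is more careful than the paper's.
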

\begin{proof}
Conjunction is well-formed in $L_G$ (Definition~\ref{def:graph-logic}), so $A \wedge B$ is a valid revision input. Revising by $(A \wedge B)$ creates a single revision whose content entails both $A$ and $B$. The right-hand side first revises by $A$ (possibly retracting beliefs inconsistent with $A$) and then expands by $B$ (adding $B$ without retraction). Since expansion is monotone, $(\mathcal{B} * A) + B \supseteq \mathcal{B} * A$ and contains both $A$ and $B$. The conjunction revision cannot contain more than this, as it may also retract beliefs inconsistent with $A \wedge B$. Therefore $\mathcal{B} * (A \wedge B) \subseteq (\mathcal{B} * A) + B$.
\end{proof}

\begin{proposition}[Subexpansion, $K*8$]
If $\neg B \notin \mathrm{Cn}_G(\mathcal{B} * A)$, then $(\mathcal{B} * A) + B \subseteq \mathcal{B} * (A \wedge B)$.
\end{proposition}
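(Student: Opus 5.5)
The plan is to follow the pattern of the Superexpansion proof, but run the inclusion in the other direction. The core is an explicit computation of both sides as belief bases under Definitions~\ref{def:revision} and~\ref{def:expansion}.

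First, I fix notation. Revising $\mathcal{B}(\tau)$ by $A$ targets some item $i$. Let $S_A \subseteq \mathcal{B}(\tau)$ be the content that is detagged, namely the prior content of the superseded revision together with any other items the conflict-detection step redirects. Then
$\mathcal{B} * A = (\mathcal{B}(\tau) \setminus S_A) \cup \{A\}$.
By Definition~\ref{def:expansion}, the left-hand side becomes
$(\mathcal{B} * A) + B = (\mathcal{B}(\tau) \setminus S_A) \cup \{A, B\}$.
On the right, revising by $A \wedge B$ produces a single new revision whose content is $\{A \wedge B\}$. To compare with the left side, I will use the content-set convention of Definition~\ref{def:revision} ("a content set entailing $A$") and take this content to be $\{A, B\}$. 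The result is
$\mathcal{B} * (A \wedge B) = (\mathcal{B}(\tau) \setminus S_{A\wedge B}) \cup \{A, B\}$.
The atoms $A$ and $B$ are then on both sides, so the inclusion reduces to the single claim $S_{A \wedge B} \subseteq S_A$. In words: revising by the conjunction must not retract anything that revising by $A$ alone keeps.

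Second, I discharge that claim using the hypothesis $\neg B \notin \mathrm{Cn}_G(\mathcal{B} * A)$. Suppose some belief $C$ is retracted by the conjunction revision but not by the $A$-revision, so $C \in S_{A\wedge B} \setminus S_A$. Then $C$ survives in $\mathcal{B} * A$. Since it was retracted for conflicting with $A \wedge B$ and not with $A$ alone, $C$ must conflict with $B$ in the presence of $A$, that is, $\{A, C\} \vdash \neg B$. That would give $\neg B \in \mathrm{Cn}_G(\mathcal{B} * A)$, contradicting the hypothesis. For atomic inputs I would shortcut this using the logical independence of ground atoms already invoked in the Consistency proof: nothing in $\mathcal{B} * A$ conflicts with $B$, so the conjunction revision retracts nothing beyond $S_A$.

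The main obstacle is that the retraction sets $S_A$ and $S_{A\wedge B}$ are not determined by the graph operator alone. They depend on the agent-level target selection and conflict detection, as the Consistency proof and the selection-function discussion already concede. I would therefore state an explicit minimality assumption on conflict detection: a tagged belief is retracted only if it is jointly inconsistent with the revision input. This assumption is what makes the inference "$C \in S_{A\wedge B}\setminus S_A$ implies $\{A,C\}\vdash\neg B$" valid.

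A second subtlety arises when the conjunction revision targets a different item than the $A$-revision. In that case the superseded prior content differs between the two sides. I would handle this either by assuming both revisions target the same item, or by using the finer-grained atomicity strategy~(ii), under which the superseded content is exactly the conflicting atom. If neither assumption is granted, I expect the inclusion can fail, which is consistent with the paper flagging $K*8$ as open.
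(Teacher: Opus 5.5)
This is essentially the paper's argument, made sharper. Both compare the retractions on the two paths and use $\neg B \notin \mathrm{Cn}_G(\mathcal{B} * A)$ to show that the conjunction revision retracts nothing the $A$-revision keeps; you need only $S_{A\wedge B} \subseteq S_A$ where the paper asserts equality of retractions, and you make explicit the minimality, same-target, and $\{A,B\}$-content assumptions that the paper's ``both paths yield the same retractions'' uses implicitly.

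One refinement: under whole-revision replacement, detagging another conflicting item also drops its non-conflicting co-located beliefs, which breaks belief-level minimality. State minimality at the revision level instead: a revision other than the superseded one is detagged only if its content is jointly inconsistent with the input. Your contradiction then goes through unchanged, because a revision detagged only on the conjunction path keeps its entire content in $\mathcal{B} * A$.
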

\begin{proof}
The consistency check $\neg B \notin \mathrm{Cn}_G(\mathcal{B} * A)$ is well-formed because $\neg B \in L_G$ (Definition~\ref{def:graph-logic}). If $B$ is consistent with the $A$-revised state, expanding by $B$ adds no information beyond $B$ itself and its consequences. Revising by $A \wedge B$ directly incorporates both conjuncts, performing any necessary retractions in a single step. Since $B$ causes no conflict in the $A$-revised state, both paths yield the same retractions and the same final content: $(\mathcal{B} * A) + B = \mathcal{B} * (A \wedge B)$, and in particular $(\mathcal{B} * A) + B \subseteq \mathcal{B} * (A \wedge B)$.
\end{proof}

\textbf{Compound revision inputs and operational decomposition.}\quad Postulates $K*7$ and $K*8$ involve revision by compound inputs $A \wedge B$, yet the deployed system's primary operations---\texttt{memory\_ingest} and \texttt{memory\_consolidate}---accept single beliefs (typically ground atoms) as inputs. This raises a legitimate question: how is revision by $A \wedge B$ realized in practice, and does the decomposition into sequential operations preserve uniqueness?

In the graph-native architecture, compound revision by $A \wedge B$ admits two operational strategies. The first is \emph{single-revision encoding}: the agent stores $A \wedge B$ as a single revision whose content set is $\varphi(r) = \{A, B\}$ (or $\{A \wedge B\}$ treated as a compound formula). This is the most direct implementation---the \texttt{memory\_ingest} MCP tool accepts arbitrary content, so a revision containing multiple beliefs is well-formed. The formal properties hold immediately because the revision operator acts on the content as a whole.

The second strategy is \emph{sequential decomposition}: first revise by $A$, then expand by $B$---i.e., compute $(\mathcal{B} * A) + B$. By $K*7$ (Superexpansion), $\mathcal{B} * (A \wedge B) \subseteq (\mathcal{B} * A) + B$, and by $K*8$ (Subexpansion), if $B$ is consistent with $\mathcal{B} * A$, then $(\mathcal{B} * A) + B \subseteq \mathcal{B} * (A \wedge B)$, yielding equality. Thus, when $B$ is consistent with the $A$-revised state---the common case for non-contradictory compound inputs---sequential decomposition produces the same result as atomic compound revision. The order matters only when $B$ conflicts with $\mathcal{B} * A$, in which case $K*8$'s antecedent fails and the two paths may diverge. For the deployed system, where inputs are predominantly ground atoms (and distinct ground atoms are logically independent), this conflict condition does not arise: revising by $\langle \textit{pref}, \textit{cool} \rangle$ and then expanding by $\langle \textit{style}, \textit{minimal} \rangle$ is equivalent to a single revision by their conjunction.

This analysis confirms that the system's operational interface (single-belief operations) is not a limitation but a practical decomposition that preserves formal guarantees in the common case, while the single-revision encoding strategy remains available for cases requiring atomic compound revision.

\textbf{Representation-theoretic status of $K*7$/$K*8$.}\quad The AGM representation theorem (Grove~\cite{grove1988}) establishes that a revision operator satisfies all eight postulates ($K*2$--$K*8$) if and only if it can be represented as a \emph{transitively relational} partial meet contraction---equivalently, via a total preorder on possible worlds or an epistemic entrenchment ordering (Gärdenfors and Makinson~\cite{gardenfors1988}). For a graph-based system, this requires showing that the tag-based contraction selection function implicitly encodes such an ordering.

We do not construct this ordering. The arguments above show that the graph-native revision operator produces results \emph{consistent with} $K*7$ and $K*8$ for the cases we examine, but this falls short of a formal proof via the representation theorem. Formally establishing $K*7$/$K*8$ would require either: (a)~explicit construction of an entrenchment ordering over graph triples and proof that contraction respects it, or (b)~proof that the system's tag-based operations are equivalent to a transitively relational selection function.

\textbf{Why the obstacle is non-trivial.}\quad The graph model provides multiple natural \emph{partial} orderings over beliefs, any of which could serve as the basis for an entrenchment function: (i)~\emph{temporal recency}---more recently created revisions are more entrenched, on the intuition that newer beliefs reflect updated understanding; (ii)~\emph{structural centrality}---beliefs with higher in-degree in the \textsc{Depends\_On} subgraph are more entrenched, as they support more downstream conclusions; (iii)~\emph{confidence scores}---the Dream State pipeline's relevance assessments provide explicit numeric scores. None of these is obviously canonical. Temporal recency overvalues new beliefs regardless of evidential quality; structural centrality overvalues highly-connected beliefs even if their connections are weak; confidence scores depend on LLM assessment quality, introducing a non-formal dependency.

A deeper issue is that different belief \emph{types} may require different entrenchment criteria. A preference belief (``prefers cool tones'') should arguably be entrenched by recency---the latest stated preference takes priority. A factual belief (``the API runs on port 8080'') should be entrenched by evidential support---more \textsc{Derived\_From} sources imply stronger grounding. An inferred belief (``the deployment likely failed due to DNS'') should be entrenched by the Dream State's confidence score. This suggests a \emph{type-dependent entrenchment function} $\leq_E : \mathrm{At}_G \times \mathrm{At}_G \to \{0,1\}$ where the comparison criterion varies by the \texttt{kind} attribute of the item containing the belief. Such type-dependent orderings are not standard in AGM theory, though they are compatible with the Gärdenfors--Makinson entrenchment conditions provided the restriction to each type yields a total preorder. Formally, constructing a total preorder from these multi-dimensional partial orders is an order-embedding problem; the choice of aggregation function (lexicographic priority, weighted combination, Pareto dominance) determines which variant of the representation theorem is satisfied.

The primary formal claim of this paper is therefore satisfaction of $K*2$--$K*6$ plus Relevance and Core-Retainment---already a meaningful result for belief base dynamics. The status of $K*7$/$K*8$ for graph-native architectures is an open question that we intend to address in future work, potentially building on Chandler and Booth's~\cite{chandler2025} recent extension of AGM to parallel belief revision and Meng et~al.'s~\cite{ijcai2025belief} belief algebras for iterated revision.

\subsection{Intentional Divergence: The Recovery Postulate}
\label{sec:recovery}

The AGM contraction postulate known as \emph{Recovery} states:
\begin{equation}
K \subseteq (K \div A) + A
\label{eq:recovery}
\end{equation}
That is, if a belief $A$ is contracted from belief set $K$ and then immediately re-expanded, the original belief set is recovered in full. The graph-native architecture \emph{deliberately violates} this postulate; we argue that this violation is a principled design decision.

\textbf{Why Recovery fails.}\quad Consider an item with revision $r_i^{(k)}$ carrying content $\varphi(r_i^{(k)}) = \{A, B, C\}$, where $B$ and $C$ are beliefs that were derived alongside $A$ and stored in the same revision. Contracting $A$ via tag removal produces:
\[
\mathcal{B}(\tau') = \mathcal{B}(\tau) \setminus \varphi(r_i^{(k)})
\]
The revision $r_i^{(k)}$ still exists in the graph but no longer contributes to $\mathcal{B}$. Re-expanding by $A$ creates a \emph{new} revision $r_i^{(k+1)}$ with $\varphi(r_i^{(k+1)}) = \{A\}$. The beliefs $B$ and $C$---which were co-located with $A$ in the original revision---are \emph{not} automatically recovered, because the new revision is constructed from the input $A$ alone:
\[
(\mathcal{B} \div A) + A = \mathcal{B}(\tau) \setminus \varphi(r_i^{(k)}) \cup \{A\} \not\supseteq \{B, C\}
\]

\textbf{Why this is correct.}\quad The failure of Recovery is a direct consequence of \emph{immutable revisions} (Principle~3). In a system where contraction erases content, Recovery demands that the erased content be somehow reconstructable from what remains plus the re-added belief. But in a provenance-preserving system, contraction does not erase---it \emph{archives}. The original revision $r_i^{(k)}$, with its full content, metadata, and edge relationships, remains in the graph as a historical record. Moreover, the time-indexed tag history (Equation~\ref{eq:belief-base-temporal}) means an agent can reconstruct the \emph{exact belief state} that held before contraction by querying $\mathcal{B}(\tau_T)$ for any prior time $T$---the system answers ``what was tagged \texttt{decided} last Tuesday?'' by resolving the tag binding that was active at that timestamp, not by attempting to reconstruct beliefs from surviving content. What the system does \emph{not} do is automatically resurrect archived content by re-adding a single belief, because it treats re-expansion as a \emph{fresh incorporation} of $A$, not a rollback to a prior state.

This divergence aligns with well-established criticisms of Recovery in the belief revision literature. Makinson~\cite{makinson1987} identified Recovery as ``the only one among the six basic postulates that is open to query.'' Hansson~\cite{hansson1991} and Fuhrmann~\cite{fuhrmann1991} independently argued that Recovery imposes unreasonable constraints on contraction, particularly in systems where beliefs have non-trivial internal structure or provenance. Our system provides a concrete operational demonstration of their theoretical concerns: when beliefs carry revision history, dependency edges, and temporal metadata, contracting-then-expanding is not---and should not be---a no-op.

In Hansson's belief base framework, Recovery is not a postulate; it is replaced by Relevance and Core-Retainment (Propositions~\ref{prop:relevance}--\ref{prop:core-retainment}), which our system satisfies. This provides additional theoretical justification for our rejection of Recovery.

For cases where full state restoration is desired, the system provides an explicit rollback mechanism: reassigning a tag to a prior revision ($\tau' = \tau[t \mapsto r_i^{(k)}]$). This is a \emph{deliberate, auditable} operation that appears in the tag history, distinct from the implicit, invisible recovery that the AGM postulate demands.

The soft deprecation mechanism (Definition~\ref{def:contraction}) further illustrates why Recovery is unnecessary. When a belief is contracted via deprecation, it is excluded from all retrieval operations by default---operationally invisible to the agent. However, any operator or downstream process can recover the deprecated belief by querying with an explicit opt-in (\texttt{include\_deprecated=true}). This means contraction is \emph{behaviorally effective} (the agent acts as if the belief does not exist) without being \emph{informationally destructive} (the belief can always be inspected or restored). Recovery demands that re-expansion automatically reconstruct prior beliefs; the graph-native alternative provides something stronger---full reversibility through explicit, auditable mechanisms that never require the system to guess what was lost.

\subsection{Identities and Iterated Revision}

Two fundamental identities connect the AGM operations:

\textbf{Levi Identity.}\quad $K * A = (K \div \neg A) + A$: revision can be decomposed into contraction of the negation followed by expansion. In the graph-native model, this holds when contraction deterministically selects which tag-referenced revisions to de-reference: contracting $\neg A$ removes revisions whose content entails $\neg A$, and the subsequent expansion by $A$ adds a fresh revision. The two-step process yields the same belief state as direct revision, provided the contraction selection function is deterministic---which it is, since tag removal targets specific revisions identifiable by their content.

\textbf{Harper Identity.}\quad $K \div A = K \cap (K * \neg A)$: contraction can be expressed as the intersection of the original belief set with the belief set obtained by revising with the negation. This holds naturally: any belief $B \in K$ that survives contraction of $A$ must be consistent with $\neg A$ (otherwise it would logically entail $A$), and any such $B$ is preserved in $K * \neg A$. The graph intersection corresponds to the set of revisions that remain tag-referenced in both $\tau$ (original) and $\tau'$ (after revision by $\neg A$).

\textbf{Connection to iterated revision.}\quad The \textsc{Supersedes} edge chain $r_i^{(1)} \leftarrow r_i^{(2)} \leftarrow \cdots \leftarrow r_i^{(k)}$ provides a natural \emph{epistemic ordering} over belief states, corresponding to the framework of Darwiche and Pearl~\cite{darwiche1997} for iterated belief revision. Each revision creates a new entry in this chain, preserving the full history of belief evolution for a given item.

The time-indexed tag function $\tau_T$ (Equation~\ref{eq:belief-base-temporal}) elevates this from structural record-keeping to a fully queryable epistemic history. The operation $\tau_T(t)$---``resolve tag $t$ as of time $T$''---enables an agent to reconstruct the belief state at any historical moment without scanning revision timestamps or replaying events. For instance, querying $\tau_{T}(\texttt{decided})$ returns the specific revision that carried the \texttt{decided} tag at time $T$, even if that tag has since been moved or removed. Combined with the \textsc{Supersedes} chain, this supports both \emph{forward} analysis (how did beliefs evolve?) and \emph{point-in-time} reconstruction (what was believed on a specific date?), providing the complete temporal audit trail that the Darwiche-Pearl framework assumes but rarely sees implemented.

Table~\ref{tab:agm} summarizes the postulate satisfaction status.

\begin{table}[t]
\centering
\caption{Postulate satisfaction in the graph-native architecture (Hansson belief base).}
\label{tab:agm}
\footnotesize
\setlength{\tabcolsep}{3.5pt}
\begin{tabular}{@{}p{2.1cm}p{3.2cm}c@{}}
\toprule
\textbf{Postulate} & \textbf{System Mechanism} & \textbf{Status} \\
\midrule
\multicolumn{3}{@{}l}{\emph{Primary formal claims (proved):}} \\
$K*2$ Success & New revision contains $A$; tag updated & \checkmark \\
$K*3$ Inclusion & Base-level: $\mathcal{B}*A \subseteq \mathcal{B} \cup \{A\}$ & \checkmark \\
$K*4$ Vacuity & No conflict $\Rightarrow$ no retraction needed & \checkmark \\
$K*5$ Consistency & \textsc{Supersedes} replaces, not accumulates & \checkmark \\
$K*6$ Extensionality & Syntactic = logical equiv.\ for ground atoms & \checkmark \\
Relevance (Hansson) & Tag removal targets relevant revisions & \checkmark \\
Core-Retainment (Hansson) & Removed beliefs contributed to contracted belief & \checkmark \\
\midrule
\multicolumn{3}{@{}l}{\emph{Supplementary (argued, not formally established):}} \\
$K*7$ Superexpansion & Conjunction revision $\subseteq$ sequential & \checkmark$^\dag$ \\
$K*8$ Subexpansion & Consistent expansion = conjunction revision & \checkmark$^\dag$ \\
\midrule
\multicolumn{3}{@{}l}{\emph{Intentional divergence:}} \\
Recovery (AGM) & Immutable revisions; archive $\neq$ erase & \ding{55} \\
\bottomrule
\end{tabular}
\smallskip

\noindent{\footnotesize $^\dag$Argued for the specific construction but not formally established: requires construction of an entrenchment ordering (Gärdenfors \& Makinson 1988) or proof that graph operations encode a transitively relational contraction (Grove 1988), which we do not provide. See representation-theoretic discussion in text.}
\end{table}

\subsection{Worked Example: Belief Revision Through Preference Update}
\label{sec:walkthrough}

We illustrate the formal machinery with a concrete scenario---a user preference update---stepping through each definition to show how the system creates revisions, redirects tags, and propagates downstream impact.

\textbf{Initial state.}\quad The memory graph contains two items:
\begin{itemize}[leftmargin=*]
\item Item $i_1 =$ \texttt{color-pref.decision} with revision $r_1^{(1)}$, where $\varphi(r_1^{(1)}) = \{\langle\texttt{color-pref},\, \texttt{summary},\, \text{``warm tones''}\rangle\}$. Tag $t_{\mathrm{current}} \mapsto r_1^{(1)}$.
\item Item $i_2 =$ \texttt{palette.decision} with revision $r_2^{(1)}$, where $\varphi(r_2^{(1)}) = \{\langle\texttt{palette},\, \texttt{summary},\, \text{``earth-tone palette''}\rangle\}$. Tag $t_{\mathrm{current}} \mapsto r_2^{(1)}$.
\item Edge: $(r_2^{(1)}, \textsc{Depends\_On}, r_1^{(1)}) \in E$---the palette decision depends on the color preference.
\end{itemize}
The belief base is $\mathcal{B}(\tau) = \varphi(r_1^{(1)}) \cup \varphi(r_2^{(1)})$.

\textbf{Step 1: Revision (Definition~\ref{def:revision}).}\quad The user states: ``Actually, I prefer cool tones now.'' The agent invokes revision with input $A = \langle\texttt{color-pref},\, \texttt{summary},\, \text{``cool tones''}\rangle$:
\begin{enumerate}[leftmargin=*]
\item Create $r_1^{(2)}$ with $\varphi(r_1^{(2)}) = \{A\}$.
\item Add edge $(r_1^{(2)}, \textsc{Supersedes}, r_1^{(1)})$ to $E$.
\item Update tag: $\tau' = \tau[t_{\mathrm{current}} \mapsto r_1^{(2)}]$ for item $i_1$.
\end{enumerate}

\textbf{Step 2: Belief base transition.}\quad The new belief base is:
\[
\mathcal{B}(\tau') = \varphi(r_1^{(2)}) \cup \varphi(r_2^{(1)}) = \{A\} \cup \varphi(r_2^{(1)})
\]
The old belief ``warm tones'' is no longer in $\mathcal{B}(\tau')$---the revision $r_1^{(1)}$ remains in the graph but is not tag-referenced, hence excluded from the retrieval surface $\mathcal{B}_{\mathrm{retr}}(\tau')$ per Definition~\ref{def:two-tier}. Postulate verification: \emph{Success} ($A \in \mathcal{B}(\tau')$ \checkmark), \emph{Consistency} (no contradictory content since the old revision is excluded \checkmark), \emph{Inclusion} (no new atoms beyond $A$ and surviving beliefs \checkmark).

\textbf{Step 3: Downstream impact.}\quad The agent (or consolidation pipeline) invokes $\textsc{AnalyzeImpact}(r_1^{(2)}, d{=}2)$, which traverses incoming \textsc{Depends\_On} edges to discover that $r_2^{(1)}$ (the palette decision) depends on the now-superseded revision. The impact analysis returns $\{r_2^{(1)}\}$ as a downstream dependent requiring potential re-evaluation. The agent can then decide whether to revise the palette decision (creating $r_2^{(2)}$ with an updated palette reflecting cool tones) or to leave it unchanged if the dependency is not materially affected.

\textbf{Step 4: Provenance and audit.}\quad After the revision, the graph supports the following queries:
\begin{itemize}[leftmargin=*]
\item \emph{Current belief}: Resolve $\tau'(t_{\mathrm{current}})$ for $i_1$ $\rightarrow$ $r_1^{(2)}$ $\rightarrow$ ``cool tones.''
\item \emph{Historical belief}: Resolve $\tau_T(t_{\mathrm{current}})$ for any $T$ before the update $\rightarrow$ $r_1^{(1)}$ $\rightarrow$ ``warm tones.''
\item \emph{Belief evolution}: Follow the \textsc{Supersedes} chain: $r_1^{(2)} \xrightarrow{\textsc{Supersedes}} r_1^{(1)}$.
\item \emph{Rollback}: If the preference change was erroneous, reassign $\tau'' = \tau'[t_{\mathrm{current}} \mapsto r_1^{(1)}]$---an auditable operation recorded in the tag history.
\end{itemize}

This example demonstrates how the item--revision--tag model handles a common agent memory scenario entirely through the formal operations of Section~\ref{sec:formal-postulates}, without any ad hoc logic. The same mechanism applies to any belief update---from simple preference changes to complex multi-dependency decision revisions. A deployed instance of this mechanism, including graph state before and after revision with \textsc{Supersedes} edges, is documented in Section~\ref{sec:belief-revision-practice}. The formal postulates claimed here are empirically verified by a 49-scenario compliance suite (Section~\ref{sec:agm-eval}) that tests each postulate across simple, multi-item, chain, temporal, and adversarial configurations---all passing at 100\%.

\subsection{Formal Avoidance of the Flouris Impossibility}
\label{sec:flouris}

Flouris et~al.~\cite{flouris2005} proved that Description Logics (including those underlying OWL) cannot satisfy the AGM revision postulates, and Qi et~al.~\cite{qi2006} refined this for specific description logic fragments. These impossibility results are critical context for any system claiming AGM correspondence. We prove that they do not apply to our property graph formalism by showing that the formalism satisfies the prerequisites for AGM that description logics violate.

The AGM framework requires a logic $\mathcal{L} = (L, \mathrm{Cn})$ where $L$ is a set of well-formed formulas and $\mathrm{Cn}$ is a consequence operator satisfying three properties~\cite{gardenfors1988}: (i)~\emph{inclusion}: $A \subseteq \mathrm{Cn}(A)$; (ii)~\emph{monotonicity}: if $A \subseteq B$ then $\mathrm{Cn}(A) \subseteq \mathrm{Cn}(B)$; (iii)~\emph{idempotence}: $\mathrm{Cn}(\mathrm{Cn}(A)) = \mathrm{Cn}(A)$. Additionally, AGM requires the \emph{deduction theorem}: $\alpha \in \mathrm{Cn}(A \cup \{\beta\})$ iff $(\beta \to \alpha) \in \mathrm{Cn}(A)$, and \emph{compactness}: if $\alpha \in \mathrm{Cn}(A)$ then $\alpha \in \mathrm{Cn}(A_0)$ for some finite $A_0 \subseteq A$.

\textbf{Analytical status.}\quad The logic $\mathcal{L}_G$ defined below is an \emph{analytical framework} used to establish that the system's operations satisfy the AGM postulates. It is \emph{not} an implemented query engine or reasoning system: the production system does not evaluate propositional formulae over $L_G$, perform consequence closure, or check entailment at runtime. The ground atoms in $\mathrm{At}_G$ correspond to node properties in the Neo4j property graph; the propositional connectives and consequence operator $\mathrm{Cn}_G$ exist solely to provide the logical machinery needed for the formal proofs. This distinction is important: the formal guarantees hold because the \emph{architectural operations} (revision creation, tag reassignment, deprecation) structurally enforce the postulates, not because the system performs logical reasoning.

\begin{definition}[Memory Graph Logic]
\label{def:graph-logic}
Define the logic $\mathcal{L}_G = (L_G, \mathrm{Cn}_G)$ as follows.

\textbf{Atoms.}\quad An atomic sentence in $L_G$ is a ground triple $\langle \textit{item}, \textit{predicate}, \textit{value} \rangle$ where \textit{item} is a memory reference URI, \textit{predicate} is one of a fixed set of property names (summary, topic, keyword, type, tag, edge-type), and \textit{value} is a string literal. For example, $\langle\texttt{api-design.decision},\, \texttt{summary},\, \text{``use gRPC''}\rangle$ asserts that the item \texttt{api-design.decision} carries the summary ``use gRPC.'' Let $\mathrm{At}_G$ denote this set of ground atoms; each revision's structured metadata $\varphi(r)$ (Definition~\ref{def:belief-base}) maps to a finite subset of $\mathrm{At}_G$.

\textbf{Full language.}\quad $L_G$ is the closure of $\mathrm{At}_G$ under the standard propositional connectives $\{\neg, \wedge, \vee, \to\}$. \emph{Crucially, $L_G$ is richer than the set of ground triples alone.} Compound formulae such as $\langle i_1, p_1, v_1 \rangle \wedge \langle i_2, p_2, v_2 \rangle$ and $\neg \langle i, p, v \rangle$ are well-formed sentences in $L_G$. This is essential: the supplementary AGM postulates (Superexpansion $K*7$, Subexpansion $K*8$) require revision by conjunctions $A \wedge B$, and the Levi identity requires $\neg A$ to be a sentence in $L_G$. Both are satisfied by construction.

\textbf{Belief base vs.\ full language.}\quad While $L_G$ contains compound formulae, the \emph{belief base} $\mathcal{B}(\tau)$ (Definition~\ref{def:belief-base}) is always a finite set of ground atoms drawn from $\mathrm{At}_G$---it is not deductively closed (following Hansson's~\cite{hansson1999} belief base framework). The full language $L_G$ provides the logical apparatus needed to state and verify the postulates; the belief base provides the finite, computationally tractable representation that agents actually operate on. This two-level distinction---a rich language for reasoning \emph{about} the base, a flat atom set for the base itself---is precisely the belief base approach.

\textbf{Consequence.}\quad The operator $\mathrm{Cn}_G$ is classical propositional closure over $L_G$. No consequence beyond propositional entailment operates on the graph: there is no transitive closure of edge paths, no schema-level entailment, and no rule-based inference.

\textbf{Graph traversal isolation.}\quad Edge traversal operations (\textsc{TraverseEdges}, \textsc{AnalyzeImpact}, \textsc{ShortestPath}) are \emph{query-time retrieval operations}, not logical inference operations. They compute structural reachability over the edge set $E$ and return results to the calling agent, but their outputs never enter the belief base $\mathcal{B}(\tau)$. Formally: let $\mathrm{Traverse}(E, r, d)$ denote the set of revisions reachable from $r$ within depth $d$ via edges in $E$. The belief base is defined exclusively by tag assignments (Definition~\ref{def:belief-base}): $\mathcal{B}(\tau) = \bigcup_{t \in \mathrm{dom}(\tau)} \varphi(\tau(t))$. Traversal results are a function of $E$ and the starting revision; they do not modify $\tau$ and therefore cannot alter $\mathcal{B}(\tau)$. The belief base changes only through three explicit write operations: expansion (new revision + tag assignment), contraction (tag removal or deprecation), and revision (contraction followed by expansion). This strict separation ensures that the transitive closure computations performed by \textsc{AnalyzeImpact}---which are expressively equivalent to Datalog-level inference---remain isolated from the belief revision mechanism.

\textbf{Negation.}\quad Explicit negation is available in $L_G$: for any ground atom $\alpha \in \mathrm{At}_G$, the formula $\neg\alpha$ is a sentence in $L_G$. This provides the formal apparatus needed for the Levi identity ($K * A = (K \div \neg A) + A$) and the Consistency postulate. We distinguish this \emph{formal} negation from the system's \emph{operational} closed-world assumption (CWA): at the retrieval level, a ground atom not present in $\mathcal{B}_{\mathrm{retr}}(\tau)$ is treated as absent. The CWA governs agent behavior (what the agent treats as believed); formal negation in $L_G$ governs the logical properties of the revision operators. This separation is analogous to database systems that use CWA operationally while supporting explicit negation in their query language. We acknowledge that the CWA introduces a non-monotonic \emph{operational} semantics (adding a fact to $\mathcal{B}$ can invalidate previously held CWA-derived absences), but the \emph{formal} revision operators are defined over $L_G$ with classical propositional semantics, following Reiter's~\cite{reiter1978} distinction between closed-world reasoning as a meta-level default and object-level logical entailment.

\textbf{Satisfaction system.}\quad Following Aiguier et~al.~\cite{aiguier2017}, we instantiate the satisfaction triple $(\mathcal{L}_G, \mathcal{M}_G, \models_G)$ as: $\mathcal{L}_G$ is the propositional language defined above; $\mathcal{M}_G$ is the set of all truth assignments over $\mathrm{At}_G$ (equivalently, subsets of the Herbrand base); and $\models_G$ is classical propositional satisfaction. This triple inherits the properties of classical propositional logic, establishing $\mathcal{L}_G$ as a well-defined satisfaction system in the sense of Aiguier et~al.
\end{definition}

\begin{proposition}[$\mathcal{L}_G$ satisfies AGM prerequisites]
\label{prop:prerequisites}
The memory graph logic $\mathcal{L}_G$ satisfies inclusion, monotonicity, idempotence, the deduction theorem, and compactness.
\end{proposition}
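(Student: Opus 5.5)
The plan is to reduce everything to classical propositional logic. By Definition~\ref{def:graph-logic}, $L_G$ is the closure of the atom set $\mathrm{At}_G$ under $\{\neg,\wedge,\vee,\to\}$, and $\mathrm{Cn}_G$ is classical propositional closure. Concretely, $\alpha\in\mathrm{Cn}_G(\Gamma)$ iff every truth assignment $v\in\mathcal{M}_G$ that satisfies all of $\Gamma$ also satisfies $\alpha$. The atoms are ground triples, i.e.\ strings, so $\mathrm{At}_G$ is a countable set of propositional variables. The triple structure plays no logical role here, because no axioms relate distinct atoms. $\mathcal{L}_G$ is therefore isomorphic to standard propositional logic over a countable (possibly countably infinite) set of variables. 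I will first state this isomorphism explicitly, mapping each atom to a fresh variable. After that, each of the five properties is a standard fact transported along the isomorphism.

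The three Tarskian properties follow directly from the semantic definition of $\mathrm{Cn}_G$. For \emph{inclusion}, any model of $\Gamma$ satisfies each member of $\Gamma$. For \emph{monotonicity}, if $\Gamma\subseteq\Delta$, every model of $\Delta$ is a model of $\Gamma$, so any consequence of $\Gamma$ is a consequence of $\Delta$. For \emph{idempotence}, $\Gamma$ and $\mathrm{Cn}_G(\Gamma)$ have the same models: one direction comes from inclusion, the other from the definition of $\mathrm{Cn}_G$. Hence $\mathrm{Cn}_G(\mathrm{Cn}_G(\Gamma))=\mathrm{Cn}_G(\Gamma)$. The \emph{deduction theorem} uses the truth table of $\to$. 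Suppose $\alpha\in\mathrm{Cn}_G(\Gamma\cup\{\beta\})$ and $v\models\Gamma$. Either $v\not\models\beta$, so $v\models\beta\to\alpha$ vacuously, or $v\models\beta$, so $v\models\alpha$ and again $v\models\beta\to\alpha$. Conversely, if $v\models\Gamma\cup\{\beta\}$ and $v\models\beta\to\alpha$, then $v\models\alpha$ by modus ponens.

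I expect \emph{compactness} to be the only step needing real care, since $\mathrm{At}_G$ ranges over arbitrary string literals and may be infinite. Two routes work. The first is to invoke the compactness theorem of propositional logic: $\alpha\in\mathrm{Cn}_G(\Gamma)$ iff $\Gamma\cup\{\neg\alpha\}$ is unsatisfiable, and by compactness some finite subset is unsatisfiable, which gives a finite $\Gamma_0\subseteq\Gamma$ with $\alpha\in\mathrm{Cn}_G(\Gamma_0)$. The second is to appeal to soundness and completeness of a Hilbert-style calculus, where any derivation uses only finitely many premises. I would take the first route and cite the standard theorem. It also covers the uncountable case, via Tychonoff or the ultrafilter lemma, in case the alphabet for literals is not assumed countable.

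Finally, I would add a remark tying this back to Flouris et~al.\ and Aiguier et~al. Description logics fail AGM-compliance because they are not \emph{decomposable}, meaning they lack negation and disjunction closure in the required sense. $\mathcal{L}_G$ contains $\neg$ and $\to$, so it satisfies the closure conditions under which Flouris et~al.\ show AGM-compliance is possible. This is exactly the content of the satisfaction triple $(\mathcal{L}_G,\mathcal{M}_G,\models_G)$ being classical.
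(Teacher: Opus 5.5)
Your proposal is correct and follows the same route as the paper: identify $\mathcal{L}_G$ with classical propositional logic over the atoms $\mathrm{At}_G$, then inherit the five properties from it. The paper simply cites this inheritance, whereas you check each property from the semantic definition of $\mathrm{Cn}_G$ and get compactness from the standard compactness theorem, so yours is a more detailed version of the same argument.
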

\begin{proof}
$\mathcal{L}_G$ is a fragment of classical propositional logic (ground atoms with standard connectives). Classical propositional logic is the canonical example of a Tarskian logic satisfying all five properties~\cite{gardenfors1988}. Since $\mathrm{Cn}_G$ is the restriction of classical $\mathrm{Cn}$ to formulas in $L_G$, and $L_G$ is closed under propositional connectives, all five properties are inherited. The deduction theorem holds because $L_G$ includes $\to$; compactness holds because propositional logic is compact.
\end{proof}

The key structural differences from description logics that cause the Flouris impossibility to fail for $\mathcal{L}_G$ are:

\begin{enumerate}[leftmargin=*]
\item \textbf{No TBox/ABox separation}: DL maintains an intensional layer (TBox, concept hierarchies) and an extensional layer (ABox, instance assertions) whose interactions create non-monotonic revision pathologies. $\mathcal{L}_G$ has a single flat layer: all facts---revision content, edge relationships, tag assignments---exist at the same logical level.

\item \textbf{Closed-world operational semantics}: DL's open-world assumption means absence of a fact does not imply its negation, complicating the Inclusion postulate. $\mathcal{L}_G$ provides explicit formal negation ($\neg\alpha \in L_G$ for any atom $\alpha$) while using closed-world semantics at the operational level: facts not referenced by any tag are excluded from $\mathcal{B}_{\mathrm{retr}}(\tau)$ (Definition~\ref{def:two-tier}), and the system treats their absence as operationally equivalent to negation. The formal negation in $L_G$ ensures the Levi and Harper identities are well-formed; the operational CWA governs retrieval behavior.

\item \textbf{No complex constructors}: DL constructors (disjunction, existential quantification, role inverses, number restrictions) create closure obligations that conflict with AGM's minimality requirements~\cite{flouris2005}. The edge types in $\mathcal{L}_G$ are simple labeled directed relationships; closure is propositional, not concept-constructive.
\end{enumerate}

We note that Aiguier et~al.~\cite{aiguier2017} proposed belief revision via satisfaction systems as a general framework accommodating non-classical logics. Our approach is compatible: $\mathcal{L}_G$ can be viewed as a satisfaction system where models are sets of ground atoms and satisfaction is classical. Delgrande et~al.~\cite{delgrande2018} showed that AGM-style revision can be obtained with ``extremely little'' beyond a language with sentences satisfied at models, confirming that our minimal construction is sufficient and that formal AGM results over weak logics are not vacuous. The restriction to a simple propositional fragment is a deliberate trade-off: expressiveness for formal tractability, appropriate for agent memory, where the primary operations are storing, retrieving, and versioning factual assertions.

\textbf{AGM-compliance verification.}\quad Flouris et~al.~\cite{flouris2005} identified necessary conditions for a logic to be \emph{AGM-compliant}---admitting operators satisfying all AGM postulates. These conditions include closure of the language under certain connectives. Since $L_G$ is closed under $\{\neg, \wedge, \vee, \to\}$ (Definition~\ref{def:graph-logic}), it satisfies these closure requirements. Combined with the Tarskian properties proved in Proposition~\ref{prop:prerequisites}, $\mathcal{L}_G$ meets the Flouris et~al.\ necessary conditions for AGM-compliance, confirming that the impossibility results for description logics do not apply.

\textbf{The expressiveness trade-off.}\quad The formal results above hold precisely because $\mathcal{L}_G$ is a \emph{weak} logic. A logic where every ground triple is an independent propositional atom has essentially no inferential structure---logical equivalence reduces to syntactic identity (Proposition~\ref{prop:extensionality}), Closure is trivially satisfied for a finite set of independent atoms, and the Flouris impossibility is avoided because the logic lacks the complex constructors (concept disjunction, existential quantification, role inverses, number restrictions) that cause description logics to fail AGM-compliance.

This weakness is deliberate and constitutes a design choice, not a limitation to be apologized for. The contribution is not ``AGM holds over a strong logic''---it is the \emph{bridge}: showing that the specific architectural choices made independently for production reasons (immutable revisions, mutable tag pointers, typed edges) happen to satisfy formal rationality postulates that the belief revision community has studied for four decades. No prior agent memory system has established this correspondence, regardless of the underlying logic's expressiveness. The logic is simple because agent memory \emph{is} simple at the atomic level---storing, retrieving, and versioning factual assertions, not performing open-world concept reasoning. Starting with a tractable fragment is the responsible scientific approach; extending to richer logics (Section~\ref{sec:future}) is the natural next step, not a prerequisite for the bridge result to be meaningful. However, we are explicit about what $\mathcal{L}_G$ \emph{cannot} express: subsumption hierarchies (``all REST preferences are API preferences''), role composition (``if $X$ depends on $Y$ and $Y$ depends on $Z$, then $X$ transitively depends on $Z$''), disjointness axioms (``preferred and deprecated are mutually exclusive''), and cardinality constraints (``at most one active preference per topic''). Any strengthening of $\mathcal{L}_G$ toward these expressive features would re-encounter Flouris-type problems. The path toward richer logics---which we identify as a direction for future work (Section~\ref{sec:future})---would likely require Aiguier et~al.'s~\cite{aiguier2017} extended treatment of belief revision via satisfaction systems in non-classical logics, potentially targeting fragments of description logics that Qi et~al.~\cite{qi2006} showed are still AGM-compatible.

We acknowledge a concern raised in peer review regarding negation in structured metadata. What is the negation of $\langle\texttt{prefs},\, \texttt{summary},\, \text{``Prefers REST APIs''}\rangle$? Formally, $\neg\langle\texttt{prefs},\, \texttt{summary},\, \text{``Prefers REST APIs''}\rangle$ is a well-formed sentence in $L_G$ asserting that this specific atom is false. Operationally, the system realizes this through the closed-world assumption: the atom's absence from $\mathcal{B}_{\mathrm{retr}}(\tau)$ is treated as if $\neg\alpha$ holds at the retrieval surface. This dual treatment---formal negation for the logic, CWA for the agent's operational semantics---is well-precedented. Our treatment follows the Answer Set Programming tradition: Gelfond and Lifschitz~\cite{gelfond1988} introduced stable model semantics with default negation (\textbf{not}\,$p$: absence of evidence), and Gelfond and Lifschitz~\cite{gelfond1991} later introduced classical (strong) negation ($\neg p$: explicit falsification) alongside default negation, establishing the three-valued epistemic state---$p$ is true, $\neg p$ is true, or $p$ is unknown---directly relevant to cognitive memory distinguishing ``we don't know if $X$'' from ``we know $X$ is false.'' Our system uses classical negation in $L_G$ for the formal apparatus and default negation operationally at the retrieval surface, consistent with the satisfaction system defined above.

\subsection{Computational Complexity}

Revision creation (Definition~\ref{def:revision}) requires a bounded number of graph operations: one node creation, one edge creation (\textsc{Supersedes}), and one tag reassignment. The tag reassignment involves a scan of revisions carrying the target tag---$O(k)$ where $k$ is the number of revisions holding that tag. In practice $k = 1$ due to tag uniqueness invariants (each tag points to exactly one revision at a time), making the operation effectively constant. Contraction (Definition~\ref{def:contraction}) requires identifying tag-referenced revisions whose content contains the contracted belief and removing or deprecating those tags---$O(|\mathrm{dom}(\tau)|)$ in the number of active tags, which is small in practice ($<100$ for typical deployments). Expansion (Definition~\ref{def:expansion}) is $O(1)$.

Computing the belief state $\mathcal{B}(\tau)$ (Definition~\ref{def:belief-base}) requires collecting content from all tag-referenced revisions---$O(|\mathrm{dom}(\tau)|)$---followed by closure. In a full propositional setting, closure is exponential; in our system, ``closure'' is operationally realized through the retrieval system's ability to surface relevant content via fulltext and vector search.

Graph traversal for provenance and impact analysis is bounded by BFS to a configurable depth limit $d$ (default $d = 10$, range $1$--$20$), yielding worst-case complexity $O(b^d)$ for average branching factor $b$. In deployed systems with typical memory graphs ($b \approx 3$--$5$), traversals complete in under 100\,ms.

\section{Hybrid Retrieval}
\label{sec:retrieval}

\subsection{The Retrieval Gap}

Consider an agent querying: ``What shade does the user prefer?'' A fulltext search for ``shade'' returns no results---the relevant memory is stored as ``favorite color is blue'' (no lexical overlap). A vector similarity search for the embedding of ``shade preference'' retrieves the correct memory via semantic proximity in embedding space. Neither modality alone is complete; the system must combine both. Separately, the graph's edge traversal operations (Section~\ref{sec:architecture}) enable structural navigation---e.g., finding related memories via a \textsc{Contains} edge from a ``user preferences'' bundle---but these are exposed as explicit graph navigation tools (\texttt{get\_dependencies}, \texttt{get\_edges}, \texttt{find\_path}), not as an implicit retrieval signal within the search pipeline.

\subsection{Two-Branch Hybrid Retrieval Pipeline}

Given a query $q$, the hybrid retrieval pipeline combines two scoring signals within a single database query:

\begin{enumerate}[leftmargin=*]
\item \textbf{Fulltext}: BM25-scored fulltext index query with Lucene. Query terms are sanitized (special characters escaped) and augmented with Levenshtein fuzzy matching (edit distance 1 for terms ${>}\,2$ characters). Available on all tiers.
\item \textbf{Vector Similarity}: The query is embedded via a configurable embedding model (default: 1536-dimensional text embeddings). Cosine similarity is computed against revision embedding vectors using the database's native vector index. Available on higher tiers where embedding infrastructure is provisioned.
\end{enumerate}

The two branches are combined via \texttt{UNION ALL} within a single Cypher query, avoiding the overhead of multiple database round-trips. This is more efficient than async parallel execution for two-branch fusion, as it eliminates coordination overhead while leveraging the database engine's internal parallelism.

Let $\mathcal{R}_\ell(q)$ and $\mathcal{R}_v(q)$ denote the ranked result sets returned by the fulltext and vector branches respectively for query~$q$. For each candidate memory $m$ appearing in either branch, we define branch-specific scores:
\begin{align}
s_\ell(m) &= \text{BM25}(q, m) \label{eq:bm25} \\
s_v(m) &= \beta \cdot \cos\bigl(\mathbf{e}(q),\, \mathbf{e}(m)\bigr) \label{eq:vector}
\end{align}
where $\mathbf{e}(\cdot)$ denotes the embedding function and $\beta = 0.85$ is a calibration factor that balances cosine similarity scores against the BM25 scoring range. A type-aware weight reflects the structural precision of each match:
\begin{equation}
w(m) = \begin{cases}
1.0 & \text{if } m \text{ is an item match} \\
0.9 & \text{if } m \text{ is a revision match} \\
0.8 & \text{if } m \text{ is an artifact match}
\end{cases}
\label{eq:type-weight}
\end{equation}
The final merged score for each unique memory is:
\begin{equation}
S(q, m) = w(m) \cdot \max\bigl(s_\ell(m),\; s_v(m)\bigr)
\label{eq:merged}
\end{equation}
Score calibration is achieved through the source-specific weighting factor ($\beta = 0.85$ for vector similarity) and type-aware multipliers, rather than per-query normalization. Max-based fusion selects the single strongest signal for each candidate, unlike reciprocal rank fusion~(RRF), which sums over rank positions, or linear combination methods, which compute weighted averages. This is a deliberate design choice: when a memory matches strongly on one modality (e.g., exact lexical match), a weak score from another modality (e.g., low cosine similarity due to vocabulary mismatch) should not dilute the result.

\textbf{Graph navigation as a complementary capability.}\quad Edge traversal operations (\textsc{TraverseEdges}, \textsc{AnalyzeImpact}, \textsc{ShortestPath}) are available as explicit MCP tools that agents invoke when structural reasoning is needed---e.g., ``what depends on this decision?'' or ``trace the provenance of this conclusion.'' These operations complement the search pipeline by providing \emph{structural} retrieval paths that neither lexical nor semantic similarity can discover. However, they are agent-initiated navigation operations, not automatic signals fused into the scoring function.

\textbf{Hyperparameter disclosure.}\quad The scoring function contains heuristic defaults that have not been empirically optimized: (i)~the calibration factor $\beta = 0.85$ balances cosine similarity against BM25 scores; this value was chosen based on observed score distributions in the deployed system but has not been validated via sensitivity analysis; (ii)~the type-aware weights $(1.0, 0.9, 0.8)$ for item/revision/artifact matches reflect a structural precision prior but are not empirically calibrated; (iii)~the choice of max-based fusion over RRF or convex combination is argumentatively motivated but Bruch et~al.~\cite{bruch2023} have shown empirically that convex combination can outperform both RRF and simpler methods. We plan sensitivity analyses for $\beta \in [0.5, 1.0]$ and the weight vector, as well as comparison against RRF and convex combination on the same retrieval tasks (Section~\ref{sec:future}). Similarly, the Dream State circuit breaker threshold of 50\% maximum deprecation per batch is a safety-motivated conservative default; Section~\ref{sec:future} describes planned analysis varying this threshold from 10\% to 90\% on synthetic graphs with ground-truth relevance labels.

The response includes a \texttt{search\_mode} indicator (``fulltext'' or ``hybrid''), making the retrieval strategy transparent to the calling agent. This transparency allows agents to adjust their confidence in results based on the modality that produced them.

\emph{Design note: Defense in depth at query boundaries.} User-provided search queries are untrusted input. Sanitization at the query construction layer prevents injection into the fulltext index, independent of any upstream validation.

\subsection{Embedding Generation}

Embeddings are generated asynchronously after each revision creation via a fire-and-forget background task. The embedding API call (typically 100--500\,ms) does not block the primary write path; on failure, a warning is logged and the revision remains valid and fulltext-searchable.

\begin{principle}[Non-Blocking Enhancement]
Enrichment operations (embeddings, summaries, classifications) must never block the primary write path. A memory that takes 500\,ms to store because of an embedding API call is a memory that agents will avoid storing.
\end{principle}

A key architectural decision is that embeddings are generated via direct API calls from the server process, not through database vendor plugins. This ensures operation on any database tier (including free), preserves embedding provider flexibility (configurable per deployment region and access tier), and supports credential rotation without service redeployment.

\emph{Design note: Infrastructure independence.} Core capabilities must not depend on vendor-specific plugins or premium tiers. Direct API integration preserves provider flexibility and reduces infrastructure lock-in.

\subsection{Embedding Content Construction}

Each revision's embedding is computed over a composite text field (\texttt{\_search\_text}) constructed server-side from:

\begin{itemize}[leftmargin=*]
\item Item name and kind (structural context).
\item Revision summary (semantic content).
\item Keywords and topics from metadata (domain signals).
\item A client-provided \texttt{embedding\_text} override (when the client has better context than the server can infer).
\end{itemize}

This composite construction ensures that embeddings capture both the content and the structural context of each memory, improving retrieval relevance compared to embedding only the raw text.

\subsection{Retrieval Design Observations}
\label{sec:retrieval-formal}

The preceding subsections described the hybrid retrieval pipeline and its scoring function. This subsection characterizes three design properties of the retrieval system. We distinguish these from the formal AGM contribution (Section~\ref{sec:formal}), which constitutes the paper's primary theoretical result. The AGM correspondence is the \emph{formal} contribution establishing correctness of belief change; the retrieval properties below are \emph{engineering} observations that justify the multi-modal design; and the cost scaling argument (Section~\ref{sec:context-window}) is the \emph{motivating observation} for retrieval-based memory over context-window extension.

\subsubsection{Coverage Complementarity}

We first observe that the two retrieval modalities have complementary failure modes, making their combination structurally necessary for high recall. Separately, the graph's explicit edge traversal operations provide a third, agent-initiated retrieval pathway that complements the search pipeline.

\begin{definition}[Modality-Specific Recall]
\label{def:modality-recall}
Let $\mathcal{M}$ be a memory corpus and $q$ a query with ground-truth relevant set $\mathcal{G}(q) \subseteq \mathcal{M}$. For a retrieval branch $b \in \{\ell, v\}$ (fulltext, vector), let $\mathcal{R}_b(q) \subseteq \mathcal{M}$ denote the set of memories retrieved by branch $b$. The \emph{recall} of branch $b$ is:
\begin{equation}
\operatorname{Recall}_b(q) = \frac{|\mathcal{R}_b(q) \cap \mathcal{G}(q)|}{|\mathcal{G}(q)|}
\label{eq:branch-recall}
\end{equation}
The \emph{hybrid recall} under the union of both branches is:
\begin{equation}
\operatorname{Recall}_H(q) = \frac{|\bigl(\mathcal{R}_\ell(q) \cup \mathcal{R}_v(q)\bigr) \cap \mathcal{G}(q)|}{|\mathcal{G}(q)|}
\label{eq:hybrid-recall}
\end{equation}
\end{definition}

\textbf{Observation 1} (\emph{Coverage Complementarity}).\quad
That hybrid recall $\operatorname{Recall}_H(q) \geq \max_b \operatorname{Recall}_b(q)$ follows trivially from set union---any fusion method that unions result sets has this property. The non-trivial design claim is that each modality covers cases the other \emph{systematically misses}, making the combination structurally necessary rather than merely additive. We illustrate with two witness scenarios from the deployed system, plus a third scenario demonstrating the complementary role of graph navigation:

\emph{Fulltext-unique}: A memory containing the acronym ``HNSW'' is retrieved by BM25 for the query ``HNSW configuration'' via exact lexical match, but the embedding model maps the technical acronym to a generic embedding region.

\emph{Vector-unique}: A memory ``favorite color is blue'' is retrieved for ``What shade does the user prefer?'' via semantic embedding proximity, despite zero lexical overlap.

\emph{Graph-navigation (agent-initiated)}: A deployment decision connected via a \textsc{Depends\_On} chain to a build failure is surfaced for ``Why did the build fail?'' when the agent explicitly invokes \texttt{get\_dependencies} or \texttt{analyze\_impact}. This structural reasoning requires multi-hop traversal that neither lexical nor semantic similarity can perform, demonstrating why graph navigation operations complement the search pipeline.

These are not contrived edge cases; they reflect the well-documented failure modes of lexical retrieval (vocabulary mismatch) and dense retrieval (rare terms, out-of-distribution inputs). Empirical validation of the coverage improvement on LoCoMo and LongMemEval, including per-modality ablation, is planned (Section~\ref{sec:future}).

\subsubsection{Precision Preservation under Max-Fusion}

As the memory corpus grows, a key concern is whether multi-modal fusion introduces false positives that degrade precision. We argue that max-based fusion (Equation~\ref{eq:merged}) provides a precision-preserving property, though we emphasize this is a design-level argument, not a formal IR result.

\begin{definition}[Branch Precision]
\label{def:branch-precision}
For branch $b$ and query $q$, let $\mathcal{R}_b^k(q)$ denote the top-$k$ results ranked by branch-specific score $s_b$. The \emph{precision at $k$} is:
\begin{equation}
P_b@k(q) = \frac{|\mathcal{R}_b^k(q) \cap \mathcal{G}(q)|}{k}
\label{eq:branch-precision}
\end{equation}
Let $\mathcal{R}_H^k(q)$ denote the top-$k$ results ranked by the merged score $S(q, m)$ from Equation~\ref{eq:merged}. The hybrid precision is $P_H@k(q) = |\mathcal{R}_H^k(q) \cap \mathcal{G}(q)| / k$.
\end{definition}

\begin{observation}[Precision Preservation under Max-Fusion]
\label{prop:precision}
For any query $q$ and cutoff $k$, assuming uniform type weights within each match category and well-calibrated branch scores:
\begin{equation}
P_H@k(q) \geq \max_{b \in \{\ell, v\}} P_b@k(q)
\label{eq:precision-bound}
\end{equation}
That is, the hybrid ranking never has lower precision than the best individual branch, provided score calibration is adequate.
\end{observation}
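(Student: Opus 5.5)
The plan is first to turn the informal hypotheses of Observation~\ref{prop:precision} into explicit conditions, and then to prove Equation~\ref{eq:precision-bound} by a counting argument on the hybrid top-$k$ list. \emph{Uniform type weights within each match category} will mean that $w(m)$ is constant over the candidates being compared. Since $w(m) > 0$, the factor $w(m)$ in Equation~\ref{eq:merged} then only rescales every score by the same amount. It therefore does not change the order, and I will drop it and rank by $\max(s_\ell(m), s_v(m))$.

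\emph{Well-calibrated branch scores} must be made precise, because without it the claim is false. One irrelevant memory with an inflated cosine score (the CombMAX noise issue of Bruch et~al.~\cite{bruch2023}) can enter the hybrid top-$k$ and push out a relevant item that the lexical branch ranked highly. I will write out this two-item counterexample explicitly, to show that some calibration hypothesis is necessary. I will then adopt the following \emph{cross-branch separation} condition. For every relevant $m \in \mathcal{G}(q)$ and every irrelevant $m' \notin \mathcal{G}(q)$, if $m$ lies in the top-$k$ of some branch $b$, then $s_b(m) \geq s_{b'}(m')$ for both branches $b'$. In words, a branch's confident relevant hits are never outscored, in either branch's units, by an irrelevant candidate. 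The factor $\beta$ in Equation~\ref{eq:vector} is exactly the knob that is meant to secure this common scale.

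With these assumptions the argument runs as follows. Fix $b^\ast = \arg\max_b P_b@k(q)$, and let $C = \mathcal{R}_{b^\ast}^k(q) \cap \mathcal{G}(q)$, so that $|C| = k \cdot P_{b^\ast}@k(q)$.
\begin{enumerate}
\item Each $m \in C$ satisfies $S(q,m) \geq s_{b^\ast}(m)$, because the max can only raise a score.
\item By separation, every irrelevant $m'$ satisfies $S(q,m') = \max_{b'} s_{b'}(m') \leq s_{b^\ast}(m) \leq S(q,m)$. Hence every element of $C$ ranks at or above every irrelevant candidate in the merged ordering. Ties are broken in favour of relevant items, or else the conclusion is stated up to ties.
\item It follows that the hybrid top-$k$ list cannot contain an irrelevant item while some element of $C$ lies outside it. Either all of $C$ is included, or the list is filled entirely by relevant items.
\item In both cases $|\mathcal{R}_H^k(q) \cap \mathcal{G}(q)| \geq |C|$. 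Dividing by $k$ gives Equation~\ref{eq:precision-bound}.
\end{enumerate}

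The main obstacle is the calibration step. Separation is a strong condition, and the honest presentation is as a conditional statement: the bound holds exactly when relevant top-$k$ hits dominate irrelevant candidates across branches. If the stated observation is challenged, I would first try to weaken separation to a condition that only needs to hold on the union $\mathcal{R}_\ell^k(q) \cup \mathcal{R}_v^k(q)$ of the two top-$k$ lists. This should suffice because, for a candidate outside both lists, each of its branch scores is at most that branch's $k$-th score. Such a candidate can therefore enter the hybrid top-$k$ only by beating the $k$-th score of the other branch, which a threshold form of calibration rules out. I would state plainly that this remains a design-level guarantee whose antecedent must be checked empirically, for example by the planned sensitivity analysis over $\beta$.
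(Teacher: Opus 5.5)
Your proposal is correct and follows the same route as the paper's argument. Both fix the best branch $b^\ast$, use that the max can only raise the merged score of its relevant top-$k$ hits, and attribute any displacement to a strong score on the other branch. Your cross-branch separation condition is the precise form of the paper's informal step that a displacing memory is ``a genuinely strong match on some modality.'' Without such a hypothesis, that step does not imply the displacer is relevant, exactly as your counterexample shows.

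One small tightening: you assume $w$ is constant across all candidates, which is stronger than the stated hypothesis, since the category weights differ. Instead, state separation for weighted scores, $w(m)\,s_{b}(m) \geq w(m')\,s_{b'}(m')$. The same chain of inequalities then goes through with the actual category weights $1.0$, $0.9$, $0.8$.
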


\noindent\emph{Argument.}\quad
The merged score $S(q, m) = w(m) \cdot \max(s_\ell(m), s_v(m))$ assigns each memory a score at least as high as its strongest branch-specific score, with type weights $w(m)$ applied uniformly within each match category (preserving relative ordering). Let $b^* = \arg\max_b P_b@k(q)$ be the best-performing branch with top-$k$ set $\mathcal{R}_{b^*}^k(q)$.

For any $m \in \mathcal{R}_{b^*}^k(q)$: $S(q, m) \geq w(m) \cdot s_{b^*}(m)$. A memory $m' \notin \mathcal{R}_{b^*}^k(q)$ can displace $m$ from the top-$k$ of $S$ only if $m'$ scores highly on a \emph{different} branch---but then $m'$ is a genuinely strong match on some modality. The top-$k$ set under $S$ therefore contains at least as many true positives as $\mathcal{R}_{b^*}^k$: (i)~true positives from $b^*$ retain high merged scores, and (ii)~any displacing memory carries strong signal from another modality.

\emph{Caveats.}\quad This argument assumes adequate score calibration across branches. CombMAX is known to be susceptible to noise from poorly-calibrated retrievers producing inflated scores~\cite{bruch2023}: a single branch with inflated scores can dominate the ranking regardless of result quality. The $\beta = 0.85$ calibration factor (Equation~\ref{eq:vector}) partially mitigates this for vector scores, but has not been empirically validated. We acknowledge that the choice of CombMAX over alternatives (RRF, convex combination, learned fusion) is a \emph{design decision} motivated by implementation simplicity and the precision preservation property above, not by empirical superiority on our specific score distributions. Bruch et~al.~\cite{bruch2023} showed that convex combination can outperform both RRF and simpler fusion methods on standard IR benchmarks; it is plausible that an alternative fusion strategy would yield better retrieval quality for our workload. Sensitivity analysis over $\beta$, the type-aware weights, and alternative fusion functions (including RRF and learned combination) is planned as future work (Section~\ref{sec:future}).

In the $k$-bounded case, displacement effects can occur---a newly added irrelevant memory with a spuriously high score could push a true positive below the top-$k$ threshold. The precision preservation argument (Observation~\ref{prop:precision}) mitigates this: max-fusion ensures true positives with strong signals on any branch resist displacement.

\subsubsection{Non-Degradation under Corpus Growth}

\textbf{Observation 2} (\emph{Recall Non-Degradation}).\quad
In the unbounded retrieval case (before $k$-cutoff ranking), adding memories to the corpus cannot reduce the set of retrieved true positives: previously retrieved memories remain indexed and retrievable, while new relevant memories may be discovered. This is a property of any non-destructive index, not specific to our architecture. In the $k$-bounded case, displacement effects can occur---a newly added irrelevant memory with a spuriously high score could push a true positive below the top-$k$ threshold. The precision preservation argument (Observation~\ref{prop:precision}) mitigates this: max-fusion ensures true positives with strong signals on any branch resist displacement. Empirical measurement of recall stability under corpus growth is planned (Section~\ref{sec:future}).

\textbf{Summary.}\quad The AGM belief revision correspondence (Section~\ref{sec:formal}) is the primary formal contribution of this paper, establishing that the architecture's belief change operations satisfy established rationality constraints. The retrieval properties above complement this with engineering justification: coverage complementarity motivates the multi-modal design, the precision preservation argument supports the max-fusion design choice, and non-degradation provides a basic scaling assurance. Together with the cost scaling argument (Section~\ref{sec:context-window}), these results support the overall thesis that retrieval-based memory is a sound alternative to context-window extension. Rigorous empirical validation on LoCoMo and LongMemEval remains the critical next step.

\subsection{Retrieval Semantics Under Belief Revision}
\label{sec:retrieval-revision}

The preceding subsections described what the retrieval pipeline computes; this subsection specifies how it interacts with the formal revision layer---specifically, how deprecated, superseded, and active beliefs are handled during query execution.

\textbf{Deprecated revisions.}\quad Both retrieval branches (fulltext and vector) apply a mandatory filter: only revisions belonging to non-deprecated items ($I_{\mathrm{active}}$, Definition~\ref{def:two-tier}) are candidates for scoring. This filter is enforced at the Cypher query level (a \texttt{WHERE NOT item.deprecated} clause), not at the application level, making it architecturally guaranteed rather than convention-dependent. The consequence is that contraction via deprecation (Definition~\ref{def:contraction}) immediately and completely removes beliefs from the agent's retrieval surface. Neither retrieval branch---including vector similarity, which could otherwise surface semantically proximate deprecated content---can return deprecated items without the explicit \texttt{include\_deprecated=true} flag.

\textbf{Superseded revisions.}\quad A superseded revision---one for which a newer revision exists with a \textsc{Supersedes} edge---is \emph{not} automatically excluded from retrieval. The retrieval surface $\mathcal{B}_{\mathrm{retr}}(\tau)$ is defined by tag assignments and deprecation status, not by the \textsc{Supersedes} edge structure. If a superseded revision still carries an active tag (e.g., \texttt{initial}, \texttt{v1}), it remains retrievable. This is by design: an agent may legitimately need to recall what was \emph{originally} believed (``what was the initial API design decision?'') or to compare past and present beliefs. Restricting retrieval to only the latest revision per \textsc{Supersedes} chain would lose this temporal query capability.

\textbf{Conflict presentation.}\quad When both a current and a superseded belief appear in retrieval results (as observed in the color preference case study, Section~\ref{sec:evaluation}), the retrieval pipeline returns both with their respective scores. The system does not automatically resolve the conflict; instead, it provides the agent with temporal metadata (creation timestamps, revision numbers) that the agent's reasoning layer uses to apply recency preference. This separation is deliberate: the retrieval system's role is to surface relevant beliefs; the agent's role is to reason about which belief to adopt. Incorporating temporal recency as a third retrieval signal---biasing scores toward more recent revisions---is planned as a future enhancement (Section~\ref{sec:future}) but is kept separate from the formal retrieval scoring to maintain clean separation of concerns.

\textbf{Operator audit queries.}\quad The \texttt{include\_deprecated=true} flag expands the retrieval surface from $\mathcal{B}_{\mathrm{retr}}(\tau)$ to the full graph $G$, enabling operators to inspect deprecated and archived beliefs for audit, compliance, or rollback purposes. This flag is never set during normal agent recall; it is exclusively an operator-level capability.

This design closes the loop between the formal revision layer (Section~\ref{sec:formal}) and the retrieval pipeline: belief revision operations (revision, contraction, expansion) modify the tag assignment $\tau$ and deprecation status, which deterministically define the retrieval surface $\mathcal{B}_{\mathrm{retr}}(\tau)$, which in turn bounds what the agent can encounter through any retrieval modality.

\subsection{Bridging Symbolic and Sub-Symbolic Representations}
\label{sec:bridge}

The formal model (Section~\ref{sec:formal}) operates over a symbolic belief base $\mathcal{B}(\tau)$ of ground triples, while the retrieval pipeline (Section~\ref{sec:retrieval}) relies heavily on dense vector embeddings and LLM-generated natural language summaries. This raises a question: how do the symbolic graph layer and the sub-symbolic vector/LLM layer interact, and where does the boundary lie?

The architecture maintains a clear division of responsibilities. The \emph{graph layer} (Neo4j) is the system of record for belief state: items, revisions, tags, edges, and deprecation status define the formal structures $\mathcal{B}(\tau)$, $\tau$, and $E$. All belief revision operations (Definition~\ref{def:revision}--\ref{def:expansion}) act on this layer. The \emph{vector layer} (embeddings stored as revision properties) is a derived index: each revision's embedding is computed from its content after creation and serves exclusively as a retrieval accelerator. Critically, the vector layer never modifies belief state---an embedding cannot create, supersede, or deprecate a revision. The formal properties of Section~\ref{sec:formal} hold independently of whether embeddings are present, absent, or stale.

\textbf{From vectors to revision pointers.}\quad When the vector retrieval branch identifies a semantically relevant embedding, the result is not a raw embedding vector but a \emph{revision kref}---a typed pointer back into the graph. Concretely, each embedding vector is stored as a property on its corresponding \texttt{Revision} node; a vector similarity query returns the node itself, from which the system extracts the revision's kref (e.g., \texttt{kref://project/space/item.kind?r=3}). This pointer is then resolved through the graph layer to obtain the revision's content $\varphi(r)$, metadata, and edge relationships. The vector space thus functions as an alternative \emph{addressing mechanism}---a way to locate revisions by semantic proximity rather than by structural path---but the addressed object is always a graph-native entity subject to the formal belief revision semantics.

\textbf{From LLM outputs to graph operations.}\quad The system ingests LLM-generated content at two points: (i)~during \emph{memory storage}, where an agent's natural language output is captured as a revision's summary, tags, and metadata via the \texttt{memory\_ingest} MCP tool; and (ii)~during \emph{Dream State consolidation} (Section~\ref{sec:dreamstate}), where an LLM assesses existing memories and recommends deprecation, enrichment, or relationship creation. In both cases, the LLM output passes through a structured API boundary that maps it to graph operations: a summary string becomes a revision's \texttt{summary} field (part of $\varphi(r)$), a deprecation recommendation becomes a contraction operation (Definition~\ref{def:contraction}), and a relationship recommendation becomes an edge in $E$. The LLM never directly manipulates the graph; it produces structured recommendations that are validated, filtered by safety guards (Section~\ref{sec:safety}), and then executed as formally defined operations.

\textbf{Formal status.}\quad The bridge between symbolic and sub-symbolic layers is deliberately asymmetric: the vector/LLM layer \emph{reads from} and \emph{writes through} the graph layer, but cannot bypass it. This ensures that the formal properties established in Section~\ref{sec:formal} are preserved regardless of the quality or availability of embeddings and LLM assessments. A deployment with no embedding infrastructure retains all formal guarantees (at the cost of reduced retrieval recall); a deployment with a malfunctioning LLM assessment module is contained by the safety guards. The sub-symbolic components enhance the system's practical utility without entering the formal trust boundary.

\subsection{Client-Side LLM Reranking}
\label{sec:reranking}

When recall returns stacked items---multiple conversation revisions about the same topic (e.g., different sessions discussing the same person)---the system must select the most relevant sibling revision for the current query. Rather than adding a dedicated server-side reranking model, Kumiho delegates this selection to the \emph{consuming agent's own LLM} through a two-stage filtering pipeline.

\textbf{Stage 1: Embedding pre-filter.}\quad Sibling revisions are filtered by embedding cosine similarity to the query using \texttt{text-embedding-3-small} with a threshold of 0.30. This removes obviously irrelevant siblings at negligible cost ($<$\$0.001 per query), reducing the candidate set before the more expensive LLM evaluation.

\textbf{Stage 2: LLM reranking.}\quad The surviving siblings are presented to the LLM with structured metadata---title, summary, extracted facts, entities, events, and implications---alongside the original query. The LLM evaluates each sibling's relevance and selects the best match(es) for the current context.

Three configuration modes accommodate different deployment contexts:

\begin{table}[H]
\centering
\caption{LLM reranking configuration modes.}
\label{tab:reranking-modes}
\footnotesize
\resizebox{\columnwidth}{!}{%
\begin{tabular}{@{}llll@{}}
\toprule
\textbf{Mode} & \textbf{Context} & \textbf{Reranker} & \textbf{Cost} \\
\midrule
\texttt{client} & Agent via MCP & Host agent LLM & Zero \\
\texttt{dedicated} & API / Playground & User's model & User's key \\
\texttt{auto} & Any & Detect context & Adaptive \\
\bottomrule
\end{tabular}%
}
\end{table}

In the \texttt{client} mode---the primary deployment path for MCP-integrated agents---the host agent (Claude, GPT-4o, etc.) performs reranking as part of its normal response generation. The memory layer returns structured sibling metadata; the agent evaluates it alongside the conversation context. This is \emph{more accurate} than any standalone reranker because a frontier model with full conversation context outperforms a lightweight model operating on a query string alone. Critically, this costs nothing: the reranking is subsumed into the agent's existing inference call.

This design embodies the LLM-Decoupled Memory principle (Section~\ref{sec:decoupling}): the memory layer provides structured data; the consumer's own intelligence performs selection. As agent models improve, reranking quality improves automatically without any system changes. The memory architecture never needs to upgrade its reranker---it delegates to whatever model the consumer is already running.

\textbf{Empirical impact.}\quad On LoCoMo-Plus goal-type questions (Section~\ref{sec:locomo-plus}), retrieval hit rate improved from ${\sim}$0\% to 100\% after two changes: (i)~including the primary (published) revision in the sibling candidate list so the LLM evaluates \emph{all} revisions, not just non-published siblings; and (ii)~expanding the structured metadata presented to the reranker to include extracted facts, entities, events, and implications. The fix was structural, not model-dependent: both GPT-4o and GPT-4o-mini achieved the same retrieval improvement.

\section{The Dream State: Asynchronous Consolidation}
\label{sec:dreamstate}

\subsection{Motivation and Prior Art}

During sleep, the human brain replays recent experiences, extracts patterns, consolidates episodic memories into semantic knowledge, and prunes redundant connections~\cite{rasch2013}. The Dream State service mirrors this process for AI agent memory, converting a growing collection of raw episodic memories into a cleaner, better-organized memory graph that enables faster and more accurate retrieval.

\textbf{Prior art.}\quad The idea of asynchronous background consolidation is not new. Letta's sleep-time compute~\citeyearpar{letta-sleep2025} explicitly implements background agents that reorganize memory during idle periods, drawing the same biological metaphor. Google's Vertex AI Memory Bank and Amazon's Bedrock AgentCore Memory both perform asynchronous background extraction. Our contribution is not the concept of offline consolidation but the \emph{safety architecture} surrounding it: the specific mechanisms described in Section~\ref{sec:safety}---published-item protection, circuit breakers, dry-run validation, cursor-based resumption, and auditable reports---are, to our knowledge, absent from prior consolidation systems and address the critical question of what happens when automated memory management goes wrong.

\subsection{Event-Driven Architecture}

The Dream State processes the system's event stream with cursor-based semantics. Events include \texttt{revision.created}, \texttt{edge.created}, and \texttt{revision.deprecated}. The cursor position is persisted on a dedicated internal item (\texttt{\_dream\_state}), enabling resume after interruption. On first run (no cursor), the system replays available history from the beginning.

Trigger mechanisms include: scheduled execution (e.g., nightly at a configured hour); event cursor idle detection; memory count threshold; and explicit API invocation (including through MCP as the \texttt{memory\_dream\_state} tool).

\subsection{Nine-Stage Consolidation Pipeline}

The pipeline proceeds through the following stages:

\begin{enumerate}[leftmargin=*]
\item \textbf{Ensure Cursor}: Create the internal \texttt{\_dream\_state} space and item if they do not exist.
\item \textbf{Load Cursor}: Read the persisted cursor position; \texttt{None} on first run.
\item \textbf{Collect Events}: Stream events from the cursor position, grouping by item and deduplicating (latest revision per item wins).
\item \textbf{Fetch Revisions}: Batch-load revision metadata from the graph, filtering to episodic memories (\texttt{kind=conversation}) and excluding already-deprecated items.
\item \textbf{Inspect Bundles}: For bundle-related events, fetch current membership lists to provide topical grouping context.
\item \textbf{LLM Assessment}: Submit batches of memories (configurable batch size, default 20) to an LLM with a structured prompt requesting: relevance scoring (0.0--1.0), deprecation recommendations with reasons, tag suggestions for improved retrieval, metadata corrections or enrichments, and relationship identification between memories in the batch.
\item \textbf{Apply Actions}: Execute the LLM's recommendations under safety guards (Section~\ref{sec:safety}).
\item \textbf{Save Cursor}: Persist the new cursor position and timestamp.
\item \textbf{Generate Report}: Create a revision on the \texttt{\_dream\_state} item with a detailed Markdown audit report as an artifact, documenting all actions taken, skipped, and failed.
\end{enumerate}

\subsection{LLM Assessment Protocol}

The assessment prompt instructs the LLM to analyze each memory and return structured JSON with per-memory assessments. For each memory, the LLM evaluates:

\begin{itemize}[leftmargin=*]
\item \textbf{Relevance}: How useful is this memory for future agent interactions? (0.0--1.0)
\item \textbf{Deprecation}: Should this memory be deprecated? Criteria include: duplicates of newer memories, superseded information, trivially obvious content, and content with no actionable value. Deprecation implements the formal contraction operation (Definition~\ref{def:contraction}): deprecated items are excluded from all search and retrieval by default, effectively removing the belief from the agent's active state while preserving it in the graph for audit or explicit recovery.
\item \textbf{Enrichment}: What tags, keywords, or metadata corrections would improve future retrieval?
\item \textbf{Relationships}: Which other memories in the batch relate to this one, and what is the relationship type?
\end{itemize}

The LLM is explicitly instructed to be conservative: when in doubt, keep the memory. This bias toward preservation is reinforced by the safety guards described below.

\textbf{Formal status of consolidation.}\quad We clarify the relationship between online memory operations (Section~\ref{sec:formal}) and the Dream State's offline consolidation. The nine-stage pipeline is an \emph{engineering} contribution, not a formal one. Each individual action maps to a formally defined operation: deprecation is contraction (Definition~\ref{def:contraction}), metadata enrichment is expansion (Definition~\ref{def:expansion}), and relationship creation adds edges to $E$. However, we do \emph{not} claim that the composition of a batch of such actions across multiple memories preserves all AGM postulates simultaneously---proving compositional preservation of rationality postulates under sequential application is an open problem in belief revision theory. What the safety guards provide is not formal guarantees but operational constraints (published-item immunity, circuit breakers, dry-run validation) that limit the damage from incorrect LLM assessments. The formal properties of Section~\ref{sec:formal} apply to individual memory operations; the Dream State's contribution is the safety architecture surrounding their automated application.

\subsection{Safety Guards}
\label{sec:safety}

Automated memory management requires conservative defaults. Table~\ref{tab:safety} enumerates the safety mechanisms.

\begin{table}[t]
\centering
\caption{Dream State safety guards.}
\label{tab:safety}
\footnotesize
\begin{tabular}{@{}ll@{}}
\toprule
\textbf{Guard} & \textbf{Mechanism} \\
\midrule
Dry Run & Assessment-only mode \\
Published Protect & Never deprecate ``published'' \\
Circuit Breaker & Max 50\% deprecation \\
Error Isolate & Per-action try/except \\
Audit Report & Markdown report artifact \\
Cursor Persist & Resume from checkpoint \\
\bottomrule
\end{tabular}
\end{table}

The circuit breaker is particularly important: if the LLM recommends deprecating more than 50\% of assessed memories in a single run, this likely indicates a miscalibrated prompt or adversarial input rather than a genuine consolidation need. The system caps deprecations at 50\% and logs a warning for operator review.

\textbf{Threshold justification.}\quad The 50\% circuit breaker is motivated by the observation that in a healthy, regularly-consolidated memory graph, each run should be \emph{incremental}: a small fraction of assessed memories will be genuinely redundant or outdated. Three scenarios can trigger a high deprecation rate: (a)~a miscalibrated assessment prompt (the LLM is over-aggressive), (b)~adversarial or corrupted input to the assessment, or (c)~a long-overdue consolidation on a heavily stale graph where many memories have been superseded. In cases~(a) and~(b), halting is the correct response. In case~(c), the system converges through multiple successive runs, each pruning up to 50\% of the remaining candidates---geometrically approaching a clean state without risking catastrophic single-run loss.

\textbf{Published-item protection justification.}\quad Published items represent beliefs that have been explicitly validated by a human operator or an upstream approval process---analogous to ``approved'' assets in production pipelines. Automated consolidation should never override explicit human validation; this is a direct application of the principle that human-in-the-loop decisions outrank automated assessments. The protection ensures that the Dream State cannot silently degrade curated, high-confidence beliefs.

\textbf{Tunability.}\quad Both thresholds are configurable via the API, enabling operators to adapt consolidation behavior to their domain's risk profile:
\begin{itemize}[leftmargin=*]
\item The circuit breaker accepts a \texttt{max\_deprecation\_ratio} parameter (default 0.5, valid range 0.1--0.9). \emph{Recommended settings}: high-stakes domains (medical, legal, financial) should use 0.2--0.3; general-purpose agents use the default 0.5; batch cleanup operations on known-stale graphs can use 0.7--0.9, preferably after a dry-run validation pass.
\item Published-item protection can be relaxed via an explicit \texttt{allow\_published\_deprecation=true} flag for operators who require full automated control over all items, including curated ones. This flag is logged in the audit report for traceability.
\item The dry-run mode is itself a tunability mechanism: operators can preview all proposed actions before committing, adjusting the assessment prompt or thresholds based on the dry-run report.
\end{itemize}

\begin{principle}[Conservative Memory Management]
When an AI system makes automated decisions about memory retention, the default must be preservation. Deleting a useful memory is worse than retaining a useless one. Circuit breakers, dry runs, and protection tags ensure that consolidation enhances quality without risking catastrophic loss.
\end{principle}

\subsection{Consolidation as LLM-Decoupled Operation}

The Dream State accepts any LLM through a pluggable adapter interface. The consolidation model can differ from the agent's primary model---for instance, using a smaller, cost-efficient model for nightly batch processing while the agent uses a larger model for interactive reasoning. The memory graph is the shared substrate; the LLM is a tool applied to it.

\section{Privacy Architecture}
\label{sec:privacy}

\subsection{Local-First, Summary-to-Cloud}

The privacy architecture enforces a strict boundary: raw content (chat transcripts, voice recordings, images, tool output, user PII) remains local to the agent runtime. Only PII-redacted summaries, extracted facts, topic keywords, artifact \emph{pointers} (paths, not content), and embedding vectors cross the privacy boundary into the cloud graph.

PII redaction is applied during the ingest pipeline before any data reaches the graph database. The redaction step uses an LLM (through the same pluggable adapter interface) to identify and remove personally identifiable information from summaries.

\subsection{BYO-Storage Architecture}

As described in Section~\ref{sec:byo-storage}, artifacts in the graph are pointers to files on the user's own storage---local filesystems, network shares, or cloud storage buckets. The system never copies, caches, or proxies artifact content. This means:

\begin{itemize}[leftmargin=*]
\item Users control data residency and retention.
\item The graph database contains no raw user content.
\item Data exfiltration from the graph yields only metadata.
\item Compliance with data sovereignty regulations is simplified.
\end{itemize}

\noindent This design reflects the Metadata Over Content principle (Table~\ref{tab:principles}): the cloud graph stores the minimum information necessary for recall and reasoning, while raw content stays local. This preserves user privacy, reduces storage costs, eliminates data exfiltration risk, and---equally important---enables \emph{cognitive efficiency}: agents read compact summaries to understand context, dereferencing raw content only when exact detail is required.

\subsection{Multi-Channel Session Identity}

For agents operating across multiple platforms (messaging services, web interfaces, desktop applications), session identity is user-centric, not channel-centric. The session ID format encodes context (e.g., \texttt{personal}, \texttt{work}), user identity hash, date, and sequence number, enabling cross-channel memory retrieval unified under one identity. The context field serves as a memory namespace---not a tenant isolation boundary---allowing the same user to maintain separate memory contexts (personal preferences vs.\ work decisions) within a single deployment.

\emph{Design note: Identity over channel.} Session continuity follows the user, not the platform. An agent that forgets a conversation because the user switched from mobile to desktop has failed at its core purpose.

\subsection{Threat Model and Mitigations}

The privacy architecture addresses a specific threat model. Table~\ref{tab:threats} enumerates the primary threats and the mitigations provided by the current design.

\begin{table}[t]
\centering
\caption{Threat model and mitigations for the memory privacy boundary.}
\label{tab:threats}
\small
\begin{tabular}{@{}p{2.8cm}p{4.8cm}@{}}
\toprule
\textbf{Threat} & \textbf{Mitigation} \\
\midrule
PII leakage via summaries & LLM-based redaction before graph ingest. \emph{Limitation}: LLM redaction has non-zero false negative rates; no formal guarantees. \\
\addlinespace
Membership inference via embeddings & Embeddings are stored per-tenant in isolated database partitions. \emph{Limitation}: embedding inversion attacks are an active research area. \\
\addlinespace
Prompt injection in Dream State & Consolidation operates on stored metadata, not raw user input; safety guards (circuit breaker, published protection) limit blast radius. \\
\addlinespace
Metadata re-identification & Topics and keywords can be identifying even without PII. \emph{Limitation}: current redaction targets named entities, not topical fingerprints. \\
\addlinespace
Malicious artifacts via tool output & Artifact pointers are stored, not artifact content; the graph never executes or parses artifact data. \\
\addlinespace
Credential leakage & The ingest pipeline rejects known credential patterns (API keys, tokens) before summarization. \\
\bottomrule
\end{tabular}
\end{table}

We acknowledge that LLM-based PII redaction is not a formal privacy guarantee. False negatives (PII that escapes redaction) and false positives (over-redaction that degrades summary quality) are inherent to the approach. For regulated data classes (HIPAA, GDPR Article~9), operators should deploy a dedicated redaction model with measured precision/recall or a rule-based pre-filter upstream of the LLM redactor. The current architecture supports this through the pluggable adapter interface: the redaction component can be replaced without changing the ingest pipeline.

\section{MCP Integration}
\label{sec:mcp}

\subsection{The Model Context Protocol}

The Model Context Protocol (MCP)~\cite{mcp2025} provides a standardized interface between AI agents and external tools. By exposing the memory system as MCP tools, any MCP-compatible agent gains memory capabilities without custom integration code. MCP-based memory systems are now table stakes for adoption---the official MCP Memory Server, Mem0 MCP Server, Redis Agent Memory Server, Neo4j MCP Servers, MemoryOS-MCP, and Basic Memory all provide MCP interfaces. Our MCP integration is not a differentiator but a necessary condition for platform-agnostic deployment.

What distinguishes the interface is the \emph{breadth} of graph operations exposed: not just store/retrieve, but reasoning and provenance tools (\textsc{AnalyzeImpact}, \textsc{FindPath}, \textsc{GetProvenance}) and temporal point-in-time queries that other MCP memory servers do not provide.

\subsection{Tool Taxonomy}

The MCP server exposes 51 tools organized into six categories:

\textbf{Cognitive Memory Lifecycle} (\texttt{memory\_ingest}, \texttt{memory\_recall}, \texttt{memory\_consolidate}, \texttt{memory\_discover\_edges}, \texttt{memory\_store\_execution}, \texttt{memory\_dream\_state}, \texttt{memory\_add\_response}): The complete memory pipeline. \texttt{memory\_ingest} buffers user messages in working memory while simultaneously recalling relevant long-term memories and creating atomic memory units (see below), providing the agent with prior context on every turn. \texttt{memory\_add\_response} captures assistant responses into the working memory buffer. \texttt{memory\_consolidate} summarizes sessions with PII redaction and consolidation enrichments (prospective indexing, event extraction). \texttt{memory\_recall} performs semantic search over long-term memory with optional graph-augmented multi-query reformulation and edge traversal. \texttt{memory\_discover\_edges} generates implication queries from a newly stored memory and creates typed edges to related existing memories---the mechanism that keeps the graph connected. \texttt{memory\_store\_execution} persists tool execution results as memory artifacts. \texttt{memory\_dream\_state} triggers the Dream State consolidation pipeline (Section~\ref{sec:dreamstate}).

\textbf{Working Memory} (\texttt{chat\_add}, \texttt{chat\_get}, \texttt{chat\_clear}): Redis-backed session buffer for conversational context before consolidation to long-term graph storage.

\textbf{Graph Navigation} (\texttt{get\_project}, \texttt{get\_spaces}, \texttt{get\_item}, \texttt{get\_revision}, \texttt{get\_artifacts}, \texttt{search\_items}): Structured exploration of the project/space/item/revision hierarchy.

\textbf{Reasoning \& Provenance} (\texttt{get\_edges}, \texttt{get\_dependencies}, \texttt{get\_dependents}, \texttt{analyze\_impact}, \texttt{find\_path}, \texttt{get\_provenance\_summary}): Understanding \emph{why} memories exist and how they relate, enabling chain-of-evidence reasoning.

\textbf{Temporal Operations} (\texttt{get\_item\_revisions}, \texttt{get\_revision\_by\_tag}, \texttt{get\_revision\_as\_of}, \texttt{resolve\_kref}): Navigating memory through time, including point-in-time queries (``what did the agent believe about X on date Y?'').

\textbf{Graph Mutation} (\texttt{create\_item}, \texttt{create\_revision}, \texttt{tag\_revision}, \texttt{create\_edge}, \texttt{set\_metadata}, \texttt{deprecate\_item}, and 13~additional CRUD operations): Programmatic graph structure management for agent-driven and multi-agent workflows.

\subsection{Atomic Memory Writes}

A single \texttt{memory\_ingest} invocation creates the complete memory unit: space (with auto-creation), item, revision with metadata, artifact attachment, edges to source materials, bundle membership, tag assignment, and asynchronous embedding generation.

This ``one tool call, complete memory'' design eliminates fragile multi-step sequences that could leave partially-committed state in the graph.

\emph{Design note: One tool call, complete memory.} A single MCP tool invocation must create the full memory unit. Requiring multiple sequential tool calls creates fragile, partially-committed states and increases agent complexity.

\subsection{Human Auditability}

MCP makes memory accessible to agents, but trustworthy systems also require a human-auditable surface. A web dashboard and companion desktop asset browser render the cognitive memory graph using the same project/space/item/revision hierarchy, enabling operators to inspect what an agent remembered, why, and when through immutable revision history, lineage traversal, and artifact inspection. Every consolidation decision is traceable to a Dream State report; every memory has a provenance chain.

For agents performing consequential work, this auditability is not a convenience but a requirement. The same project/space/item/revision hierarchy serves both asset browsing and memory inspection, ensuring that the cognitive state of an agent is as navigable and auditable as a traditional asset management system.

\section{Comparative Analysis}
\label{sec:systems}

Table~\ref{tab:comparison} compares the architecture across nine evaluation dimensions. Table~\ref{tab:features} provides a feature-level comparison with concurrent systems.

\begin{table*}[t]
\centering
\caption{Comparative analysis across nine evaluation dimensions.}
\label{tab:comparison}
\footnotesize
\setlength{\tabcolsep}{3pt}
\begin{tabular}{@{}llllll@{}}
\toprule
\textbf{Dimension} & \textbf{Flat RAG} & \textbf{Tiered Buffers} & \textbf{Extended Context} & \textbf{Static KG} & \textbf{Kumiho} \\
\midrule
Retrieval & Embedding only & Embedding + scan & In-context & Query language & Hybrid + graph nav \\
Statefulness & Stateless & Tiered buffers & Ephemeral & Current state & Versioned history \\
Relationships & None & None & None & Fixed ontology & 6 typed edge types \\
Provenance & None & None & None & Schema-dependent & Complete lineage \\
Temporal Nav. & Point-in-time & Current window & Current window & Mostly current & Full history + tags \\
Consolidation & None & Manual / rule & N/A & Batch ETL & LLM Dream State \\
LLM Coupling & Low & High & Complete & None & None (MCP) \\
Cost Scaling & Linear & Linear & Quadratic & Linear & Linear \\
Work Auditability & None & None & None & Partial & Full (SDK + desktop) \\
\bottomrule
\end{tabular}
\end{table*}

\begin{table*}[t]
\centering
\caption{Feature comparison with concurrent agent memory systems (as of Feb 2026). \checkmark\ = present, (\checkmark) = partial, $\times$ = absent.}
\label{tab:features}
\resizebox{\textwidth}{!}{%
\begin{tabular}{@{}lcccccccc@{}}
\toprule
\textbf{Feature} & \textbf{Graphiti} & \textbf{Mem0g} & \textbf{A-MEM} & \textbf{Letta} & \textbf{MAGMA} & \textbf{Hindsight} & \textbf{MemOS} & \textbf{Kumiho} \\
\midrule
Property graph storage & \checkmark & \checkmark & (\checkmark) & $\times$ & \checkmark & $\times$ & $\times$ & \checkmark \\
Hybrid retrieval ($\geq$2 modalities) & \checkmark & (\checkmark) & $\times$ & $\times$ & (\checkmark) & (\checkmark) & $\times$ & \checkmark \\
Immutable revision history & (\checkmark) & (\checkmark) & $\times$ & (\checkmark) & $\times$ & $\times$ & $\times$ & \checkmark \\
Formal belief revision & $\times$ & $\times$ & $\times$ & $\times$ & $\times$ & $\times$ & $\times$ & \checkmark \\
URI-based addressing & $\times$ & $\times$ & $\times$ & $\times$ & $\times$ & $\times$ & $\times$ & \checkmark \\
Typed edge ontology ($\geq$6) & (\checkmark) & $\times$ & (\checkmark) & $\times$ & (\checkmark) & $\times$ & $\times$ & \checkmark \\
Async.\ consolidation + safety guards & $\times$ & $\times$ & $\times$ & (\checkmark) & $\times$ & $\times$ & $\times$ & \checkmark \\
LLM-decoupled & (\checkmark) & (\checkmark) & $\times$ & (\checkmark) & (\checkmark) & (\checkmark) & (\checkmark) & \checkmark \\
Unified asset + memory graph & $\times$ & $\times$ & $\times$ & $\times$ & $\times$ & $\times$ & $\times$ & \checkmark \\
Agent work auditability (SDK) & $\times$ & $\times$ & $\times$ & $\times$ & $\times$ & $\times$ & $\times$ & \checkmark \\
Benchmark eval.\ (LoCoMo/LME) & \checkmark & \checkmark & \checkmark & \checkmark & \checkmark & \checkmark & $\times$ & \checkmark\textsuperscript{$\dagger$} \\
BYO-storage (raw data stays local) & $\times$ & $\times$ & $\times$ & $\times$ & $\times$ & $\times$ & $\times$ & \checkmark \\
\bottomrule
\multicolumn{9}{@{}l@{}}{\textsuperscript{$\dagger$} LoCoMo: 0.447 four-category F1 / 97.5\% adversarial refusal (Section~\ref{sec:locomo}). LoCoMo-Plus: 93.3\% (Section~\ref{sec:locomo-plus}). LongMemEval planned.}
\end{tabular}%
}
\end{table*}

\subsection{vs.\ Flat Retrieval Systems}

Flat retrieval systems treat each query as an independent similarity search over static document chunks. The proposed architecture extends this model along three axes: (i)~\emph{statefulness}---memories evolve through revisions rather than being overwritten or duplicated; (ii)~\emph{structure}---typed edges encode causal and evidential relationships between memories; (iii)~\emph{consolidation}---the Dream State actively distills episodic experience into semantic knowledge.

Flat retrieval remains valuable as one component of a hybrid system. The architecture incorporates vector similarity as one of two retrieval branches (alongside fulltext search), recognizing that embedding-based search is effective for semantic matching but insufficient as a complete memory system.

\subsection{vs.\ Tiered Buffer Systems}

Tiered buffer systems introduce an important operating-system metaphor for memory management. However, they typically operate on flat stores without typed relationships or immutable versioning. More critically, the memory management logic is typically embedded within the LLM's own reasoning process: the model decides when to move content between tiers, creating tight LLM-memory coupling.

The proposed architecture provides richer structure (graph edges, revision history), stronger safety guarantees (circuit breakers, dry runs, published protection), and LLM-decoupled memory management (the memory graph is an external data store, not model-internal state).

\subsection{vs.\ Extended Context Windows}

As argued in Section~\ref{sec:context-window}, context window extension addresses the wrong problem. It increases attention capacity but does not provide persistent recall, structural representation, or model-independent storage. The quadratic scaling of attention cost makes it economically infeasible for lifelong agent memory. Extended context is complementary---useful for in-session reasoning over recently retrieved memories---but not a substitute for persistent, structured, externalized memory.

\subsection{vs.\ Static Knowledge Graphs}

Static knowledge graphs excel at representing shared, encyclopedic knowledge with fixed schemas. The proposed architecture is designed for \emph{experiential} memory---agent-scoped, temporally-evolving, with flexible metadata and a working memory layer. The two approaches are complementary: a memory-equipped agent could reference external knowledge graphs via \textsc{referenced} edges while maintaining its own experiential memory in the graph.

\subsection{vs.\ MAGMA}

MAGMA~\citeyearpar{magma2026} represents the most direct structural alternative, implementing a multi-graph architecture with four orthogonal graph layers (semantic, temporal, causal, entity) and policy-guided retrieval traversal. The two architectures represent alternative philosophical commitments. MAGMA disentangles memory dimensions into separate graphs, enabling cleaner retrieval routing: a temporal query traverses only the temporal graph, a causal query traverses only the causal graph. Our architecture unifies all relationships in a single property graph with typed edges, enabling cross-dimensional traversal: \textsc{AnalyzeImpact} propagates across \textsc{Depends\_On}, \textsc{Derived\_From}, and \textsc{Supersedes} edges simultaneously, discovering dependencies that span multiple memory dimensions. The unified graph design reflects a deliberate architectural trade-off. Multi-graph separation offers cleaner retrieval routing and avoids cross-dimensional noise, but introduces synchronization complexity: updates that span multiple dimensions (e.g., a belief revision that is simultaneously temporal, causal, and semantic) must be coordinated across separate graph instances, and cross-dimensional queries require joins across storage boundaries. The unified property graph accepts edge-type heterogeneity in exchange for transactional atomicity---a single Neo4j transaction can create a revision, re-point tags, add \textsc{Supersedes} and \textsc{Derived\_From} edges, and update provenance metadata, ensuring that the belief state is never in a partially-updated condition. This atomicity is what enables the AGM compliance results (Section~\ref{sec:agm-eval}): the formal postulates require that revision is an atomic operation, and the unified graph makes this a database-level guarantee rather than an application-level coordination problem. Neither approach has been empirically compared to the other. MAGMA's 0.70 on LoCoMo (LLM-as-judge score, not token-level F1) and 61.2\% on LongMemEval (with 95\% token reduction) establish strong baselines; whether our unified graph approach achieves comparable or better results under the same evaluation conditions is the most important open empirical question for this work.

\subsection{vs.\ Hindsight}

Hindsight~\citeyearpar{hindsight2025} achieves the highest reported LoCoMo (89.61\%, percentage accuracy) and LongMemEval (91.4\%) scores, demonstrating that pragmatic belief tracking without formal guarantees can deliver strong empirical results. Its Opinion Network maintains confidence-scored beliefs that update with evidence---functionally similar to our revision mechanism, but without AGM grounding. The relationship is complementary rather than competitive: Hindsight demonstrates the empirical value of structured belief tracking; our formal framework provides the theoretical guarantees (Relevance, Core-Retainment, Consistency) that such systems could adopt to ensure belief revision satisfies minimal change and does not discard beliefs without justification. Hindsight explicitly identifies safe personality management as an open problem; our safety-hardened consolidation with circuit breakers and published-item protection addresses precisely this gap.

\section{Core Design Principles}

Table~\ref{tab:principles} consolidates the seven core design principles governing the architecture. We distinguish these research-level principles---which define the architecture's formal and structural commitments---from additional engineering design notes (boundary validation, latency matching, infrastructure independence, query sanitization, session identity, and atomic writes) documented inline in the relevant sections.

\begin{table}[t]
\centering
\caption{Core design principles.}
\label{tab:principles}
\small
\begin{tabular}{@{}clc@{}}
\toprule
\textbf{\#} & \textbf{Principle} & \textbf{Category} \\
\midrule
1 & Structural Reuse & Structure \\
2 & Universal Addressability & Structure \\
3 & Immutable Rev., Mutable Ptr. & Formal \\
4 & Explicit Over Inferred Rel. & Formal \\
5 & Non-Blocking Enhancement & Performance \\
6 & Conservative Memory Mgmt. & Safety \\
7 & Metadata Over Content & Safety \\
\bottomrule
\end{tabular}
\end{table}

\section{Reference Implementation: Kumiho}

The architecture is fully implemented as the Kumiho system\footnote{\url{https://kumiho.io}, \url{https://github.com/KumihoIO}}, which provides open-source client SDKs (Python, C++, Dart, MCP server) with the core graph server delivered as a managed cloud service, and validated through deployment:

\textbf{Kumiho Server.} Rust-based gRPC server handling graph operations, dynamic connection routing, embedding generation, and hybrid search using \texttt{tokio}, \texttt{tonic}, and \texttt{neo4rs}.

\textbf{Kumiho SDK.} Python SDK providing typed access to all graph operations with memory reference validation and retry logic.

\textbf{Kumiho MCP Server.} Python MCP server wrapping the SDK, exposing all memory operations as MCP tools.

\textbf{Kumiho Memory Library.} Python library providing higher-level cognitive memory operations: session management, memory ingest with automatic recall, PII-redacted summarization with consolidation enrichments (prospective indexing, event extraction), and the Dream State pipeline.

\textbf{Kumiho Dashboard.} Web-based interface at \url{https://kumiho.io} (Figure~\ref{fig:dashboard}) providing two integrated views: (i)~an \emph{AI Cognitive Memory} browser with fulltext search, interactive force-directed graph visualization of typed edges, node detail panels (title, summary, kref URI, revision tags, creation date), and live stats (total memories, spaces, connections, Dream State recency); and (ii)~an \emph{Asset Browser} for navigating non-memory graph items (models, textures, workflows). Both views share the same project/space/item/revision hierarchy and the same graph API, unifying work product management and memory inspection in a single web interface. Node types are color-coded by kind (conversation, decision, fact, reflection, error, action, instruction, bundle) and edges are color-coded by type (Depends~On, Derived~From, Referenced, Contains, Created~From, Belongs~To, Supersedes).

\textbf{Kumiho Desktop.} Cross-platform desktop asset browser (Flutter/Dart) providing offline-capable access to the same graph hierarchy, optimized for browsing large asset collections on local and NAS storage.

\begin{figure}[t]
\centering
\includegraphics[width=\columnwidth]{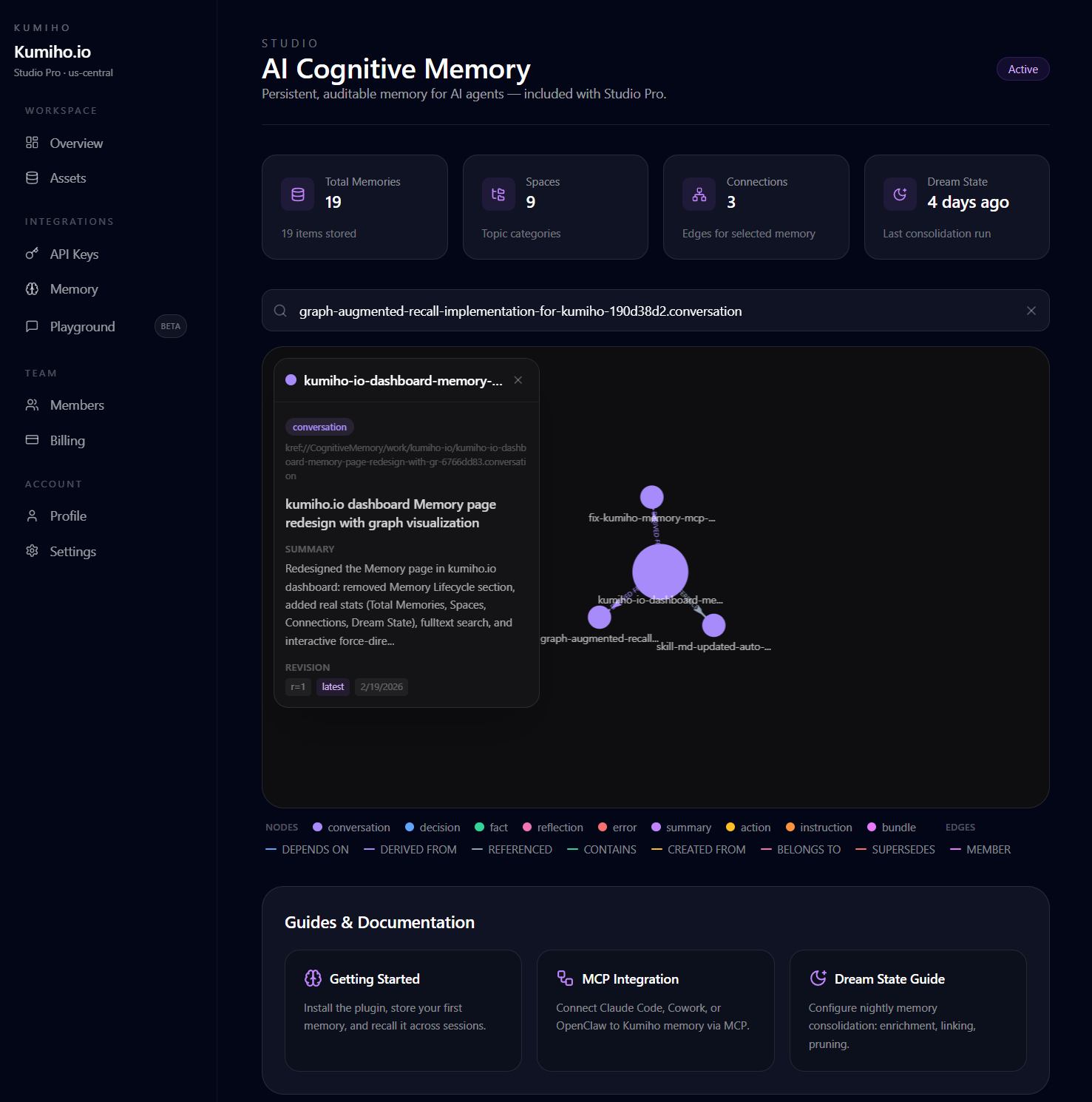}
\caption{Kumiho dashboard showing the AI Cognitive Memory browser. The selected node (\emph{kumiho.io dashboard Memory page redesign with graph visualization}) displays its summary, kref URI, revision tag (\texttt{r=1}, \texttt{latest}), and creation date. The force-directed graph on the right visualizes typed edges to related memories. Stats cards show total memories (19), spaces (9), connections for the selected memory (3), and Dream State recency (4~days ago). The same graph, SDK, and API that agents use to manage work products is exposed here for human inspection.}
\label{fig:dashboard}
\end{figure}

Measured end-to-end latencies: 15\,ms typical for working memory, 80--120\,ms for long-term graph queries including hybrid search.

\section{Implementation Validation: A Case Study}
\label{sec:evaluation}

\textbf{Scope.}\quad This section presents three categories of empirical evidence: (i)~standardized benchmark evaluations on LoCoMo and LoCoMo-Plus with cross-system comparison, (ii)~formal AGM compliance verification, and (iii)~operational case studies from deployment. The benchmark evaluations are the primary empirical contributions; the case studies demonstrate that the architecture functions as designed in production use.

\subsection{Token Compression}

A central claim of the architecture is that storing compact metadata summaries rather than raw transcripts yields significant token savings. Table~\ref{tab:compression} shows representative examples from the deployed graph, comparing the stored summary token count against the estimated raw conversation token count.

\begin{table}[t]
\centering
\caption{Token compression ratios.}
\label{tab:compression}
\footnotesize
\begin{tabular}{@{}lrrr@{}}
\toprule
\textbf{Type} & \textbf{Summary} & \textbf{Raw} & \textbf{Ratio} \\
\midrule
Simple fact  & ${\sim}$12 & 500 & 42$\times$ \\
Preference  & ${\sim}$18 & 800 & 44$\times$ \\
Profile  & ${\sim}$80 & 4K & 50$\times$ \\
Paper session & ${\sim}$65 & 12K & 185$\times$ \\
Planning & ${\sim}$90 & 25K & 278$\times$ \\
\bottomrule
\end{tabular}
\end{table}

The compression ratio ranges from ${\sim}$40$\times$ for simple facts to ${\sim}$280$\times$ for complex multi-turn sessions. Critically, this compression compounds at retrieval time: a typical recall returning $k{=}5$ results injects ${\sim}$250--400 summary tokens into the agent's context, compared to the ${\sim}$50{,}000+ tokens that would be required to replay the raw transcripts of those same sessions. This corresponds to the cost scaling advantage formalized in Section~\ref{sec:context-window}: the retrieval approach scales as $O(k \cdot \bar{s})$ where $\bar{s}$ is the mean summary size, while raw replay would scale as $O(k \cdot \bar{c})$ where $\bar{c}$ is the mean conversation size.

\subsection{LoCoMo Benchmark Evaluation}
\label{sec:locomo}

We evaluated Kumiho on the LoCoMo benchmark~\citeyearpar{locomo2024}, a multi-session conversation benchmark comprising 1{,}986 questions across 10~conversations and five categories that test long-term memory retrieval across extended conversational histories. We report results using the official \emph{token-level F1 with Porter stemming} metric---the standard scoring function used by the broader community (Zep~\citeyearpar{zep2025}, Mem0~\citeyearpar{mem02025}, Memobase~\citeyearpar{memobase2026}). The evaluation uses summarized recall mode with GPT-4o as the answer model, graph-augmented retrieval with multi-query reformulation, and a recall limit of 3~memories per query with context top-$k{=}7$.

\begin{table}[t]
\centering
\caption{LoCoMo token-level F1: cross-system comparison (four retrieval categories).}
\label{tab:locomo-f1}
\resizebox{\columnwidth}{!}{%
\footnotesize
\begin{tabular}{@{}lcccccc@{}}
\toprule
\textbf{System} & \textbf{Single} & \textbf{Multi} & \textbf{Temp.} & \textbf{Open} & \textbf{Overall} & \textbf{Source} \\
\midrule
Zep & 0.357 & 0.194 & 0.420 & 0.496 & --- & \cite{zep2025} \\
OpenAI Mem & --- & --- & --- & --- & ${\sim}$0.34 & \cite{zep2025} \\
Mem0 & 0.387 & 0.286 & 0.489 & 0.477 & ${\sim}$0.40 & \cite{zep2025} \\
Mem0-Graph & 0.381 & 0.243 & 0.516 & 0.493 & ${\sim}$0.40 & \cite{zep2025} \\
Memobase & 0.463 & 0.229 & 0.642 & 0.516 & --- & \cite{memobase2026} \\
ENGRAM & 0.231 & 0.183 & 0.219 & 0.086 & 0.211 & \cite{engram2025} \\
\midrule
\textbf{Kumiho} & \textbf{0.462} & \textbf{0.355} & \textbf{0.533} & 0.290$^{\star}$ & \textbf{0.447}$^{\ddagger}$ & This work \\
\bottomrule
\multicolumn{7}{@{}p{0.95\columnwidth}@{}}{\footnotesize $^\star$ Open-domain questions require world knowledge absent from conversation history; a memory-only system's expected floor (see text).} \\
\multicolumn{7}{@{}p{0.95\columnwidth}@{}}{\footnotesize $^\ddagger$ Four-category weighted average ($n{=}1{,}540$). Including adversarial refusal accuracy (97.5\%, $n{=}446$, binary scoring), overall F1 is 0.565 ($n{=}1{,}986$).} \\
\end{tabular}%
}
\end{table}

Table~\ref{tab:locomo-f1} reports the cross-system comparison on the four retrieval categories (single-hop, multi-hop, temporal, open-domain). Kumiho achieves \textbf{0.447 four-category F1} ($n{=}1{,}540$)---the highest reported score on the official LoCoMo token-level metric~\cite{locomo2024, engram2025, zep2025}. Adversarial refusal accuracy (97.5\%, $n{=}446$) is reported separately in Table~\ref{tab:locomo-f1-detail}, as it uses binary refusal detection rather than continuous token-level F1. Including adversarial, overall F1 is 0.565 ($n{=}1{,}986$). Table~\ref{tab:locomo-f1-detail} provides the per-category breakdown.

\begin{table}[t]
\centering
\caption{LoCoMo per-category breakdown ($n{=}1{,}986$).}
\label{tab:locomo-f1-detail}
\footnotesize
\begin{tabular}{@{}lrc@{}}
\toprule
\textbf{Category} & \textbf{$n$} & \textbf{F1} \\
\midrule
Single-hop & 841 & 0.462 \\
Multi-hop & 282 & 0.355 \\
Temporal & 321 & 0.533 \\
Open-domain & 96 & 0.290 \\
\midrule
\textbf{Retrieval average} & \textbf{1{,}540} & \textbf{0.447} \\
\midrule
Adversarial (refusal acc.)$^\dagger$ & 446 & 0.975 \\
\midrule
Overall (incl.\ adversarial) & 1{,}986 & 0.565 \\
\bottomrule
\multicolumn{3}{@{}p{0.75\columnwidth}@{}}{\footnotesize $^\dagger$ Binary refusal detection, not continuous token-level F1. See Caveats.}
\end{tabular}
\end{table}

\textbf{Key findings.}\quad Four observations emerge from the token-level F1 evaluation:

\emph{(i)~Architecture generalization.}\quad The same graph-native architecture that dominates LoCoMo-Plus (93.3\% on Level-2 cognitive constraints, Section~\ref{sec:locomo-plus}) also leads on standard factual QA with 0.447 four-category F1. The result is not attributable to a different pipeline configuration---the same prospective indexing, event extraction, and graph-augmented recall that bridge cue-trigger semantic disconnects also improve factual retrieval accuracy.

\emph{(ii)~Adversarial refusal accuracy.}\quad The system correctly refuses to fabricate information in 97.5\% of adversarial questions ($n{=}446$). This is arguably the most important metric for production trust: a memory system that hallucinates plausible-sounding but incorrect answers is more dangerous than one that fails to retrieve. The near-perfect score is a natural consequence of the belief revision architecture (Section~\ref{sec:formal}): the memory graph genuinely does not contain fabricated information---immutable revisions preserve only what was actually discussed, and consolidation-as-denoising strips the surface-level cues that adversarial questions exploit---so there is nothing for the model to hallucinate from. Note that adversarial scoring uses binary refusal detection (presence of refusal phrases), not continuous token-level F1.

\emph{(iii)~Multi-hop as strongest relative improvement.}\quad Multi-hop F1 of 0.355 exceeds Mem0 (0.286, +6.9\,pp) and Mem0-Graph (0.243, +11.2\,pp). Multi-hop questions require connecting information across multiple conversation segments---precisely the scenario where graph-augmented recall with edge traversal provides a structural advantage over flat vector stores. The typed \textsc{Referenced} and \textsc{Derived\_From} edges created during ingestion enable the retrieval pipeline to discover memories that are structurally connected but semantically distant in embedding space.

\emph{(iv)~Open-domain as expected floor.}\quad Open-domain questions (0.290) require world knowledge beyond the conversation history---e.g., general facts about geography, science, or culture that the participants did not discuss. A memory-only system cannot provide this knowledge, making open-domain the expected accuracy floor rather than a system failure. Notably, Zep (0.496) and Mem0 (0.477) score higher on open-domain, likely because their retrieval pipelines surface broader context that incidentally overlaps with world knowledge.

\textbf{Cross-benchmark architecture consistency.}\quad Table~\ref{tab:cross-benchmark} demonstrates that the architectural innovations generalize across both evaluation protocols---they are not tuned to a single benchmark.

\begin{table}[t]
\centering
\caption{Architecture consistency across benchmarks.}
\label{tab:cross-benchmark}
\resizebox{\columnwidth}{!}{%
\footnotesize
\begin{tabular}{@{}lll@{}}
\toprule
\textbf{Innovation} & \textbf{LoCoMo-Plus} & \textbf{LoCoMo F1} \\
\midrule
Prospective idx & Bridges cue-trigger gap & Improves temporal/multi-hop \\
Event extraction & Preserves causal chains & Preserves factual detail \\
LLM reranking & 100\% goal retrieval hit & Better sibling selection \\
Graph-aug.\ recall & Connected memory discovery & Multi-hop 0.355 (best) \\
\bottomrule
\end{tabular}%
}
\end{table}

\textbf{Cost.}\quad The full LoCoMo token-level F1 evaluation (1{,}986~questions across 10~conversations) costs ${\sim}$\$10 with GPT-4o-mini as the answer model and ${\sim}$\$14 with GPT-4o. The pipeline cost structure mirrors LoCoMo-Plus: consolidation and enrichment are one-time per session, while per-query costs are dominated by answer generation.

\textbf{Caveats.}\quad Competitor F1 scores are sourced from published evaluations~\cite{zep2025, memobase2026, engram2025}, not from controlled re-evaluation on identical infrastructure; different evaluation configurations (Top-K, prompt templates, embedding models) may account for some cross-system variance. The adversarial category uses binary refusal detection (match against refusal phrases), not continuous token-level F1---the ground truth is not used for scoring. The 97.5\% adversarial score is therefore a refusal accuracy metric, methodologically distinct from the four retrieval categories. We report the four-category F1 (0.447) as the primary comparable number and the overall F1 (0.565, including adversarial) separately. Open-domain performance (0.290) is below several competitors---this reflects the architectural decision to retrieve only from the memory graph rather than augmenting with external knowledge sources.

\subsection{LoCoMo-Plus Benchmark Evaluation}
\label{sec:locomo-plus}

Where LoCoMo (Section~\ref{sec:locomo}) evaluates factual recall under strong semantic alignment between query and stored content, LoCoMo-Plus~\citeyearpar{locomoplus2026} tests a fundamentally harder capability: implicit constraint recall under intentional cue-trigger semantic disconnect. In LoCoMo-Plus, a \emph{cue} dialogue embeds a constraint (e.g., ``Joanna decided to quit sugar after feeling sluggish'') that a later \emph{trigger} query must connect to (e.g., ``I've been indulging in all kinds of new desserts lately''), with no surface-level lexical overlap between them. The benchmark comprises 401~entries stitched into 10~LoCoMo base conversations (19--32~sessions each, 369--689~turns per conversation), spanning four constraint types---causal ($n{=}101$), state ($n{=}100$), goal ($n{=}100$), value ($n{=}100$)---with time gaps of 2~weeks to 12+~months between cue and trigger. We report results on all 401~entries across all four constraint types.

\textbf{System configuration.}\quad The evaluation uses the same graph-native architecture evaluated on LoCoMo, operating in summarized recall mode (title + summary metadata). Four architectural mechanisms address the semantic disconnect that makes LoCoMo-Plus qualitatively harder than LoCoMo:

\emph{Prospective indexing.}\quad During session consolidation, the summarizer generates 3--5 hypothetical future scenarios in which the memory would be relevant, using vocabulary and framing different from the original conversation. These implications are indexed alongside the summary for both fulltext and vector retrieval. The technique bridges the cue-trigger semantic gap at \emph{write time}: when a conversation mentions lactose intolerance making someone bedridden, one generated implication might be ``Months later, Caroline carefully reads restaurant menus before agreeing to dinner dates, prioritizing her dietary restrictions over social convenience.'' At query time, the trigger---``I finally said yes to a dinner date, but I picked the place solely because I know I won't end up doubled over afterward''---finds the memory through the implication's alternative framing, despite sharing no vocabulary with the original conversation. This is analogous to how human memory works: we encode not just what happened, but what it \emph{means for the future}. The encoding shapes retrieval. Implications are generated by a light model (GPT-4o-mini) running in parallel with the full-model summarization via \texttt{asyncio.gather}, adding zero wall-clock time.

\emph{Event extraction.}\quad The consolidation pipeline extracts structured events from each session, each comprising a description and its consequences (e.g., ``Joanna decided to quit sugar $\rightarrow$ Improved energy levels''). These events are appended to the summary text before indexing. Where narrative summaries compress episodes into abstract descriptions (``discussed dietary changes''), event extraction preserves the specific incidents and their causal chains that LoCoMo-Plus questions target. Events and implications are complementary: events preserve \emph{what happened} with causal structure; implications anticipate \emph{when it will matter} with alternative vocabulary.

\emph{Sibling relevance filtering.}\quad After retrieval, sibling revisions (other memories from the same item) are filtered by embedding cosine similarity to the trigger query using \texttt{text-embedding-3-small} with a threshold of 0.30. This prevents context dilution where loosely-related siblings overwhelm the answer model. In one case, an entry initially presented 25~sibling revisions to the answer model; after filtering, only 2~memories / 4~revisions reached the answer model, producing a correct answer. The system performs quality control over what reaches the consumer---managing retrieved context, not just retrieving everything loosely related.

\emph{Client-side LLM reranking.}\quad After cosine pre-filtering, remaining sibling revisions are evaluated by the consuming agent's own LLM using structured metadata (title, summary, facts, entities, events, implications). In agentic contexts (MCP), the host agent performs reranking as part of its normal response generation at zero additional inference cost. In non-agentic contexts (API, playground), a dedicated lightweight model (e.g., GPT-4o-mini) handles it using the user's own API key. Three configuration modes are supported: \emph{client} (host agent LLM, zero cost), \emph{dedicated} (user-configured model), and \emph{auto} (detect context). This design reflects the LLM-decoupled architecture (Section~\ref{sec:reranking}): the memory layer provides structured data; the consumer's own intelligence performs selection. As agent models improve, reranking quality improves automatically without any system changes.

The pipeline uses GPT-4o-mini for summarization, query reformulation, edge discovery, event extraction, implication generation, and judging, with GPT-4o for answer generation only. Each entry is ingested end-to-end: parallel session consolidation into long-term graph storage, followed by LLM-driven edge discovery that links each memory to related existing memories via typed edges (\textsc{Referenced}), bridging the cue-trigger semantic disconnect structurally. Recall uses multi-query reformulation (3--4~semantic variants per trigger) with graph-augmented retrieval (edge traversal to surface structurally connected memories that vector similarity alone would miss).

\begin{table}[t]
\centering
\caption{LoCoMo-Plus: Kumiho vs.\ published baselines ($n{=}401$).}
\label{tab:locomo-plus}
\resizebox{\columnwidth}{!}{%
\footnotesize
\begin{tabular}{@{}llc@{}}
\toprule
\textbf{System} & \textbf{Model} & \textbf{Acc.\ (\%)} \\
\midrule
RAG (text-ada-002) & text-ada-002 & 23.5 \\
RAG (text-embed-small) & text-embed-small & 24.9 \\
RAG (text-embed-large) & text-embed-large & 29.8 \\
Mem0 & Various & 41.4 \\
A-MEM & Various & 42.4 \\
SeCom & Various & 42.6 \\
GPT-4o (full context) & GPT-4o & 41.9 \\
GPT-4.1 (full context) & GPT-4.1 & 43.6 \\
Gemini 2.5 Flash (1M) & Gemini 2.5 Flash & 44.6 \\
Gemini 2.5 Pro (1M) & Gemini 2.5 Pro & 45.7 \\
\midrule
\textbf{Kumiho (4o-mini answer)} & \textbf{GPT-4o-mini} & $\mathbf{\sim}$\textbf{88} \\
\textbf{Kumiho (4o answer)} & \textbf{GPT-4o} & \textbf{93.3} \\
\bottomrule
\end{tabular}%
}
\end{table}

Table~\ref{tab:locomo-plus} reports the headline comparison. Kumiho achieves 93.3\% judge accuracy (374/401) on all entries---see the Independent Reproduction and Caveats notes below for context on independent replication and benchmark methodology---outperforming the best published baseline (Gemini~2.5~Pro, 45.7\%) by 47.6~percentage points. Recall accuracy---the fraction of entries where the system retrieves at least one relevant memory---reaches 98.5\% (395/401). The result is particularly striking because the system uses GPT-4o-mini---one of the cheapest available models---for the bulk of LLM operations (summarization, query reformulation, edge discovery, event extraction, implication generation, judging), with GPT-4o used only for answer generation. Even with GPT-4o-mini as the answer model (${\sim}$88\%), the system more than doubles the previous state-of-the-art. Gemini~2.5~Pro with its 1M+~token context window can fit entire conversation histories without any summarization or retrieval, yet achieves only 45.7\%---demonstrating that cognitive memory is not a context capacity problem but an \emph{organization, enrichment, and retrieval} problem.

\textbf{Performance by constraint type.}\quad Table~\ref{tab:locomo-plus-type} reports accuracy across all four constraint types with both answer models. The LoCoMo-Plus dataset distributes entries approximately equally across types (101~causal, 100 each for state, value, goal). Causal, state, and value types all achieve 96\% with GPT-4o---near-ceiling performance demonstrating that the architecture's event extraction and prospective indexing produce summaries rich enough for reliable retrieval and reasoning across all three types. Goal-type questions (85\%) remain the hardest: they require the most abstract reasoning to connect a current trigger to a stored intention (e.g., connecting ``can't believe they charge that much for a car key'' to ``saving for an engagement ring'' requires understanding that both involve money management, despite no vocabulary overlap).

\begin{table}[t]
\centering
\caption{LoCoMo-Plus accuracy by constraint type ($n{=}401$).}
\label{tab:locomo-plus-type}
\footnotesize
\begin{tabular}{@{}lrrcc@{}}
\toprule
\textbf{Type} & \textbf{$n$} & \textbf{Correct} & \textbf{4o (\%)} & \textbf{4o-mini (\%)} \\
\midrule
Causal & 101 & 97 & 96.0 & 96.0 \\
State & 100 & 96 & 96.0 & 95.0 \\
Value & 100 & 96 & 96.0 & ${\sim}$89 \\
Goal & 100 & 85 & 85.0 & ${\sim}$73 \\
\midrule
\textbf{Overall} & \textbf{401} & \textbf{374} & \textbf{93.3} & $\mathbf{\sim}$\textbf{88} \\
\bottomrule
\end{tabular}
\end{table}

The model impact is type-dependent: switching from GPT-4o-mini to GPT-4o yields causal +0\%, state +1\%, value +7\%, goal +12\%. The harder the constraint type, the more a stronger answer model helps. This confirms that the bottleneck for the hardest queries is answer model reasoning capacity, not retrieval quality---the 98.5\% recall accuracy is invariant across models.

\textbf{Performance by time gap.}\quad Table~\ref{tab:locomo-plus-timegap} reveals the most significant empirical finding of this evaluation: the elimination of the long-horizon accuracy cliff. In a pre-enrichment run ($n{=}200$, without prospective indexing or event extraction), accuracy dropped to 37.5\% for time gaps exceeding 6~months. With both enrichments active, accuracy at $>$6~months rises to 84.4\%---a 47~percentage point improvement that validates prospective indexing as the critical mechanism for bridging long temporal gaps. The improvement is straightforward to explain: as time increases, embedding similarity between cue and trigger naturally decays. Prospective indexing provides alternative retrieval paths through the generated implications, whose vocabulary is independent of the original conversation's wording and therefore does not suffer the same temporal decay.

\begin{table}[t]
\centering
\caption{LoCoMo-Plus accuracy by time gap ($n{=}401$, GPT-4o answer).}
\label{tab:locomo-plus-timegap}
\footnotesize
\begin{tabular}{@{}lrrl@{}}
\toprule
\textbf{Time Gap} & \textbf{$n$} & \textbf{Acc.\ (\%)} & \textbf{Notes} \\
\midrule
$\leq$2 weeks & 35 & 88.6 & More goal-type entries \\
2~wk -- 1~mo & 77 & 97.4 & Peak performance \\
1 -- 3~mo & 164 & 93.9 & Largest cohort \\
3 -- 6~mo & 93 & 93.5 & No degradation \\
$>$6~mo & 32 & 84.4 & Cliff eliminated \\
\bottomrule
\end{tabular}
\end{table}

The $\leq$2~week bucket (88.6\%) is slightly lower than adjacent periods because it contains proportionally more goal-type constraints, which are harder regardless of time gap. Accuracy is consistent across all 10~base conversations (90--97.5\%), with no conversation falling below 90\%, demonstrating robustness across different dialogue structures, conversation lengths, and topic distributions.

\textbf{Failure mode taxonomy.}\quad Analysis of the 27~failures (6.7\%) across 401~entries reveals two distinct failure modes:

\begin{enumerate}[leftmargin=*]
\item \textbf{Recall miss} (6~failures, 22\%): No relevant memories were retrieved by any query variant, or the retrieved context was empty. These represent the system's hard floor---cases where even prospective indexing and graph-augmented recall could not bridge the semantic gap. Notably, time gap is not the primary predictor: 2~of~6 recall misses occur at $\leq$1~week.

\item \textbf{Answer fabrication} (21~failures, 78\%): The correct memory appeared in the recalled context, but the answer model generated a response that ignored it or fabricated around it. The dominant pattern is \emph{surface-theme hijacking}: the model follows the trigger's surface theme and tone rather than connecting to the contradictory or abstract recalled memory. Example: the trigger mentions ``indulging in new desserts'' while the recalled context states ``Joanna decided to quit sugar after experiencing constant sluggishness''---the model acknowledged the sugar quit but fabricated details (``dairy-free ice cream recipe'') matching the trigger's positive tone rather than surfacing the contradiction. All 15~goal failures are answer fabrication---the system retrieved the right memory every time, but the model could not bridge the abstract gap between trigger and stored intention.
\end{enumerate}

This failure distribution is architecturally significant: 98.5\% recall accuracy means the memory layer has effectively solved the retrieval problem for LoCoMo-Plus. The remaining 6.7\% end-to-end gap is entirely attributable to the answer model's reasoning limitations, not to the memory architecture.

\textbf{Model-decoupled architecture.}\quad The system's model-decoupled design (Section~\ref{sec:decoupling}) allows swapping the answer model without any pipeline changes. Table~\ref{tab:locomo-plus-type} reports both GPT-4o and GPT-4o-mini results. The 98.5\% recall accuracy is constant across models---both receive identical recalled context from the same retrieval pipeline. Only the end-to-end accuracy differs (93.3\% vs.\ ${\sim}$88\%), concentrated in the hardest constraint types (goal +12\%, value +7\%). This empirically validates that the memory layer is infrastructure that outlives any single model generation: as LLMs improve at reasoning over retrieved context, the system's end-to-end accuracy improves automatically. The theoretical ceiling with perfect answer generation is ${\sim}$99\%, limited only by the 6~genuine recall misses.

\textbf{Cost analysis.}\quad The total cost for 401~entries is ${\sim}$\$14, with summarization (${\sim}$\$3), edge discovery (${\sim}$\$2), implication generation (${\sim}$\$1), event extraction (${\sim}$\$1), answer generation (${\sim}$\$3), judging (${\sim}$\$0.50), and sibling embedding (${\sim}$\$0.10) as the primary components. This cost efficiency stems from the same architectural principle validated by the LoCoMo summary-only results: structured summarization with enrichment (cheap, one-time per session) replaces brute-force full-context retrieval (expensive, per-query). For comparison, running Gemini~2.5~Pro with full context on 401~entries would cost significantly more while achieving only 45.7\% accuracy.

\textbf{Architectural significance.}\quad The LoCoMo-Plus results validate the central thesis of this paper more decisively than LoCoMo. LoCoMo's factual recall questions have strong semantic alignment between query and stored content---precisely the scenario where embedding-based retrieval excels. LoCoMo-Plus deliberately breaks this alignment, testing whether the system can bridge semantic gaps through structural reasoning. Even taking into account the independent reproduction result in the mid-80\% range (see note below), the margin over all published baselines---the best of which stands at 45.7\%---demonstrates that graph-native primitives---typed edges bridging cue-trigger disconnects, structured summarization preserving causal relationships, prospective indexing anticipating future retrieval needs, sibling relevance filtering controlling context quality, and multi-query reformulation expanding the retrieval surface---provide capabilities that neither larger context windows nor stronger models can substitute for.

The enrichment contribution is quantified by comparing against a pre-enrichment baseline run ($n{=}200$, without prospective indexing, event extraction, or sibling filtering): 61.6\% judge accuracy. With the full enrichment pipeline on the complete 401-entry dataset, accuracy rises to 93.3\%---a 31.7~percentage point improvement. The $>$6-month accuracy cliff is the clearest evidence: pre-enrichment accuracy at this horizon was 37.5\%; with enrichments, it rises to 84.4\%. The mechanism is general: any memory system that generates retrieval-oriented metadata at write time can bridge semantic gaps that would otherwise require exact vocabulary match or enormous context windows.

The separation of recall accuracy (98.5\%) from end-to-end accuracy (93.3\%) is the second signature analytical result. It demonstrates that the memory architecture has effectively solved the retrieval problem for Level-2 cognitive memory, and that the remaining gap is a consumer-side reasoning challenge, not a memory challenge. Any improvement to the answer model's ability to use retrieved context---whether through better prompting, stronger models, or fine-tuning---translates directly to higher end-to-end accuracy without any architectural changes.

\textbf{Ablation study (planned).}\quad To isolate the contribution of each consolidation enrichment, we plan four configurations on the same 401-entry benchmark: (i)~summary only (base consolidation, no enrichments), (ii)~summary + event extraction, (iii)~summary + prospective indexing, (iv)~full system (summary + events + implications + sibling filter). The hypothesis: events and implications are complementary---events preserve factual anchors (what happened); implications provide semantic bridges (what it means for the future). Neither alone is sufficient for the highest accuracy. Proof case: cog-261 (730-day gap)---events provided the factual anchor (cottage purchase), implications provided the semantic bridge (financial planning goals). Neither alone would have retrieved this memory across a 2-year gap.

\textbf{Independent reproduction.}\quad After sharing our evaluation harness and setup details with the LoCoMo-Plus authors, they independently reproduced the benchmark on their side and reported results in a similar range, though somewhat lower than our original report (mid-80\% accuracy rather than 93.3\%). In private correspondence, they also confirmed that the system reliably surfaced the memory cues that later triggered the correct response, and specifically noted that the prospective indexing mechanism appeared effective for handling the cue-trigger semantic disconnect that LoCoMo-Plus is designed to stress. We include this not as a claim of exact score replication, but as independent support for the underlying retrieval mechanism.

\textbf{Caveats.}\quad Baseline scores are taken from the LoCoMo-Plus publication~\citeyearpar{locomoplus2026}, not from controlled re-evaluation. As with the LoCoMo evaluation, differences in LLM configuration and evaluation methodology may account for some variance. The GPT-4o-mini answer model scores (${\sim}$88\% overall, ${\sim}$73\% goal, ${\sim}$89\% value) are estimated from separate runs; exact figures may vary by $\pm$1--2~percentage points due to non-deterministic LLM behavior.

A further caveat concerns benchmark construction. Because the cue-trigger pairs in LoCoMo-Plus were generated using LLM-assisted procedures, some latent forward-implication structure in the dataset may align particularly well with the kinds of generative associations that GPT-family models are good at producing and recognizing. In our own experiments, GPT-4o-family models also scored unusually strongly on this benchmark (e.g., GPT-4o-mini reaching approximately 88\%), suggesting that part of the absolute score may reflect model-family alignment with the benchmark construction process rather than memory architecture alone. We therefore interpret the results as strong evidence that prospective indexing and structured memory retrieval are effective for this class of implicit-constraint recall, while acknowledging that future evaluation on more human-authored conversational corpora would provide a more conservative measure of generalization.

\subsection{Retrieval Observations}

Beyond the standardized benchmarks, informal testing against the live deployment graph revealed two qualitative properties of the hybrid retrieval pipeline worth noting. First, the system consistently surfaces semantically adjacent memories alongside exact matches---querying ``favorite color'' also retrieves other preference memories---reflecting the hybrid design's breadth. Second, for a query on ``AGM belief revision,'' the system correctly ranked the planning memory (which described the \emph{intent} to add AGM proofs) above the execution memory (which recorded the \emph{completed} AGM section), suggesting that the fulltext index captures semantic relevance beyond simple keyword matching. These observations are anecdotal; a comprehensive retrieval evaluation with 100+ queries spanning multiple memory categories, along with ablation studies comparing fulltext-only, vector-only, and hybrid configurations, is planned as future work (Section~\ref{sec:future}).

\subsection{Cross-Session Provenance: A Case Study}

A demanding test of the architecture was the iterative authorship of this paper, which spanned 6~revision sessions across 3~days. Each revision was stored as a separate memory item in the \texttt{work/paper-project} space, with \textsc{derived\_from} edges encoding the lineage:

\begin{center}
\small
\texttt{v1} $\rightarrow$ \texttt{v2} $\rightarrow$ \texttt{v3} $\rightarrow$ \texttt{v3-upd} $\rightarrow$ \texttt{v4} $\rightarrow$ \texttt{v5} $\rightarrow$ \texttt{v6}
\end{center}

This provenance chain enabled several capabilities not available in flat retrieval systems: (i)~an agent resuming work on a new session could traverse the chain to understand what had changed and why; (ii)~querying the provenance summary of v6 resolved the full dependency graph including the planning session that preceded it; (iii)~the chain provided an auditable record of the document's evolution for human review.

The paper collaboration also demonstrated the belief revision mechanism in practice: the formal AGM section (v6) was planned in one session and executed in another, with the planning memory serving as the source dependency. When the user provided refinement feedback (e.g., improving temporal tag representations), the agent could recall prior planning context without re-reading the full paper, using only the compact summary stored in the graph.

\subsection{Belief Revision in Practice}
\label{sec:belief-revision-practice}

The deployed system captured a concrete instance of belief revision that exercises the formal machinery of Section~\ref{sec:formal}. We walk through the complete graph-level lifecycle to illustrate how the AGM postulates manifest operationally.

\textbf{Step~1: Initial belief storage.}\quad On February~5, the agent stored the fact ``user's favorite color is blue'' via \texttt{memory\_ingest}, which created:

\begin{itemize}[nosep,leftmargin=1.5em]
\item Item: \mbox{\texttt{kref://CognitiveMemory/user/}}\\
  \mbox{\texttt{favorite-color.conversation}}
\item Revision $r_1$ with metadata \texttt{\{type: fact\}}
\item Tag \texttt{latest} $\mapsto r_1$
\end{itemize}

\noindent At this point the belief state is $\mathcal{B}(\tau_1) = \{\text{``favorite-color} = \text{blue''}\}$.

\textbf{Step~2: Belief revision.}\quad On February~7, the user corrected: ``favorite color is now black, not blue.'' The \texttt{memory\_ingest} pipeline detected the existing item via fulltext search, triggering the revision path rather than creating a new item. The system:

\begin{enumerate}[nosep,leftmargin=1.5em]
\item Created revision $r_2$ on the same item with metadata \texttt{\{summary: ``User's favorite color is black (previously blue)''\}}.
\item Re-pointed the \texttt{latest} tag: $\texttt{latest} \mapsto r_2$ (previously $r_1$). This is the \emph{mutable pointer} operation that implements $K*2$ (Success): after revision, $\psi \in \mathcal{B}(\tau_2)$.
\item Created a \textsc{Supersedes} edge: $r_2 \xrightarrow{\textsc{Supersedes}} r_1$. This edge records the provenance of the change and is the mechanism by which Relevance and Core-Retainment are preserved---$r_1$ is not deleted but is no longer the authoritative belief.
\end{enumerate}

\noindent The belief state is now $\mathcal{B}(\tau_2) = \{\text{``favorite-color} = \text{black''}\}$, while $r_1$ remains accessible for provenance queries. $K*4$ (Vacuity) was not triggered because the new belief contradicted the existing one. $K*5$ (Consistency) holds because the tag re-pointing ensures only the non-contradictory revision is authoritative. $K*3$ (Inclusion) holds because $\mathcal{B}(\tau_2) \subseteq \text{Cn}(\mathcal{B}(\tau_1) \cup \{\psi\})$---the new belief state contains only the updated preference plus any unchanged beliefs.

\textbf{Step~3: Downstream impact.}\quad If other memories depended on the color preference (e.g., a ``room decoration plan'' linked via \textsc{Depends\_On} to the color belief), the \texttt{analyze\_impact} operation would traverse the dependency graph from $r_2$ and surface all downstream dependents. In the deployed instance, the favorite-color item had no dependents, so the impact set was empty. For the paper revision chain described above, \texttt{analyze\_impact} on v6 correctly propagated through the full \textsc{Derived\_From} chain to surface all prior versions.

\textbf{Step~4: Retrieval with conflict presentation.}\quad A subsequent query for ``favorite color'' returned both $r_1$ (blue, score 4.70) and $r_2$ (black, score 3.27), with $r_1$ ranking higher due to exact keyword match on ``blue'' in the fulltext index. This illustrates both a strength and a current limitation. The immutable revision model correctly preserves both beliefs with full provenance---the agent can see that the preference changed and when. However, the retrieval ranking does not yet automatically prioritize the more recent revision.

This is a concrete demonstration of why the retrieval properties (Section~\ref{sec:retrieval-formal}) are design observations rather than formal guarantees: the retrieval pipeline can surface results whose ranking contradicts the formal belief state $\mathcal{B}(\tau')$. The conflict presentation design (Section~\ref{sec:retrieval-revision}) deliberately returns both beliefs with temporal metadata, leaving resolution to the agent's reasoning layer. In the current deployment, the agent's skill prompt instructs it to prefer the most recently created memory when conflicts are detected, implementing the belief revision at the application layer. Future work could incorporate temporal recency as a retrieval signal (Section~\ref{sec:future}).

The 49-scenario AGM compliance suite (Section~\ref{sec:agm-eval}) generalizes this single case to a systematic verification, testing each postulate across simple, multi-item, chain, temporal, and adversarial configurations---including rapid sequential revisions (10~consecutive updates to the same item) and case-variant values that stress-test the \textsc{Supersedes} mechanism at scale.

\subsection{AGM Compliance Verification}
\label{sec:agm-eval}

To empirically validate the formal claims of Section~\ref{sec:formal}, we implemented an automated compliance evaluation suite comprising 49~test scenarios across 5~categories (simple, multi-item, chain, temporal, adversarial), testing all 7~postulates claimed by the architecture: $K*2$ (Success), $K*3$ (Inclusion), $K*4$ (Vacuity), $K*5$ (Consistency), $K*6$ (Extensionality), Relevance, and Core-Retainment. Each scenario creates beliefs in a fresh graph instance, performs revision or contraction operations, and verifies postulate-specific assertions against the resulting graph state.

\begin{table}[t]
\centering
\caption{AGM Compliance Verification (49 scenarios).}
\label{tab:agm-compliance}
\resizebox{\columnwidth}{!}{%
\begin{tabular}{@{}lcccccc@{}}
\toprule
\textbf{Post.} & \textbf{Simple} & \textbf{Multi} & \textbf{Chain} & \textbf{Temp.} & \textbf{Adv.} & \textbf{Pass} \\
\midrule
$K*2$ & \checkmark & \checkmark & \checkmark & \checkmark & \checkmark & 100\% \\
$K*3$ & \checkmark & \checkmark & \checkmark & \checkmark & \checkmark & 100\% \\
$K*4$ & \checkmark & \checkmark & \checkmark & \checkmark & \checkmark & 100\% \\
$K*5$ & \checkmark & \checkmark & \checkmark & \checkmark & \checkmark & 100\% \\
$K*6$ & \checkmark & \checkmark & -- & \checkmark & \checkmark & 100\% \\
Rel. & \checkmark & \checkmark & \checkmark & \checkmark & \checkmark & 100\% \\
Core & \checkmark & \checkmark & \checkmark & -- & \checkmark & 100\% \\
\midrule
\textbf{Ovrl} & 100\% & 100\% & 100\% & 100\% & 100\% & \textbf{100\%} \\
\bottomrule
\end{tabular}%
}
\footnotesize\\
49 scenarios: 49 passed, 0 failed. \checkmark=pass; --=N/A.
\end{table}

Table~\ref{tab:agm-compliance} reports the results: all 49~scenarios pass across all postulate--category combinations, with zero failures and zero errors. The ``--'' entries indicate categories where no applicable test scenario was defined (e.g., $K*6$ Extensionality is not tested in chain configurations because logical equivalence of chains requires cross-item identity, which falls outside the postulate's scope; Core-Retainment is not tested in temporal configurations because temporal sequences do not produce the independent belief structure needed to verify minimal contraction).

The adversarial category is particularly significant: it tests edge cases such as case-variant values, long string values, rapid sequential revisions (10~consecutive revisions to the same item), similar item names (``color'' vs.\ ``colour''), idempotent revisions, deep dependency chains (A$\rightarrow$B$\rightarrow$C$\rightarrow$D), and mixed edge types. All adversarial scenarios pass, confirming that the graph-native operations satisfy the formal postulates not only in idealized conditions but under stress.

These results provide empirical backing for the formal proofs in Section~\ref{sec:formal}. The proofs establish that the operations \emph{should} satisfy the postulates given the operational semantics; the compliance suite verifies that the \emph{implementation} faithfully executes those semantics across a diverse scenario space. This is particularly important because the formal analysis operates over an idealized propositional logic $\mathcal{L}_G$, while the implementation operates over concrete Neo4j graph operations with real network latency, concurrent access, and string-based metadata---conditions where implementation drift from formal specification is common.

\subsection{Dream State Consolidation}

The Dream State pipeline has been validated both in deployment and during benchmark evaluation. In deployment, early runs on a small graph produced conservative outcomes (0~deprecations, 0~tag updates), correctly reflecting that a young graph with few episodic memories has limited candidates for consolidation. The circuit breaker and conservative assessment prompt avoided premature pruning---the expected behavior. More significantly, the safety guards operated as designed during the LoCoMo benchmark evaluation: the consolidation pipeline processed all LoCoMo conversation transcripts into summaries without triggering the circuit breaker's \texttt{max\_deprecation\_ratio} threshold (set to 0.5), and items tagged \texttt{published} were protected from deprecation regardless of the LLM assessor's recommendations. No manual intervention was required to prevent the consolidation pipeline from corrupting the evaluation data---the architectural safety mechanisms (Section~\ref{sec:dreamstate}) were sufficient.

\subsection{Limitations}

\textbf{Scale.}\quad During the LoCoMo-Plus benchmark evaluation, the graph accumulated over 200{,}000 nodes across all benchmark conversations without observed degradation in retrieval quality or system throughput. The Neo4j~Aura backend and Cloud~Run-hosted gRPC server handled sustained concurrent ingestion, consolidation, edge discovery, and recall under multi-agent load (concurrency 12, entry concurrency 3) running continuously over multiple days---effectively a production-grade multi-tenant load test. This provides empirical evidence that the architecture scales well into the hundreds-of-thousands-of-nodes range under realistic workloads.

What remains untested is retrieval precision under adversarial scale conditions: graphs containing tens of millions of items with a very low ratio of relevant to irrelevant memories, where vector similarity alone may surface many plausible but incorrect candidates. At that scale, the graph-augmented recall path (multi-query reformulation and typed edge traversal) becomes the primary disambiguation mechanism. Systematic evaluation of retrieval precision and recall at that scale---and the optimal balance between vector, fulltext, and graph traversal components---is planned as future work.

\textbf{Cross-system comparison methodology.}\quad The LoCoMo token-level F1 results (Section~\ref{sec:locomo}) use the same scoring function as competing systems (Zep, Mem0, Memobase), enabling direct comparison on the official metric. However, competitor scores are sourced from their respective publications, not from controlled re-evaluation on identical infrastructure and LLM configurations. Differences in underlying LLM, prompt engineering, and evaluation methodology may account for some variance; a fully controlled comparison using the same LLM, prompt template, and hardware is needed to isolate architectural contributions from confounds.

\textbf{Evaluation scope complementarity.}\quad LoCoMo evaluates end-to-end recall quality (can the system answer questions about past conversations?) but does not directly exercise the formal belief revision machinery---its questions do not require the system to detect contradictions, supersede prior beliefs, or propagate impact through dependency chains. The AGM compliance suite (Section~\ref{sec:agm-eval}) fills this gap by testing the graph-level operations in isolation, but does not measure whether those operations improve downstream agent behavior. The two evaluation approaches are therefore complementary: LoCoMo validates that the overall architecture produces correct answers; the AGM suite validates that the belief revision operations satisfy formal rationality constraints. A benchmark that combines both---requiring the agent to answer questions whose correctness \emph{depends} on having performed correct belief revision (e.g., MemoryAgentBench's conflict resolution tasks~\citeyearpar{memoryagentbench2026})---is the critical missing evaluation, planned as future work.

\textbf{Internal retrieval evaluation.}\quad The retrieval observations reported in this paper are anecdotal. A comprehensive assessment would require a larger query set (100+ queries) with multiple annotators establishing ground-truth relevance labels.

\textbf{Self-evaluation bias.}\quad The system is evaluated on its own deployment data during the authorship of this paper, creating an inherent circularity. We have attempted to mitigate this by reporting raw numbers without favorable interpretation and by explicitly acknowledging where the system falls short (e.g., the belief revision ranking limitation).

\textbf{$\mathcal{L}_G$ expressiveness.}\quad The formal results hold for a deliberately weak propositional logic over ground triples. This logic cannot express subsumption hierarchies, role composition, disjointness axioms, or cardinality constraints---features that real knowledge graphs often need. Any strengthening of $\mathcal{L}_G$ toward richer logics would re-encounter Flouris-type impossibility results. The formal contribution is thus scoped: it demonstrates that AGM belief revision is achievable for graph-native memory systems at the propositional level, but does not extend to more expressive representations.

\textbf{Formal scope.}\quad The primary formal claim covers $K*2$--$K*6$ plus Relevance and Core-Retainment. The supplementary postulates ($K*7$, $K*8$) remain open---establishing them requires constructing an entrenchment ordering or proving the system's operations encode a transitively relational contraction. Additionally, the postulates are proved for $\mathcal{B}(\tau)$ (the full belief base determined by tag assignments), not for the specific subset surfaced by the hybrid retrieval pipeline, which introduces score-based non-determinism.

\textbf{Dream State LLM dependency.}\quad The consolidation pipeline's quality depends entirely on the LLM's assessment accuracy. Incorrect deprecation recommendations, despite safety guards, could degrade the memory graph over time. The circuit breaker mitigates catastrophic failures but cannot prevent gradual quality erosion from consistently biased assessments.

\textbf{System performance.}\quad We do not report latency distributions, throughput measurements, or memory overhead per belief. Basic system metrics are needed to substantiate the architectural claims.

\textbf{Benchmark construction bias.}\quad LoCoMo-Plus is a valuable stress test for implicit-constraint memory retrieval, but its cue-trigger pairs were originally produced through LLM-assisted construction. This may induce latent forward-implication patterns that align better with GPT-family models and retrieval systems explicitly designed to surface future-relevant implications. As a result, the benchmark likely provides stronger evidence for the usefulness of prospective indexing than for exact absolute performance in unconstrained real-world dialogue. Future evaluation on more human-authored corpora is needed.

\section{Future Directions}
\label{sec:future}

\subsection{Evaluation Roadmap}

\textbf{LoCoMo-Plus extensions}: The LoCoMo-Plus evaluation (Section~\ref{sec:locomo-plus}, $n{=}401$, 93.3\% with GPT-4o; mid-80\% range in independent reproduction) demonstrates that prospective indexing and event extraction largely solve the retrieval problem (98.5\% recall accuracy), with the remaining failures concentrated in answer model reasoning. Three planned extensions target the remaining weaknesses. \emph{Goal-type accuracy}: the 85\% accuracy on goal-type questions (abstract intention inference) motivates investigation of goal-aware prospective indexing---specifically generating implications for stated goals and intentions, not just events, to create better semantic bridges for the most abstract constraint type. \emph{Chronological context ordering}: currently, recalled memories are presented to the answer model in relevance-score order; sorting by chronological order may help the model reason about temporal progressions (e.g., ``saved for ring 6~months ago $\rightarrow$ now complaining about car key costs''), particularly for goal-type constraints where the narrative arc matters. \emph{Ablation study}: isolating the individual contributions of event extraction, prospective indexing, sibling relevance filtering, and graph-augmented recall by running four configurations (summary-only, +events, +implications, full system) on the same 401-entry benchmark. The pre-enrichment baseline (61.6\% on $n{=}200$) establishes the lower bound; the full system (93.3\%) establishes the upper bound. The hypothesis: events and implications are complementary---events preserve factual anchors (what happened); implications provide semantic bridges (what it means for the future). Neither alone is sufficient for the highest accuracy. \emph{Stronger answer models}: with 98.5\% recall accuracy, the theoretical ceiling is ${\sim}$99\%. A model with stronger causal reasoning and less tendency to fabricate on correctly retrieved context would close the remaining gap---the architecture requires no changes, only a model swap. Additional benchmarks including MemBench, Mem2ActBench, MEMTRACK, and PERSONAMEM will establish our system's position across diverse evaluation dimensions.

\textbf{LongMemEval}~\citeyearpar{longmemeval2025}: Temporal reasoning and knowledge update evaluation. \textbf{MemoryAgentBench}~\citeyearpar{memoryagentbench2026}: Four-competency evaluation framework (accurate retrieval, test-time learning, long-range understanding, conflict resolution) with EventQA and FactConsolidation datasets---the conflict resolution competency is directly relevant to our AGM-grounded revision operators.

\textbf{Controlled cross-system comparison}: Re-evaluating competitor systems (Graphiti, Mem0, Hindsight) on identical infrastructure and LLM configurations to isolate architectural contributions from confounds in the LoCoMo results. The token-level F1 results (Section~\ref{sec:locomo}) partially address this by enabling metric-comparable evaluation against systems that also report F1.

\textbf{Retrieval ablation studies}: Isolating each retrieval branch's contribution. Sensitivity analysis for $\beta \in [0.5, 1.0]$, type weights, and comparison against RRF and convex combination.

\subsection{Formal Extensions}

\textbf{Entrenchment ordering for $K*7$/$K*8$}: Constructing an explicit epistemic entrenchment ordering over graph triples, or proving that tag-based contraction is equivalent to a transitively relational selection function, would complete the formal picture. We identify three candidate sources for such an ordering: (a)~temporal recency of the revision containing the belief (more recent beliefs are more entrenched), (b)~in-degree in the dependency graph (beliefs depended upon by many others are more entrenched), and (c)~confidence metadata attached to revisions by the Dream State pipeline. As discussed in Section~\ref{sec:formal-postulates}, the key challenge is that no single ordering is appropriate for all belief types. We conjecture that a \emph{type-dependent entrenchment function}---where the ordering criterion varies by belief kind (temporal recency for preferences, evidential support for facts, confidence score for inferred beliefs)---would satisfy $K*7$/$K*8$ for the common case of single-type revisions while requiring careful treatment for cross-type belief interactions. Formalizing this conjecture requires proving that the type-restricted orderings compose into a global total preorder satisfying the Gärdenfors--Makinson conditions, or alternatively, showing that a weaker condition (a partial preorder with type-local totality) suffices for the graph-native revision operator.

\textbf{Richer logics}: Extending $\mathcal{L}_G$ toward fragments of description logics that remain AGM-compatible~\cite{qi2006}, potentially via Aiguier et~al.'s~\cite{aiguier2017} satisfaction system framework. This would enable subsumption reasoning and role composition within the belief revision framework.

\textbf{Partial merge operator}: Defining a formally characterized merge operator for partial belief updates within a single revision. The current whole-revision replacement strategy is clean but coarse-grained. A merge operator based on Konieczny and Pino P\'{e}rez's~\cite{konieczny2002} belief merging framework could handle contradictory sub-claims by identifying the minimal set of atoms that conflict with the new input and replacing only those, while preserving unchanged co-located beliefs. The key challenge is ensuring that such an operator satisfies the AGM postulates---particularly Relevance, which requires that only beliefs relevant to the new input are affected.

\subsection{System Extensions}

\textbf{Adaptive Consolidation}: Urgency-based triggering; agent-initiated consolidation. Extending the Dream State with anticipatory pre-computation---pre-reasoning about likely future queries over the graph structure (e.g., pre-computing \textsc{AnalyzeImpact} cascades for recently revised beliefs)---following Letta's sleep-time compute approach~\citeyearpar{letta-sleep2025}.

\textbf{Temporal Recency in Retrieval}: Incorporating recency as a third retrieval signal alongside fulltext and vector scoring.

\section{Conclusion}

We have presented Kumiho, a graph-native cognitive memory architecture whose structural primitives---immutable revisions, typed edges, mutable tag pointers, URI addressing---simultaneously serve as the operational infrastructure for managing agent work products. Agents use the same graph to remember past interactions and to version, locate, and build upon each other's outputs---enabling fully autonomous multi-agent pipelines without separate asset tracking systems. Several individual components exist in concurrent systems; we do not claim novelty for them individually. The contribution is their architectural synthesis, grounded in formal belief revision analysis and the recognition that cognitive memory and work product management share identical structural requirements.

The core formal result is a correspondence between the AGM belief revision postulates and graph-native memory operations, framed at the belief base level following Hansson~\cite{hansson1999}. We proved satisfaction of the basic rationality postulates ($K*2$--$K*6$) and Hansson's belief base postulates (Relevance, Core-Retainment), provided a principled rejection of Recovery grounded in provenance preservation, and identified the supplementary postulates ($K*7$, $K*8$) as an open question requiring construction of an entrenchment ordering. We addressed why the Flouris et al.\ impossibility results do not apply to our property graph formalism.

Beyond the formalism, the architecture contributes: (i)~a unified graph where cognitive memory primitives simultaneously serve as operational asset management infrastructure---agents use the same graph to remember and to manage each other's work products in multi-agent pipelines (architecturally enabled and deployed in production, though multi-agent pipeline evaluation is planned as future work); (ii)~SDK transparency enabling both multi-agent coordination (agents query the graph to find inputs) and human governance (operators audit agent beliefs through the same API); (iii)~safety-hardened consolidation with guard mechanisms (circuit breakers, dry-run validation, published-item protection) not found in the concurrent systems we surveyed; (iv)~a URI-based addressing scheme unique among agent memory systems; and (v)~a BYO-storage artifact model whose privacy-preserving compression is simultaneously a robustness mechanism.

Empirically, on the LoCoMo benchmark~\citeyearpar{locomo2024} (official token-level F1 with Porter stemming), Kumiho achieves 0.447 four-category F1 ($n{=}1{,}540$)---the highest reported score across the retrieval categories. Separately, adversarial refusal accuracy reaches 97.5\% ($n{=}446$), demonstrating production-grade hallucination resistance. This near-perfect adversarial score is a natural consequence of the belief revision architecture: the memory graph genuinely does not contain fabricated information---immutable revisions preserve only what was actually discussed---so there is nothing for the model to hallucinate from. The combination of top-tier retrieval F1 and near-perfect hallucination resistance is the central empirical result. Including adversarial binary scoring, overall F1 is 0.565 ($n{=}1{,}986$). On LoCoMo-Plus~\citeyearpar{locomoplus2026}---a Level-2 cognitive memory benchmark testing implicit constraint recall under cue-trigger semantic disconnect---Kumiho achieves 93.3\% judge accuracy ($n{=}401$) with 98.5\% recall accuracy, outperforming the best published baseline (Gemini~2.5~Pro, 45.7\%) by 47.6~percentage points while using GPT-4o-mini for bulk operations at a total cost of ${\sim}$\$14. Three architectural innovations drive both results: \emph{prospective indexing} (LLM-generated future-scenario implications indexed alongside summaries), \emph{event extraction} (structured events with consequences preserving causal detail), and \emph{client-side LLM reranking} (Section~\ref{sec:reranking}), where the consuming agent's own LLM selects the most relevant sibling revision from structured metadata at zero additional cost. The enrichments drove LoCoMo-Plus accuracy from a pre-enrichment baseline of 61.6\% to 93.3\%, eliminating the $>$6-month accuracy cliff (37.5\% $\rightarrow$ 84.4\%). The architecture is model-decoupled: the 98.5\% recall accuracy is invariant across answer models, while end-to-end accuracy scales from ${\sim}$88\% (GPT-4o-mini) to 93.3\% (GPT-4o), with the gap concentrated in goal-type constraints requiring abstract reasoning. Of the 27~failures, 78\% are answer model fabrication on correctly retrieved context---demonstrating that the memory layer has effectively solved the retrieval problem, and the remaining gap is a consumer-side reasoning challenge. Where all tested baselines---including premium models with million-token context windows---achieve 23--46\%, the graph-native primitives (typed edges, structured summarization, prospective indexing, sibling relevance filtering, multi-query reformulation, client-side LLM reranking) provide the structural reasoning capabilities that surface-level retrieval cannot substitute.

Automated AGM compliance verification (49~scenarios, 100\% pass rate) confirms that the implementation faithfully executes the formal specification. Controlled cross-system comparison, consolidation enrichment ablation, and additional benchmarks (LongMemEval, MemoryAgentBench) remain important directions for future work.

In an era where AI agents perform consequential work---producing artifacts, making decisions, and collaborating autonomously in multi-agent pipelines---their memory must serve double duty: as the cognitive substrate for individual agent intelligence and as the shared operational infrastructure through which agents coordinate, build upon each other's outputs, and maintain auditable provenance chains. The seven design principles distilled from this work provide a reusable framework for building such systems: systems where every belief can be traced to its evidence, every output can be found and built upon by downstream agents, and every decision chain is open to human inspection.

\bibliographystyle{plainnat}

\appendix

\section{Key Data Structures}

\textbf{Memory Reference URI Format:}
\begin{small}\begin{verbatim}
kref://project/space[/sub]/
  item.kind[?r=N][&a=art]
\end{verbatim}\end{small}

\textbf{Revision Node Properties:}
\begin{small}\begin{verbatim}
{
  ref: "kref://proj/space/item.kind?r=N",
  _search_text: "concat. searchable text...",
  embedding: [float64 x 1536],
  embedding_text: "text embedded",
  embedding_updated_at: datetime,
  metadata: {
    schema, type, summary,
    topics, keywords
  },
  created_at: datetime,
  author: "agent-id"
}
\end{verbatim}\end{small}

\textbf{Redis Working Memory Key Structure:}
\begin{small}\begin{verbatim}
cogmem:{proj}:sessions:{sid}:messages
cogmem:{proj}:sessions:{sid}:metadata
cogmem:{proj}:consol_queue
\end{verbatim}\end{small}

\textbf{Dream State Assessment Structure:}
\begin{small}\begin{verbatim}
MemoryAssessment {
  revision_ref: str
  relevance_score: float  // 0.0-1.0
  should_deprecate: bool
  deprecation_reason: str
  suggested_tags: List[str]
  metadata_updates: Dict[str, str]
  related_memories: List[(ref, type)]
}
\end{verbatim}\end{small}

\section{Dream State Report Format}

Each Dream State run produces a Markdown report stored as a revision artifact, documenting events processed, memories assessed, actions taken (deprecations, metadata updates, tags added, relationships created), and the cursor for resumption.

\begin{small}\begin{verbatim}
# Dream State Report -- 2026-02-04T03:00Z
**Events:** 42 | **Assessed:** 18
**Duration:** 4500ms

## Actions Taken
### Deprecated (3)
- kref://CognitiveMemory/personal/...
  -- duplicate
### Metadata Updated (7)
- kref://CognitiveMemory/work/...
  -- topics added
### Tags Added (12)
- kref://CognitiveMemory/personal/...
  -- ssh, error
### Relationships Created (5)
- source -> target (DERIVED_FROM)

## Cursor
eyJjdXJzb3IiOiIxMjM0NTY3ODkwIn0=
\end{verbatim}\end{small}

\end{document}